\documentclass{article} 
\usepackage{iclr2026_conference,times}

\usepackage{amsmath,amsfonts,bm}

\def\eqref#1{equation~\ref{#1}}

\def\1{\bm{1}}

\DeclareMathAlphabet{\mathsfit}{\encodingdefault}{\sfdefault}{m}{sl}
\SetMathAlphabet{\mathsfit}{bold}{\encodingdefault}{\sfdefault}{bx}{n}

\usepackage{amssymb}
\usepackage{hyperref}
\usepackage{url}
\usepackage{algorithm}
\usepackage{algorithmic}
\usepackage{graphicx}
\usepackage{caption}
\usepackage{booktabs}
\newtheorem{definition}{Definition}
\newtheorem{theorem}{Theorem}

\newtheorem{proposition}{Proposition}
\newtheorem{corollary}{Corollary}
\newtheorem{assumption}{Assumption}
\newtheorem{lemma}{Lemma}
\newtheorem{proof}{Proof}

\title{Robust Adversarial Policy Optimization \\ Under Dynamics Uncertainty}

\author{Mintae Kim \& Koushil Sreenath \\
\textit{Hybrid Robotics}, UC Berkeley\\
Berkeley, CA 94720, USA \\
\texttt{\{mintae.kim, koushils\}@berkeley.edu}
}

\iclrfinalcopy 
\begin{document}

\maketitle

\begin{abstract}
Reinforcement learning (RL) policies often fail under dynamics that differ from training, a gap not fully addressed by domain randomization or existing adversarial RL methods. Distributionally robust RL provides a formal remedy but still relies on surrogate adversaries to approximate intractable primal problems, leaving blind spots that potentially cause instability and over-conservatism.
We propose a dual formulation that directly exposes the robustness–performance trade-off. At the trajectory level, a temperature parameter from the dual problem is approximated with an adversarial network, yielding efficient and stable worst-case rollouts within a divergence bound. At the model level, we employ Boltzmann reweighting over dynamics ensembles, focusing on more adverse environments to the current policy rather than uniform sampling. The two components act independently and complement each other: trajectory-level steering ensures robust rollouts, while model-level sampling provides policy-sensitive coverage of adverse dynamics.
The resulting framework, robust adversarial policy optimization (RAPO) outperforms robust RL baselines, improving resilience to uncertainty and generalization to out-of-distribution dynamics while maintaining dual tractability.
\end{abstract}


\section{Introduction}
\label{sec:intro}

Reinforcement learning (RL) has made major progress in sequential decision-making, yet deployment in real domains is limited by dynamics mismatch \cite{schulman2017ppo, haarnoja2018sac, fujimoto2018td3}. Policies trained in simulation often fail when transferred to real systems due to aleatoric and epistemic uncertainty, where small deviations can accumulate into catastrophic failures \cite{morimoto2005robust, xu2010distributionally}. This motivates robust RL: agents that sustain reliable performance under uncertainty \cite{pinto2017robust, rajeswaran2016epopt}. While much of RL research prioritizes maximizing nominal returns, deployment inevitably faces unseen environments, making robustness essential. In safety-critical settings such as robotics, driving, and finance, brittle policies are unacceptable \cite{li2025reinforcement, radosavovic2024real, liu2020finrl, sallab2017deep, gupta2025estimation}. Robustness also improves generalization, enabling adaptation to perturbations without retraining \cite{smith2022legged, ball2023efficient, kim2026wombet}.
Designing robust RL policies is difficult \cite{tessler2019action, abdullah2019wasserstein}. Naïve methods often yield over-conservative policies or unstable training. The challenge lies in dynamics uncertainty: optimizing only for the nominal model gives fragile policies, while optimizing for all possible models is overly pessimistic \cite{tobin2017domain}. The central question is:  
\emph{How can we design RL algorithms robust to dynamics uncertainty, without collapsing into conservatism or instability?}

Our approach begins from the robust MDP (RMDP) formulation with dynamics uncertain within a KL neighborhood of a nominal model \cite{wang2021online, ho2022robust}. The dual problem converts this infinite-dimensional problem into a tractable scalar optimization, yielding closed-form adversarial distributions, entropic variance control, and contractive operators. This corresponds to evaluating returns under an exponentially tilted trajectory law with inverse temperature $\eta$.  
Direct sampling from this tilted law is infeasible in simulators that only roll out the nominal kernel \cite{pinto2017robust, zhou2023natural}. We therefore introduce an adversarial network (AdvNet) that predicts trajectory-level dual parameters. By steering rollouts toward low-return regions consistent with the KL budget, AdvNet effectively reweights trajectories to emphasize adverse outcomes, a practical realization of exponential tilting that exposes local vulnerabilities the policy might ignore.
AdvNet alone cannot address global model misspecification. We therefore apply Boltzmann reweighting over an ensemble of environments with varied physical parameters, emphasizing models where the policy underperforms while remaining close to a prior. This provides structured coverage of adverse dynamics. Together, AdvNet and Boltzmann reweighting complement one another: the former enforces robustness at the \textit{trajectory-level}, the latter at the \textit{model-level}.  
Prior approaches capture only fragments of this picture. Domain randomization (model-level) spreads training across perturbations but treats all equally, producing inefficient or cautious policies \cite{tobin2017domain}. Pure adversarial perturbations emphasize worst cases but lack distributional grounding and stability \cite{pinto2017robust, rigter2022rambo, vinitsky2020robust}. Ensemble methods capture variability but without adversarial reweighting underestimate risk \cite{rajeswaran2016epopt}.  

Thus, our work introduces \textbf{robust adversarial policy optimization (RAPO)}, a practical realization of distributionally robust RL. RAPO combines trajectory-level exponential tilting via AdvNet with model-level Boltzmann reweighting, forming a two-layer structure that guards against local fragility and global misspecification. Finally, RAPO is practical and scalable: it amortizes trajectory tilting and ensemble biasing within a PPO-compatible loop, yielding policies that preserve in-distribution (ID) performance while substantially improving robustness to dynamics uncertainty and out-of-distribution (OOD) perturbations.


\section{Related Work}
\label{sec:related}

Robust MDPs model dynamics uncertainty by assuming the transition kernel lies in an ambiguity set, with policies optimized for worst-case return \cite{morimoto2005robust, xu2010distributionally, mannor2016robust}. The power of this approach depends on how the set is defined. Early norm- or interval-bounded sets poorly captured mismatch, while distributional neighborhoods such as Wasserstein or $f$-divergence balls better align with statistical uncertainty and connect to distributionally robust optimization, yielding stronger guarantees \cite{xu2010distributionally, abdullah2019wasserstein}. In practice, exact evaluation requires costly nested minimax problems, creating computational and sample inefficiency \cite{wang2021online, wang2022policy}. Many works thus approximate the objective, trading nominal performance for conservative guarantees and exposing the core tension of robust RL: rigor versus scalability \cite{eysenbach2021maximum, blanchet2023double}.

An alternative is adversarial RL, formulating problem as a two-player game where an auxiliary agent perturbs states, actions, or dynamics to minimize return \cite{pinto2017robust, vinitsky2020robust}. This zero-sum view parallels adversarial training and $H_\infty$ control \cite{morimoto2005robust}. Such adversaries act as disturbance generators or hard-example miners, but often yield instability, oscillatory dynamics, and narrow local robustness without distributional guarantees \cite{tessler2019action, eysenbach2021maximum}. Other variants perturb environment parameters, observations, or include explicit opponents in multi-agent settings \cite{rajeswaran2016epopt, mankowitz2019robust, zhang2020robust, zhang2021robust, oikarinen2021robust}.

Domain randomization instead samples environment parameters from broad distributions, enabling sim-to-real transfer but often producing inefficient or conservative policies \cite{tobin2017domain, peng2018sim, li2025reinforcement, cai2025learning, kim2026finite}. Complementary research directions include verification methods with certified robustness to bounded noise \cite{lutjens2020certified}, adversarial input perturbations that enforce Lipschitz regularity \cite{zhang2021robust}, and multi-agent opponents \cite{gleave2019adversarial, vinitsky2020robust}. These approaches extend robustness but target input or agent-level perturbations, remaining orthogonal to dynamics-level uncertainty.


\section{Preliminaries}
\label{sec:prelim}

\textbf{Notations:}
For any finite set $\mathcal{X}$, we denote its cardinality by $|\mathcal{X}|$. The set $[n]$ represents $\{1,\dots,n\}$. For a set $\mathcal{X}$, we write $\Delta_\mathcal{X}$ for the set of probability distributions over $\mathcal{X}$ (when $\mathcal{X}$ is finite, it coincides with the probability simplex), and $\mathrm{Unif}(\mathcal{X})$ for the uniform distribution on $\mathcal{X}$.

\textbf{Markov decision process:}
A Markov decision process (MDP) is a tuple $(\mathcal{S}, \mathcal{A}, p, r, \mu_0, \gamma)$, where $\mathcal{S}$ is the state space, $\mathcal{A}$ the action space, $p:\mathcal{S}\times\mathcal{A}\to\Delta_\mathcal{S}$ the transition kernel, $r:\mathcal{S}\times\mathcal{A}\to[0,1]$ a bounded reward function, $\mu_0\in\Delta_\mathcal{S}$ the initial state distribution, and $\gamma\in(0,1)$ the discount factor. A stochastic policy $\pi:\mathcal{S}\to\Delta_\mathcal{A}$ induces the expected discounted return $J(\pi,p)=\mathbb{E}_{s_0\sim\mu_0,\,a_t\sim\pi,\,s_{t+1}\sim p}\big[\sum_{t=0}^\infty \gamma^t r(s_t,a_t)\big]$. The state value, state–action value, and advantage are $V^\pi(s)=\mathbb{E}_{a\sim\pi(\cdot|s),\,s'\sim p(\cdot|s,a)}[r(s,a)+\gamma V^\pi(s')]$, $Q^\pi(s,a)=r(s,a)+\gamma\,\mathbb{E}_{s'\sim p(\cdot|s,a)}V^\pi(s')$, and $A^\pi(s,a)=Q^\pi(s,a)-V^\pi(s)$, respectively. The Bellman operator w.r.t. $\pi$ is $\mathcal{T}^\pi V(s)=\mathbb{E}_{a\sim\pi(\cdot|s),\,s'\sim p(\cdot|s,a)}[r(s,a)+\gamma V(s')]$ \cite{sutton1998reinforcement, agarwal2019reinforcement}.

\textbf{Robust RL:}
In many applications, the ground-truth kernel $p^\star$ is unknown or subject to perturbations. To formalize robustness, we introduce a KL-ball ambiguity set around a nominal model $\hat p$, which represents the transition kernel of the training environment \cite{tamar2014scaling, roy2017reinforcement}:
\begin{equation}
\label{eq:kl-ball}
\mathcal{P}_\epsilon(s,a)\;=\;\Big\{\,p(\cdot|s,a)\in\Delta_\mathcal{S}\ :\ D_{\mathrm{KL}}\!\big(p(\cdot|s,a)\,\|\,\hat p(\cdot|s,a)\big)\ \le\ \epsilon\,\Big\},
\end{equation}
and $\mathcal{P}_\epsilon=\bigotimes_{s,a}\mathcal{P}_\epsilon(s,a)$ under $(s,a)$-rectangularity \cite{mannor2016robust}.

RMDPs then pose the following control problem: $\pi^\star \in \arg\max_{\pi}\ \inf_{p\in\mathcal{P}_\epsilon} J(\pi,p)$.
Rectangularity decouples uncertainty across state–action pairs and enables dynamic programming \cite{iyengar2005robust, nilim2005robust}. Given $\mathcal{P}_\epsilon$, the robust value of a policy $\pi$ is $V^\pi_{\mathcal{P}_\epsilon}(s)=\inf_{p\in\mathcal{P}_\epsilon}V_p^\pi(s)$, and the corresponding robust evaluation operator $\mathcal{T}_{\mathcal{P}_\epsilon}^\pi:\mathbb{R}^\mathcal{|S|}\to\mathbb{R}^\mathcal{|S|}$ is
\begin{equation}
\label{eq:rbo}
(\mathcal{T}_{\mathcal{P}_\epsilon}^\pi V)(s)=\mathbb{E}_{a\sim\pi(\cdot|s)}\Big[r(s,a)+\gamma\inf_{p\in\mathcal{P}_{\epsilon}}\mathbb{E}_{s'\sim p(\cdot|s,a)}V(s')\Big].
\end{equation}
The robust Bellman equation $V=\mathcal{T}_{\mathcal{P}_\epsilon}^\pi V$ has the unique solution $V=V^\pi_{\mathcal{P}_\epsilon}$ under discounting and rectangularity. Under standard conditions—$(s,a)$-rectangularity, $\gamma<1$, bounded rewards, and compact/convex uncertainty sets—both $\mathcal{T}_{\mathcal{P}_\epsilon}^\pi$ and the optimality operator $\mathcal{T}_{\mathcal{P}_\epsilon}$ are $\gamma$-contractions in $\|\cdot\|_\infty$, so $V^\pi_{\mathcal{P}_\epsilon}$ and $V^\star$ are their unique fixed points and the value iteration converges to $V^\star$. Moreover, there exists a stationary optimal policy $\pi^\star$ that achieves this robust value uniformly, i.e., $V_{\mathcal{P}_\epsilon}^{\pi^\star}(s) = \sup\{V_{\mathcal{P}_\epsilon}^\pi(s):\ \pi\ \text{history-dependent}\}$ for all $s\in\mathcal{S}$ \cite{iyengar2005robust, nilim2005robust}. Thus, it suffices to optimize over stationary policies. For any stationary $\pi$, there exists a stationary worst-case kernel $p^{\mathrm{wc}}_\pi\in\mathcal{P}_\epsilon$ such that $V_{\mathcal{P}_\epsilon}^\pi(s)=V_{p^{\mathrm{wc}}_\pi}^\pi(s)$ for all $s$. We define the robust state–action value and advantage as $Q_{\mathcal{P}_\epsilon}^\pi(s,a):=Q_{p^{\mathrm{wc}}_\pi}^\pi(s,a)$ and $A_{\mathcal{P}_\epsilon}^\pi(s,a):=A_{p^{\mathrm{wc}}_\pi}^\pi(s,a)$, omitting the subscript $\mathcal{P}_\epsilon$ when clear from context. Deterministic stationary optimality follows since the maximization is linear in the action distribution, so the optimum lies at an extreme point of the simplex.

\textbf{Occupancy and performance difference:}
For a stationary policy $\pi$, the discounted occupancy measure is defined as $d^\pi(s)=(1-\gamma)\sum_{t\ge0}\gamma^t\,\Pr_\pi(s_t=s)$, which specifies how frequently each state is visited under $\pi$ (in continuous state spaces, the definition extends by treating $d^\pi$ as a measure). The performance difference lemma (PDL) states that for any $\pi,\pi'$, the return difference satisfies $J(\pi')-J(\pi)=\tfrac{1}{1-\gamma}\,\mathbb{E}_{s\sim d^{\pi'},\,a\sim\pi'(\cdot|s)}[A^\pi(s,a)]$, i.e., improvement is the advantage of $\pi$ averaged under the new policy’s occupancy \cite{kakade2002approximately}.  
In the robust setting, let $p^{\mathrm{wc}}_\pi\in\mathcal{P}_\epsilon$ be a stationary worst-case kernel so that $V^\pi_{\mathcal{P}_\epsilon}=V^\pi_{p^{\mathrm{wc}}_\pi}$ (under rectangularity and contraction). Define the robust discounted occupancy of $\pi'$ by $d^{\pi'}_{\mathcal{P}_\epsilon}(s)=(1-\gamma)\sum_{t\ge0}\gamma^t\,\Pr_{(p^{\mathrm{wc}}_{\pi'},\pi')}(s_t=s)$. Write $Q^\pi_{\mathcal{P}_\epsilon}(s,a):=Q^\pi_{p^{\mathrm{wc}}_\pi}(s,a)$ and $A^\pi_{\mathcal{P}_\epsilon}(s,a):=A^\pi_{p^{\mathrm{wc}}_\pi}(s,a)$. The robust performance difference lemma (RPDL) then gives $J(\pi',p^{\mathrm{wc}}_{\pi'})-J(\pi,p^{\mathrm{wc}}_\pi)=\tfrac{1}{1-\gamma}\,\mathbb{E}_{s\sim d^{\pi'}_{\mathcal{P}_\epsilon},\,a\sim\pi'(\cdot|s)}[A^\pi_{\mathcal{P}_\epsilon}(s,a)]$, showing that robust improvement is exactly the expected robust advantage under the new policy’s robust occupancy. See App.~\ref{app:prelim-props} for proofs and further details.


\begin{algorithm}[t]
\caption{Robust Adversarial Policy Optimization}
\label{alg:rapo-main}
\textbf{Input:} horizon $T$, ensemble size $K$, dual budget $\epsilon$, ensemble budget $\kappa$, batch size $B$, $m$ samples\\
\textbf{Initialize:} actor $\theta^0$, critic $\phi^0$, AdvNet $\psi^0$, prior $w_0$, weights $w\leftarrow w_0$%
\begin{algorithmic}[1]
\FOR{$t=0,1,\dots,T-1$}
  \STATE Collect on-policy rollout $\{(s,a,r,d)\}_{1:B}$ under nominal env
  \STATE $(y,\psi^{t+1}) \leftarrow \textsc{AdvNet-Targets}(\phi^{t},\psi^{t},w,\epsilon,m)$ \hfill // Alg.~\ref{alg:advnet-main}
  \STATE Update actor--critic by PPO with targets $y$ (advantages via GAE)
  \STATE $w \leftarrow \textsc{Boltzmann-Reweighting}(w_0,\phi^{t+1},\kappa)$ \hfill // Alg.~\ref{alg:boltzmann-main}
\ENDFOR
\RETURN $\theta, \phi, \psi$
\end{algorithmic}
\end{algorithm}


\section{Robust Adversarial Policy Optimization (RAPO)}
\label{sec:rapo}

\subsection{Adversarial Network for Dual Temperature Estimation}

\paragraph{Dual representation and the $\eta$ trade-off.}
The robust formulation begins from the hard-constraint primal in \ref{eq:rbo} and, via convex duality, admits the equivalent dual form
\begin{equation}
\inf_{p \in \mathcal{P}_\epsilon}\ \mathbb E_p[V]
\;=\;\sup_{\eta\ge 0}\left\{-\tfrac{1}{\eta}\log \mathbb E_{\hat p}[e^{-\eta V}] - \tfrac{\epsilon}{\eta}\right\},
\label{eq:kl-dual-main}
\end{equation}
where $\hat p$ is the nominal kernel and $\eta$ is the dual temperature that exponentially tilts returns.  
The offset term $-\epsilon/\eta$ reflects the KL budget and ensures robustness guarantees.  
The parameter $\eta$ controls the trade-off: larger $\eta$ sharpens focus on adverse outcomes, but also drives divergence from the nominal prior.  
In practice, $\hat p$ corresponds to the training environment’s transition kernel. To capture parametric variability, we instantiate an ensemble of perturbed kernels around $\hat p$, and Boltzmann reweighting adjusts their mixture weights under a KL budget. This extends the nominal model into an ensemble-based prior while retaining the dual interpretation.
In particular, the dual form above computes the inner minimization in \ref{eq:rbo}. See App.~\ref{app:kl-dual} for the full derivation.

\paragraph{Why dual instead of primal?}
A natural question arises as to why the dual formulation is employed instead of directly optimizing the primal objective $\inf_{p:\,D_{\mathrm{KL}}(p\|\hat p)\le\epsilon} \mathbb{E}_{p}[V]$. 
Primal-based approaches, including adversarial training methods such as RARL \cite{pinto2017robust}, adversarial policies \cite{gleave2019adversarial}, or sampling-driven methods like RNAC \cite{zhou2023natural}, attempt to approximate robustness by explicitly searching over a family of perturbed environments or by introducing an adversarial agent.
In practice, this reduces to evaluating a finite set of candidate dynamics models. 
However, such a construction cannot guarantee coverage of the entire ambiguity set: the worst-case distribution may lie in between sampled models, leaving blind spots, or else lead to instability and over-conservatism due to reliance on extreme samples. 
By contrast, the dual form \ref{eq:kl-dual-main} collapses the infinite-dimensional optimization over distributions into a single scalar parameter $\eta$. 
This converts the robust objective into a tractable \emph{risk-sensitive expectation} under the nominal model, where the adversarial distribution is implicitly realized through exponential tilting rather than explicit sampling. 
Ensemble models in RAPO are only used to approximate the nominal distribution $\hat p$ and to provide epistemic variability, while robustness is guaranteed beyond the finite set of sampled environments.
Moreover, the scalar $\eta$ functions as a principled robustness--performance knob, enabling an efficient and interpretable trade-off that is absent in primal formulations.

\paragraph{AdvNet: amortized approximation of $\eta^\star$.}
Direct sampling from distribution $p_\eta(s' \mid s,a)=e^{-\eta V(s')}/\sum_j e^{-\eta V(s'_j)}$ is infeasible in deterministic simulators, and solving for $\eta^\star$ by root finding each step is computationally prohibitive.
To overcome this, we introduce an adversarial network (AdvNet) that amortizes the search for $\eta^\star$ by predicting it directly from rollouts (Alg.~\ref{alg:advnet-main}).
AdvNet is a feedforward network that takes as input the current state $s_t$ and optionally the action $a_t$, and outputs a scalar $\eta_t=\text{AdvNet}_\psi(s_t,a_t)\ge 0$, with a softplus activation ensuring nonnegativity. Further discussion on state-action dependency of $\eta$ is in App.~\ref{app:advnet-approx}.
Given $m$ samples $s'_i\sim \hat p(\cdot\mid s,a)$ with values $V(s'_i)$, we define 
$\widehat{Z}_\eta=\tfrac{1}{m}\sum_{i=1}^m e^{-\eta V(s'_i)}$ and 
$v_{\text{rob}}(V,\eta)=-\tfrac{1}{\eta}\log \widehat{Z}_\eta - \tfrac{\epsilon}{\eta}$.  
The robust Bellman target is $y_t=r_t+\gamma v_{\text{rob}}(V,\eta_t)$.  
To enforce the KL bound, we compute
$\widehat{D}_{\mathrm{KL}}(\eta)=\sum_{i=1}^m \hat p_i(\eta)\log(m\,\hat p_i(\eta))$ with 
$\hat p_i(\eta)=e^{-\eta V(s'_i)}/\sum_j e^{-\eta V(s'_j)}$, and penalize deviations from $\epsilon$.  
The loss is
$\mathcal{L}_{\text{AdvNet}}(\psi)= \mathbb{E}[\,y_t\,] + \lambda_{\text{KL}}\,\mathbb{E}[(\widehat{D}_{\mathrm{KL}}(\eta_t)-\epsilon)_+^2]$,
where $\lambda_{\text{KL}}$ is a penalty coefficient.


\begin{algorithm}[t]
\caption{AdvNet-Targets (Robust one-step targets)}
\label{alg:advnet-main}
\textbf{Input:} critic $\phi$, AdvNet $\psi$, mixture $w$, KL budget $\epsilon$, samples $m$
\begin{algorithmic}[1]
\STATE Sample $m$ next-states from ensemble using $w$; evaluate critic $V_\phi$ to get $V_n\in\mathbb{R}^{B\times m}$
\STATE Predict temperatures $\eta_t \leftarrow g_\psi(s,a\ \text{or}\ 0)$; project to budget $\eta^\star\leftarrow \text{proj}(V_n,\eta_t,\epsilon)$ (bisection)
\STATE Straight-through $\tilde\eta \leftarrow \eta^\star - \mathrm{stopgrad}(\eta_t) + \eta_t$
\STATE Robust dual expectation $v_{\text{rob}} \leftarrow -\frac{1}{\tilde\eta}\log\frac{1}{m}\sum_{j=1}^m e^{-\tilde\eta V_n^{(j)}} - \tfrac{\epsilon}{\tilde\eta}$
\STATE Targets $y \leftarrow r + \gamma(1-d)\,v_{\text{rob}}$
\STATE Update $\psi$ by a few Adam steps on $\mathcal{L}_{\text{AdvNet}}=\mathbb{E}[y]+\lambda_{\text{KL}}\mathbb{E}[(\widehat{D}_{\mathrm{KL}}(\tilde\eta)-\epsilon)_+^2]$
\STATE \textbf{return} $y, \psi$
\end{algorithmic}
\end{algorithm}


\paragraph{Stability and approximation guarantees.}
AdvNet does not generate explicit perturbations but predicts $\eta_t$ that approximates the optimal dual temperature $\eta^\star$.  
This amortizes the inner maximization, avoiding costly root finding and variance, while adapting $\eta$ to the local state–action context.  
Importantly, the robust Bellman operator remains a $\gamma$-contraction under the supremum norm regardless of whether $\eta$ is exact or approximate. 
If $\hat\eta$ is the AdvNet output with $|\hat\eta - \eta^\star|\le \delta$, then the induced value gap satisfies
$\bigl\| \mathcal{T}^{\hat\eta} V - \mathcal{T}^{\eta^\star} V \bigr\|_\infty \;\le\; C \,\delta^2$,
for a constant $C$ depending on the boundedness of returns. 
Thus, convergence to a unique fixed point is preserved even under imperfect predictions, and approximation errors in $\eta$ only enter quadratically into the robust value update. 
Boltzmann reweighting, introduced next, then leverages these $\eta_t$ values to emphasize globally harmful dynamics across the ensemble. 
See App.~\ref{app:advnet-approx} for the full derivation and proof.

\paragraph{Hard-minimum baseline.}
A naïve approximation would be to evaluate robustness by taking the hard minimum over sampled next-state values, i.e., $\min_i V(s'_i)$. 
However, this corresponds to the degenerate distribution $p=\delta_{s'_{\min}}$, which incurs $D_{\mathrm{KL}}(\delta_{s'_{\min}}\|\hat p)=\log(1/\hat p(s'_{\min}))$, equaling $\log m$ under a uniform empirical distribution with $m$ samples.  
Unless $\epsilon \ge \log m$, the KL constraint is violated; in continuous spaces, the KL even diverges.  
Thus, the hard minimum is infeasible under finite KL budgets, and the correct robust solution is the \emph{soft minimum} induced by exponential tilting.


\subsection{Boltzmann Reweighting of Environment Distribution}

\paragraph{Motivation.}
Naïve sampling would draw $m$ rollouts uniformly from $K$ ensemble environments. 
This provides an unbiased estimator of the nominal expectation but fails to emphasize rare yet harmful dynamics. 
Among the ensemble, there will inevitably exist configurations where the policy is particularly vulnerable (i.e., with low value estimates). 
Sampling from such adverse models is more informative than uniform sampling, since it exposes the policy to weaknesses that dominate the robust objective. 
Formally, solving $\inf_{p \in \mathcal{P}} \mathbb{E}_{p}[V]$ requires Monte Carlo evaluation of expectations under $p$, which the dual formulation rewrites as an expectation under the nominal kernel $\hat p$. 
Boltzmann reweighting can therefore be interpreted as tilting $\hat p$ away from uniformity and toward adversarial directions, while remaining regularized by a KL budget.
We specifically adopt a KL constraint since it yields closed-form Boltzmann weights and a tractable one-dimensional dual.

\paragraph{Global variability and robust objective.}
While AdvNet provides trajectory-level robustness by estimating dual temperatures for exponential tilting, robust evaluation also requires accounting for global variability across dynamics parameters. 
Directly sampling from a full probabilistic family is infeasible, so we approximate dynamics uncertainty through a finite ensemble $\{p_{\omega_k}\}_{k=1}^K$ with prior weights $\rho \in \Delta_{[K]}$. 
The goal is to emphasize harmful dynamics without discarding proximity to the prior, which motivates a Boltzmann reweighting scheme under a KL budget. 
We constrain adversarial weights $w \in \Delta_{[K]}$ by $\mathcal{W}_\rho(\kappa)=\{w\in\Delta_{[K]}:D_{\mathrm{KL}}(w\|\rho)\le\kappa\}$, and define the robust objective $J_{\mathrm{rob}}(\pi_\theta)=\min_{w\in\mathcal{W}_\rho(\kappa)}\ \mathbb{E}_{\omega\sim w}[\,J(\pi_\theta,p_\omega)\,]$, where $J(\pi_\theta, p_\omega)$ is the expected discounted return of $\pi_\theta$ in dynamics $p_\omega$.

\paragraph{Vulnerability scoring and Boltzmann solution.}
For each model $k$, we compute a vulnerability score $h_k=\mathbb{E}_{(s,a)\sim\mathcal{D}}[\,V^{\pi_\theta}_{p_{\omega_k}}(s')\,]$ with $s'\sim p_{\omega_k}(\cdot\mid s,a)$, where $\mathcal{D}$ is the empirical rollout distribution under the current policy. 
Alternatives such as advantage- or cost-based surrogates are also possible. 
The adversary then solves $\min_{w\in\Delta_{[K]}}\ \sum_{k=1}^K w(k)\,h_k$ subject to $D_{\mathrm{KL}}(w\|\rho)\le\kappa$, whose dual yields the Boltzmann form
\begin{equation}
w^\star_\beta(k)=\frac{\rho(k)\exp(-\beta h_k)}{\sum_{k=1}^K \rho(k)\exp(-\beta h_k)}, \qquad \beta \ge 0,
\label{eq:boltzmann-weight}
\end{equation}
with $\beta$ adjusted so $D_{\mathrm{KL}}(w^\star_\beta\|\rho)=\kappa$. 
As $\beta\to 0$, $w^\star_\beta\to\rho$; as $\beta\to\infty$, $w^\star_\beta$ concentrates on the most adverse models. 
Numerical solutions for $\beta$ use one-dimensional search exploiting the monotonicity of the KL term. 
See App.~\ref{app:boltzmann-dual} for the full derivation.

\paragraph{Role of ensembles and practical considerations.}
Unlike model-based planning methods that train probabilistic dynamics ensembles to capture both epistemic and aleatoric uncertainty, 
RAPO employs ensembles of deterministic simulators with varied dynamics parameters \cite{chua2018deep, yu2020mopo}. 
This construction does not aim to model intrinsic stochasticity but instead serves as a Monte Carlo device: it approximates the nominal kernel $\hat p$ by aggregating deterministic transitions from multiple parameter settings. 
As such, RAPO primarily reflects epistemic-style variability across plausible dynamics configurations, while aleatoric uncertainty remains outside our scope. 
Boltzmann reweighting is therefore a principled mechanism that interpolates smoothly between the prior and worst-case emphasis while avoiding collapse. 
In practice, we stabilize $h_k$ with moving averages and variance reduction, and optionally regularize rapid changes in $w$. 
Importantly, $\eta$ here governs model-level weighting and is distinct from the trajectory-level $\eta$ predicted by AdvNet. 
Together, AdvNet captures local fragilities within rollouts, while Boltzmann reweighting emphasizes globally adverse dynamics across the ensemble, yielding complementary robustness. 
Finally, although RAPO operates with a finite ensemble $\{p_{\omega_k}\}_{k=1}^K$, App.~\ref{app:finite-ensemble-proofs} establishes that as $K \to \infty$ the dual objective, optimal weights, and induced policies converge to those of the continuous family at the Monte Carlo rate $O(K^{-1/2})$, providing theoretical justification for the finite approximation.


\begin{algorithm}[t]
\caption{Boltzmann Reweighting}
\label{alg:boltzmann-main}
\textbf{Input:} prior $w_0\in\Delta_{[K]}$, critic $V_\phi$, KL budget $\kappa$
\begin{algorithmic}[1]
\STATE Subsample $(s,a)$; roll out all $K$ models to get $s'_k$; score $H_k \leftarrow \frac{1}{S}\sum V_\phi(s'_k)$
\STATE Define $w(\beta) \propto w_0 \odot \exp(-\beta H)$
\STATE Find $\beta^\star \ge 0$ by bisection s.t. $D_{\mathrm{KL}}(w(\beta^\star)\|w_0)\approx \kappa$
\STATE \textbf{return} $w \leftarrow w(\beta^\star)$
\end{algorithmic}
\end{algorithm}


\subsection{Unified RAPO Algorithm}

RAPO unifies trajectory-level robustness from AdvNet with model-level robustness from Boltzmann reweighting into a scalable min--max policy optimization framework. At each iteration, it solves
\begin{equation}
\max_{\theta}\ \min_{w, \psi}\ 
\mathbb{E}_{\omega\sim w}\Big[\ \mathbb{E}_{\tilde{\tau}\sim(\pi_\theta,p_\omega,\psi)}\Big[\,\sum_{t=0}^{\infty}\gamma^{t}\, r(\tilde s_t,\tilde a_t)\,\Big]\ \Big],
\label{eq:rapo-unified}
\end{equation}
where $\psi$ parameterizes the trajectory-level adversary (AdvNet) and $w$ is the model mixture constrained by a KL budget $\kappa$.  
The inner minimization couples trajectory-level exponential tilting with model-level Boltzmann reweighting, while the outer maximization updates the policy. Alg.~\ref{alg:rapo-main} instantiates this framework, with additional stability enhancements (e.g., budget scheduling, entropy regularization) described in App.~\ref{app:algo-app}.
The joint KL constraint naturally decomposes into two entropic risks: a model-level temperature $\beta$ governing ensemble reweighting under budget $\kappa$, and a trajectory-level temperature $\eta$ controlling return tilting under budget $\epsilon$. 
The total robustness budget satisfies $\kappa + \epsilon \le B$, yielding a factorized adversary with complementary roles for Boltzmann reweighting and AdvNet. 
This two–temperature view provides the theoretical foundation of RAPO. See App.~\ref{app:traj-model-kl} for the full derivation.


\subsection{Connection to the Robust Performance Difference Lemma}

RAPO builds directly on the RPDL perspective: robust policy improvement depends on expected robust advantages under the new policy’s robust occupancy. In practice, however, the worst-case kernel $p^{\mathrm{wc}}_{\pi'}$ that induces this occupancy is not directly accessible. RAPO therefore constructs a tractable surrogate distribution through a two-layer approximation:
(i) \textit{model-level Boltzmann reweighting}, which forms a KL-constrained mixture $\bar p_{w^\star_\beta} = \sum_k w^\star_{\beta, k}p_{\omega_k}$ (\ref{eq:boltzmann-weight}) over a finite ensemble of dynamics, thereby emphasizing adverse models while remaining close to a prior and
(ii) \textit{trajectory-level exponential tilting}, where AdvNet predicts $\eta(s,a)$ to bias next-state sampling under $\bar p_{w^\star_\beta}$ toward lower-value outcomes, yielding $p^{\pi'}_{\beta,\eta}(s'|s,a)=\frac{\bar p_{w^\star_\beta}(s'|s,a)\exp(-\eta(s,a)V^{\pi'}(s'))}{\int \bar p_{w^\star_\beta}(\tilde s|s,a)\exp(-\eta(s,a)V^{\pi'}(\tilde s))\,d\tilde s}$. The resulting surrogate occupancy $\tilde d^{\pi'}_{\beta,\eta}(s)=(1-\gamma)\sum_{t=0}^\infty \gamma^t\,\Pr_{(p^{\pi'}_{\beta,\eta},\,\pi')}(s_t=s)$ serves as a practical replacement for $d^{\pi'}_{\mathcal P_\epsilon}$.
Policy updates in RAPO can therefore be interpreted as optimizing robust advantages under $\tilde d^{\pi'}_{\beta,\eta}$, with model-level and trajectory-level robustness acting independently yet complementarily.
Importantly, the discrepancy between the true and surrogate occupancies admits a clean decomposition: $|\mathbb{E}_{d^{\pi'}_{\mathcal{P}_\epsilon}}[f]-\mathbb{E}_{\tilde d^{\pi'}_{\beta,\eta}}[f]|\lesssim c_1\sqrt{\kappa}+c_2\sqrt{\epsilon}+c_3 K^{-1/2}+c_4 N^{-1/2}+c_5\,\delta^2$, where $\|f\|_\infty < \infty$ and the terms correspond respectively to model-budget, trajectory-budget, finite-ensemble, finite-sample, and dual-temperature approximation errors (see App.~\ref{app:traj-model-kl} for derivation).
Setting $f(s,a)=A^\pi_{\mathcal P_\epsilon}(s,a)$ recovers the RPDL, so that RAPO’s surrogate occupancy provides a principled approximation for evaluating robust policy improvement.
Also, the trajectory KL budget $\epsilon$ serves as a principled robustness--performance knob. From App.~\ref{app:prelim-props}, the nominal--robust value gap satisfies $|V^{\pi}_{\hat p}-V^{\pi}_{\mathcal P_\epsilon}|=O(\sqrt{\epsilon})$, showing that enlarging $\epsilon$ strengthens protection against adversarial transitions while degrading nominal performance only at a controlled $\sqrt{\epsilon}$ rate.


\begin{figure}[t]
    \centering
    \includegraphics[width=1.0\linewidth]{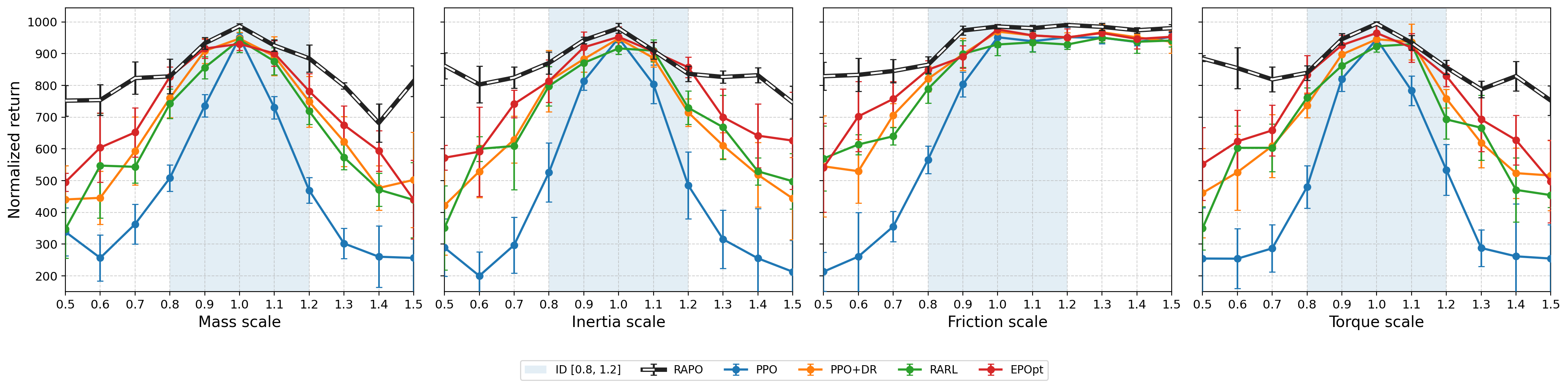}
    \vspace*{-5mm}
    \caption{\texttt{Walker2d} robustness curves across mass, inertia, friction, and torque scaling. RAPO matches PPO in-distribution while significantly outperforming all baselines under OOD regions.}
    \label{fig:walker-robustness}
    \vspace*{-2.5mm}
\end{figure}


\section{Experiments}
\label{sec:exp}

We evaluate whether RAPO improves robustness to dynamics variations while preserving ID performance comparable to standard baselines. Policies are trained under nominal dynamics but tested under perturbed parameters, so we report results in both ID (within training support) and OOD. Controlled sweeps measure robustness and performance degradation under parameter shifts.
\texttt{Walker2d} is our main testbed \cite{freeman2021brax}. Its locomotion is sensitive to dynamics changes, making it a representative benchmark. We vary four parameter families independently: mass, inertia, friction, and torque scaling, each swept from $0.5$ to $1.5$ in steps of $0.1$, with $1.0$ as nominal. The ID region is $[0.8,1.2]$, and values outside are OOD. Each setting is run with five seeds, reporting mean episode return with 95\% confidence intervals.  
We also evaluate on a quadrotor payload tracking task, demonstrating RAPO’s ability to handle coupled dynamics beyond standard locomotion benchmarks. Additional details and further experiments are in App.~\ref{app:exp-app}.


\subsection{Robustness to Dynamics Uncertainty}

Fig.~\ref{fig:walker-robustness} compares RAPO with PPO, PPO+DR, RARL, and EPOpt across mass, inertia, friction, and torque scaling. RAPO shows the most stable returns. We include PPO as a strong nominal baseline, PPO+DR to represent domain randomization, RARL for two-player adversarial training, and EPOpt as a risk-averse (worst-quantile/CVaR-style) objective. Together these cover nominal, randomization, adversarial, and distributionally robust strategies under matched rollout budgets and network sizes. In the ID region it closely tracks PPO with minor variation.
RAPO requires more computation per step due to ensemble rollouts and Boltzmann reweighting, so it is not strictly more sample-efficient than PPO. 
In OOD regimes, PPO collapses once parameters leave training support, PPO+DR slows but cannot prevent degradation, RARL is unstable, and EPOpt is conservative. RAPO sustains high returns at extreme scales with only modest decline, achieving a favorable robustness–performance balance.  
Boltzmann reweighting emphasizes difficult regimes, sometimes producing non-monotonic effects (e.g., inertia scale $0.5$ outperforming $0.6$). Even within the ID prior, returns drop near parameter extremes, showing that uniform randomization does not yield uniform difficulty. Occasionally RAPO even outperforms PPO at the nominal scale, reflecting variance reduction from exponential tilting that stabilizes updates.


\begin{figure}[t]
    \centering
    \includegraphics[width=1.0\linewidth]{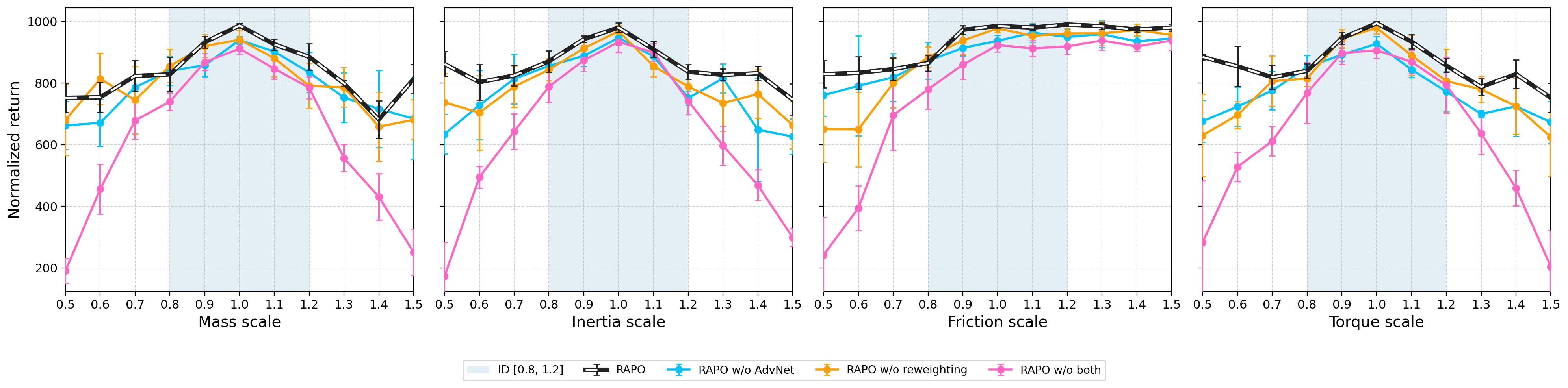}
    \vspace*{-5mm}
    \caption{\texttt{Walker2d} ablation. We compare RAPO, w/o AdvNet, w/o Boltzmann reweighting ($w=\rho$), and w/o both. ID band shaded. Dropping either lowers robustness; dropping both gives the steepest OOD drop. Friction plot is flat for scales $\geq 1.0$ due to no difference in frictional force.}
    \label{fig:walker-ablation}
    \vspace*{-2.5mm}
\end{figure}


\subsection{Role of AdvNet and Boltzmann Reweighting}

We study RAPO ablations on \texttt{Walker2d} (Fig.~\ref{fig:walker-ablation}). All variants use identical budgets and differ only by enabling AdvNet and/or Boltzmann reweighting. Full RAPO achieves the highest or tied returns in ID and remains stable OOD. Within $[0.9,1.1]$ it is close to the no-reweighting variant, but beyond this band RAPO stays flat while ablations degrade. Removing both causes the steepest OOD drop, showing that uniform sampling without trajectory stress yields brittle tails. Boltzmann reweighting emphasizes difficult regimes, producing occasional non-monotonic but helpful effects (e.g., inertia $0.5$ outperforming $0.6$). For friction, scales $\geq 1.0$ show flat curves.
\emph{RAPO without AdvNet} removes the trajectory adversary and sets the adversarial loss to zero, instead computing $\eta^\star$ by root finding on the convex dual so that the trajectory-level KL budget $\epsilon$ is satisfied. For computational feasibility, this root finding is run for only a finite number of iterations, which makes it less effective than amortized prediction via AdvNet and generally yields weaker robustness.
\emph{RAPO without Boltzmann reweighting} fixes $w=\rho$, so environments are sampled uniformly from the entire prior distribution rather than focusing on adverse regimes, while AdvNet still perturbs trajectories.  
\emph{RAPO without both} fixes $w=\rho$ and disables AdvNet, reducing training to PPO on the prior mixture with uniform sampling.  
All other hyperparameters and rollout settings are identical.
To further illustrate robustness, Fig.~\ref{fig:walker-heatmaps} shows heatmaps of value estimates over training. 
For each update $u$ and parameter type, we sweep a dense grid of scales $\alpha\in[0.5,1.5]$ while holding others at $1.0$ 
and compute $V_u(\alpha)$ from one-step critic rollouts. Although training uses a finite $K$-mixture, this diagnostic is continuous. 
Brighter colors indicate higher value and the blue trace marks $\arg\max_\alpha V_u(\alpha)$.
Under RAPO, the value floor progressively rises across the full range: early updates show dark bands at extremes (hard regimes), which flatten and brighten over time. The result shows that RAPO actively covers entire parameter space, not leaving blind spots. See App.~\ref{app:exp-app} for further experiment results.


\begin{figure}[t]
    \centering
    \includegraphics[width=\linewidth]{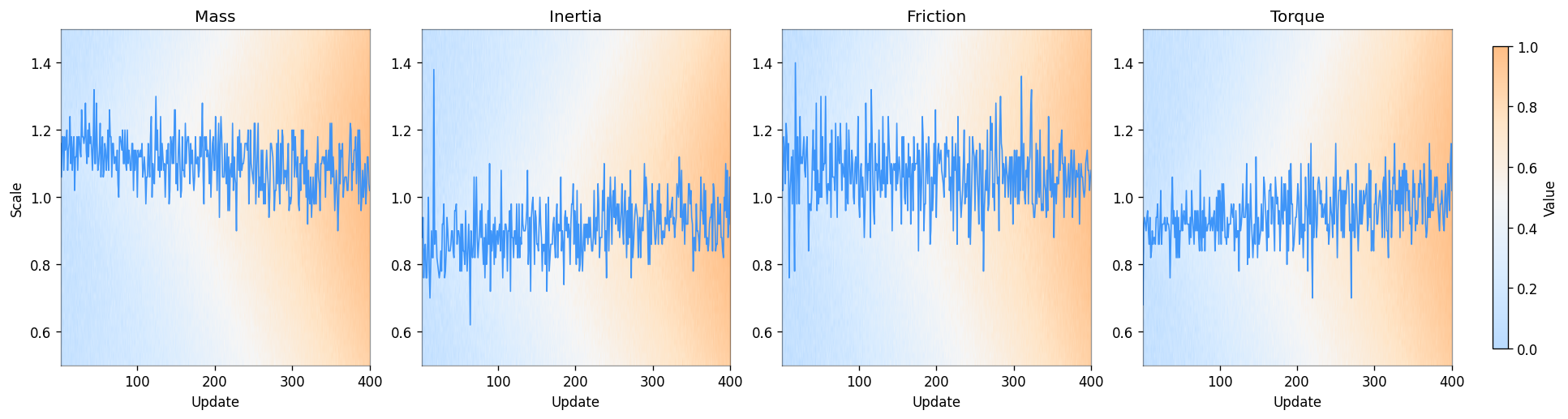}
    \vspace*{-5mm}
    \caption{Heatmaps for \texttt{Walker2d}. Each panel covers only one parameter scale $\alpha$ at each with others fixed at $1.0$, showing $V_u(\alpha)$ over updates $u$. RAPO lifts the value floor, while without Boltzmann reweighting ($w = Unif([K])$) leaves OOD tails. See App.~\ref{app:exp-app}.}
    \label{fig:walker-heatmaps}
    \vspace*{-5mm}
\end{figure}


\subsection{Quadrotor Payload Tracking Experiments}
\label{sec:quadrotor}

We evaluate RAPO on harder task: flexible cable-suspended payload trajectory tracking under random force and torque disturbances.
Unlike quadrotor-only tracking, payload motion adds stronger coupling and hybrid effects from the suspended mass and cable, making robustness harder.  
Unlike locomotion tasks, the control objective is the payload position itself, which makes trajectory visualization informative.  
In both \texttt{Walker2d} and quadrotor-payload system, evaluation fixes dynamics parameters per episode (e.g., mass scale or payload mass/length), so these tests assess domain-shift generalization rather than strict rectangularity.  
The quadrotor environment also includes per-step random force/torque disturbances, which better approximate the rectangular RMDP setting.  
Thus the payload system tests robustness under both episode-level shifts and step-level disturbances.  
A Lissajous trajectory is used for stability tests under nominal (ID) and perturbed (OOD) dynamics.  
For ID, payload mass is sampled from $[0,0.25]$ kg and cable length from $[0,0.5]$ m.
For OOD, mass is $0.5$ kg and cable length $1.0$ m, producing clear shifts in inertia and tension. Also, force and torque magnitude ranges are twice greater than ID.
The environment includes sensor noise, random force/torque disturbances, aerodynamic downwash, propeller time constants ($\tau_{\text{up}},\tau_{\text{down}}$), and small initial perturbations.  
Fig.~\ref{fig:payload-3d} shows the 3D trajectory: RAPO stays close to the reference, while PPO-DR drifts and oscillates, especially near the origin during hovering. 
PPO-DR also crashes mid-trajectory, consistent with Tab.~\ref{tab:payload}.
Fig.~\ref{fig:payload-time} shows $x,y,z$ over time: RAPO reduces oscillations and steady-state error.  
Performance is measured by root-mean-square error (RMSE) between desired and actual payload trajectories.  
Results average 10 trials with 95\% confidence intervals.  
RAPO achieves the lowest OOD RMSE, showing that combining trajectory- and model-level adversarialization stabilizes payload dynamics.  
Note that observation noise differs from RMDP uncertainty: the next state $s_{t+1}$ is deterministic, while the agent observes $o_{t+1}=s_{t+1}+\epsilon$.  
This is closer to input perturbation or POMDP formulations than to rectangular RMDPs, which assume uncertainty in the transition kernel.  
Domain randomization samples dynamics per episode, fixing the kernel for the whole trajectory.  
This is weaker than step-wise rectangular RMDPs and can be viewed as a subset or relaxed variant of the rectangular assumption.


\begin{figure*}[t]
\centering
    \begin{minipage}[t]{0.32\linewidth}
        \centering
        \vspace*{-12mm}
        \includegraphics[width=\linewidth]{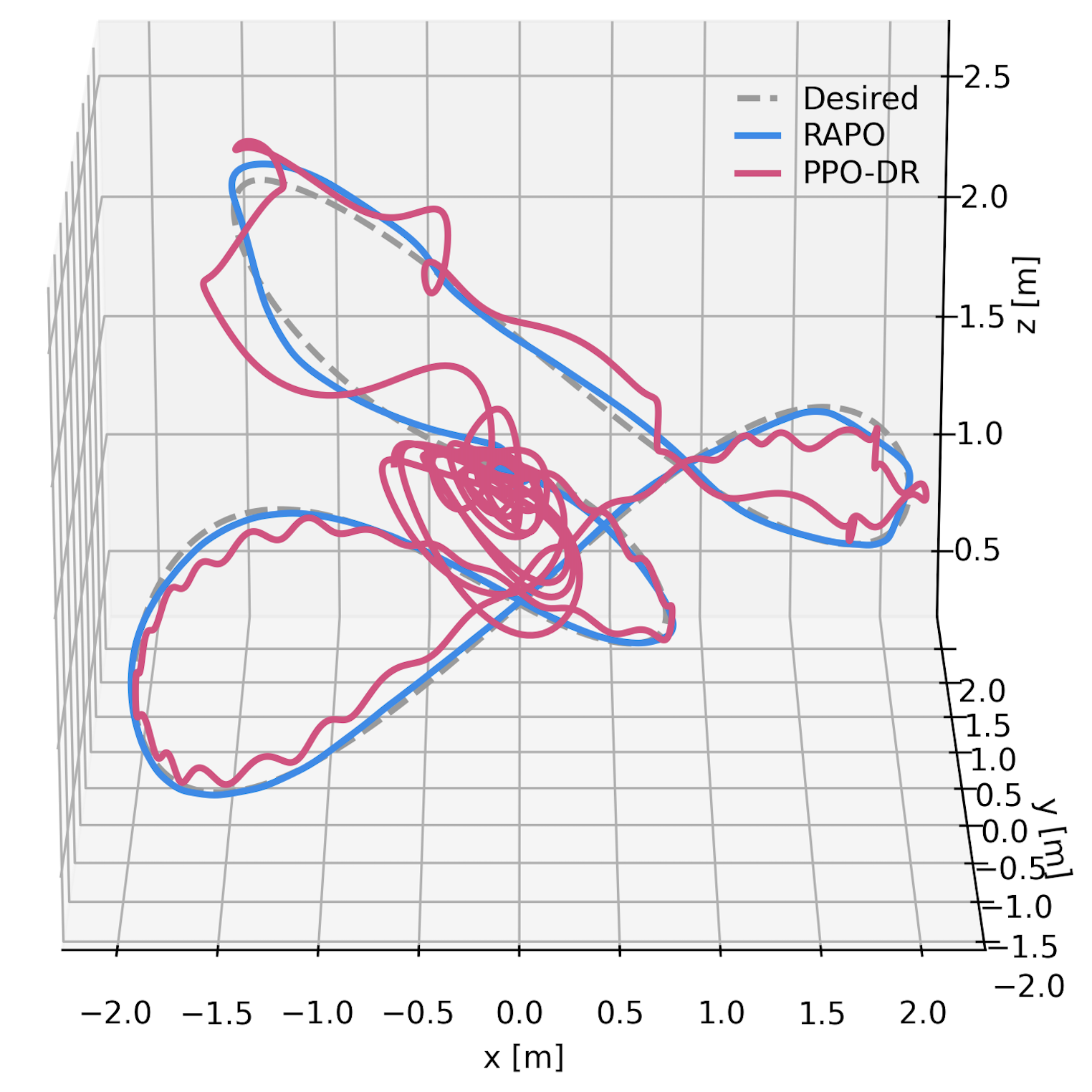}
        \vspace*{-6mm}
        \caption{3D trajectory. RAPO stays close while PPO-DR drifts in the OOD environment.}
        \label{fig:payload-3d}
    \end{minipage}
    \hspace{0.01\linewidth}
    \begin{minipage}[t]{0.32\linewidth}
        \centering
        \vspace*{-11mm}
        \includegraphics[width=\linewidth]{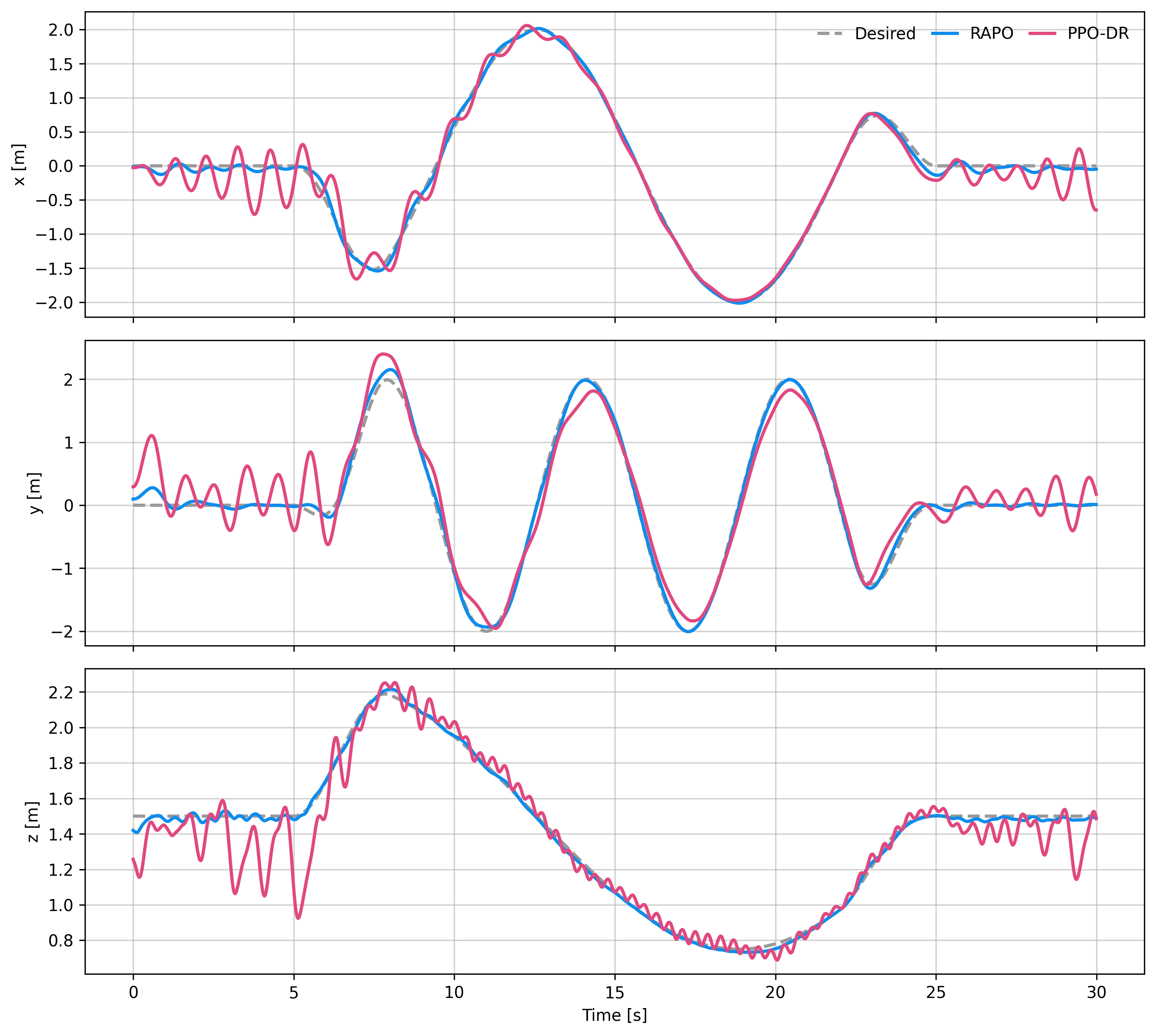}
        \vspace*{-2mm}
        \caption{$x,y,z$ vs.\ time. RAPO reduces oscillations in the OOD environment.}
        \label{fig:payload-time}
    \end{minipage}
    \hspace{0.01\linewidth}
    \begin{minipage}[t]{0.30\linewidth}
        \scriptsize
        \begin{tabular}{lccc}
        \toprule
        Alg & ID & OOD & Crash [\%] \\
        \midrule
        PPO        & 7.5 & 18.2 & 40 \\
        PPO-DR     & 7.8 & 14.6 & 12 \\
        EPOpt      & 9.1 & 13.9 & 16 \\
        RARL       & 8.4 & 15.2 & 25 \\
        \textbf{RAPO} & 7.6 & \textbf{9.4} & \textbf{0} \\
        \bottomrule
        \end{tabular}
        \vspace*{1mm}
        \captionof{table}{Payload tracking in the OOD environment. RMSE [cm] $\pm$ 95\% CI, reported only for successful episodes. Crashes are evaluated separately from RMSE, based on 20 episodes. RAPO achieves zero crash.}
        \label{tab:payload}
    \end{minipage}
    \vspace*{-5mm}
\end{figure*}


\section{Conclusion and Future Work}
\label{sec:conclusion}

We introduced \textbf{RAPO}, a framework that realizes distributionally robust RL by combining trajectory-level exponential tilting and model-level Boltzmann reweighting within a KL-robust dual formulation. This addresses the training–deployment mismatch where RL policies degrade under unseen dynamics: unlike domain randomization or adversarial RL, distributional robustness formally accounts for such uncertainty, while prior surrogate-based primal methods suffer blind spots, instability, and over-conservatism due to incomplete coverage of the ambiguity set. RAPO builds on contraction of the robust Bellman operator, stationary optimality, and a robust performance-difference lemma, with the nominal--robust value gap scaling as $O(\sqrt{\epsilon})$, thereby interpreting $\epsilon$ as a robustness–performance knob. Practically, RAPO amortizes dual-temperature search via AdvNet to steer worst-case rollouts efficiently and stably within divergence bounds, while Boltzmann reweighting over ensembles turns naive uniform sampling into policy-sensitive sampling, emphasizing dynamics that most expose policy vulnerabilities. These trajectory- and model-level components operate independently yet complement one another, yielding a surrogate robust occupancy distribution with controllable errors and jointly strengthening robustness.
Empirically, RAPO preserves ID performance while improving OOD robustness in dynamics sweeps and quadrotor payload tracking under disturbances, reducing failure and RMSE over PPO, PPO-DR, EPOpt, and RARL. While RAPO entails additional ensemble and dual costs, approximation errors in $\eta$ enter quadratically, and total error decomposes into $\sqrt{\kappa}$ (model), $\sqrt{\epsilon}$ (trajectory), $K^{-1/2}$ (ensemble), $N^{-1/2}$ (sampling), and $\delta^2$ (dual approximation). Overall, RAPO closes the theory--practice gap by maintaining tractability while delivering robustness to uncertainty and generalization to OOD environments.

Potential future work include pre-training ensembles and priors from real-world data through safe identification, thereby mitigating the simulation-to-reality gap, and integrating RAPO with model-based planning or Dyna-style rollouts to reduce interaction cost while retaining robustness.


\section*{Acknowledgements}
We used a large language model (LLM) to assist with polishing the writing.  
All ideas and contributions are the authors’ own. This use complies with the ICLR policy on LLM usage, and the authors take full responsibility for the content in accordance with academic integrity.

\bibliography{iclr2026_conference}
\bibliographystyle{iclr2026_conference}


\clearpage

\appendix

\section{Dual Formulation of KL-Robust Expectation}
\label{app:kl-dual}

\subsection{Setup, Primal, and Dual}

Fix a state–action pair $(s,a)$ and a nominal transition kernel $\hat{p}(\cdot\mid s,a)$. For a bounded $V:\mathcal S\to\mathbb R$, the stage-wise robust expectation under a KL-ball defined as \ref{eq:kl-ball} is $\inf_{p\in\mathcal{P}_{\epsilon}} \mathbb E_{s'\sim p}[V(s')]$. Equivalently, $\min_{p\in\Delta_{\mathcal S}} \sum_{x}p(x)V(x)\ \text{s.t.}\ \sum_x p(x)\log\!\tfrac{p(x)}{\hat p(x)}\le \epsilon$, which is a convex program with strong duality by Slater’s condition. Introducing multipliers $\eta\ge 0$ and $\mu\in\mathbb R$, the Lagrangian is $\mathcal L(p,\eta,\mu)=\sum_x p(x)V(x)+\eta\big(\sum_x p(x)\log\tfrac{p(x)}{\hat p(x)}-\epsilon\big)+\mu\big(\sum_x p(x)-1\big)$. Stationarity in $p(x)$ gives $V(x)+\eta\!\left(1+\log\tfrac{p(x)}{\hat p(x)}\right)+\mu=0$, hence $p(x)\propto \hat p(x)e^{-\tfrac{1}{\eta}V(x)}$ and $p_\eta(x)=\tfrac{\hat p(x)e^{-\eta V(x)}}{Z(\eta)}$, $Z(\eta)=\mathbb E_{\hat p}[e^{-\eta V}]$. Substitution yields the dual $g(\eta)=-\tfrac{1}{\eta}\log Z(\eta)-\tfrac{\epsilon}{\eta}$ for $\eta\ge 0$, so the dual problem is $\max_{\eta\ge 0}\ -\tfrac{1}{\eta}\log \mathbb E_{\hat p}[e^{-\eta V}]-\tfrac{\epsilon}{\eta}$.


\subsection{Inverse-Temperature Reparameterization and Optimality}

We directly view $\eta$ as the dual variable. The robust expectation can be expressed as $\inf_{p:\,D_{\mathrm{KL}}(p\|\hat p)\le\epsilon}\mathbb E_p[V]=\sup_{\eta\ge 0}\{-\tfrac{1}{\eta}\log \mathbb E_{\hat p}[e^{-\eta V}]-\tfrac{\epsilon}{\eta}\}$. The corresponding optimizer is $p_\eta(x)=\tfrac{\hat p(x)e^{-\eta V(x)}}{\mathbb E_{\hat p}[e^{-\eta V}]}$. Differentiating gives $g'(\eta)=\tfrac{1}{\eta^2}(\epsilon - D_{\mathrm{KL}}(p_\eta\|\hat p))$, so the optimum $\eta^\star$ satisfies $D_{\mathrm{KL}}(p_{\eta^\star}\|\hat p)=\epsilon$. Thus $\inf_{p:\,D_{\mathrm{KL}}(p\|\hat p)\le\epsilon}\mathbb E_p[V]=\sup_{\eta\ge 0}\{-\tfrac{1}{\eta}\log\mathbb E_{\hat p}[e^{-\eta V}]-\tfrac{\epsilon}{\eta}\}$, which provides the derivation of \ref{eq:kl-dual-main}.

Instead of enforcing the hard KL constraint, one may consider the soft penalty variant $\inf_p\{\mathbb E_p[V]+\tfrac{1}{\eta}D_{\mathrm{KL}}(p\|\hat p)\}$. The corresponding Lagrangian is $\mathcal{L}(p,\mu)=\sum_x p(x)V(x)+\tfrac{1}{\eta}\sum_x p(x)(\log p(x)-\log \hat p(x))+\mu(\sum_x p(x)-1)$ where $\mu$ enforces normalization. Stationarity with respect to $p(x)$ yields $\frac{\partial \mathcal{L}}{\partial p(x)}=V(x)+\tfrac{1}{\eta}(1+\log p(x)-\log \hat p(x))+\mu=0$, so that $p(x)=c\,\hat p(x)e^{-\eta V(x)}$. Normalization gives $c^{-1}=Z_\eta=\sum_x \hat p(x)e^{-\eta V(x)}=\mathbb{E}_{\hat p}[e^{-\eta V}]$ and therefore $p_\eta(x)=\tfrac{\hat p(x)e^{-\eta V(x)}}{Z_\eta}$. Plugging this optimizer into the objective, note that $D_{\mathrm{KL}}(p_\eta\|\hat p)=\sum_x p_\eta(x)\log\tfrac{p_\eta(x)}{\hat p(x)}=-\eta\,\mathbb{E}_{p_\eta}[V]-\log Z_\eta$. Hence $\mathbb{E}_{p_\eta}[V]+\tfrac{1}{\eta}D_{\mathrm{KL}}(p_\eta\|\hat p)=\mathbb{E}_{p_\eta}[V]+\tfrac{1}{\eta}(-\eta\mathbb{E}_{p_\eta}[V]-\log Z_\eta)=-\tfrac{1}{\eta}\log Z_\eta$. Since $Z_\eta=\mathbb{E}_{\hat p}[e^{-\eta V}]$, we conclude $\inf_p\{\mathbb E_p[V]+\tfrac{1}{\eta}D_{\mathrm{KL}}(p\|\hat p)\}=-\tfrac{1}{\eta}\log\mathbb E_{\hat p}[e^{-\eta V}]$.

\noindent Thus the soft-penalty form yields the classical entropic risk (risk-sensitive) evaluation with $\eta$ acting as inverse risk-aversion. This variant avoids the explicit $-\epsilon/\eta$ offset of the hard-constrained formulation while preserving robustness via regularization.


\subsection{Sample-Based Approximation}

In practice, $\hat p(\cdot\mid s,a)$ is sampled: $s'_1,\dots,s'_m\sim \hat p$. Sample-based approximations are $\widehat Z_\eta=\tfrac{1}{m}\sum_{i=1}^m e^{-\eta V(s'_i)}$, $\hat q_i(\eta)=\tfrac{e^{-\eta V(s'_i)}}{\sum_j e^{-\eta V(s'_j)}}$, and $\widehat D_{\mathrm{KL}}(\eta)=\sum_i \hat q_i(\eta)\log(m\hat q_i(\eta))$. The approximate dual is $\widehat v_{\mathrm{rob}}(\eta)=-\tfrac{1}{\eta}\log\widehat Z_\eta-\tfrac{\epsilon}{\eta}$, and $\eta^\star$ is found by root-finding so that $\widehat D_{\mathrm{KL}}(\eta^\star)\approx\epsilon$. Monotonicity holds since $\tfrac{d}{d\eta}D_{\mathrm{KL}}(p_\eta\|\hat p)=\eta\,\mathrm{Var}_{p_\eta}(V)\ge 0$, enabling bisection or Newton updates. Gradients for critic training follow from $\tfrac{\partial}{\partial V(s'_i)}\big(-\tfrac{1}{\eta}\log\widehat Z_\eta\big)=\hat q_i(\eta)$, so the TD target is $y_t=r_t+\gamma\,\widehat v_{\mathrm{rob}}(\eta^\star)$ with backpropagation weight $\gamma\,\hat q_i(\eta^\star)$. By Jensen's inequality, $\mathbb E[\log\widehat Z_\eta]\le\log Z_\eta$, so $-\tfrac{1}{\eta}\log\widehat Z_\eta$ is upward biased. At $\eta^\star$, the primal value is $\widehat{\mathbb E}_{p_{\eta^\star}}[V]=\sum_i \hat q_i(\eta^\star)V(s'_i)=-\tfrac{\partial}{\partial \eta}\log\widehat Z_\eta\big|_{\eta=\eta^\star}$, and ideally coincides with the dual value when $\widehat D_{\mathrm{KL}}(\eta^\star)=\epsilon$. In practice small duality gaps may occur, so monitoring $g=\text{dual}-\text{primal}$ is useful.


\subsection{Bias Correction and Root-Finding}

Since $-\tfrac{1}{\eta}\log\widehat Z_\eta$ is biased upward, concentration yields if $|\widehat Z_\eta-Z_\eta|\le\epsilon Z_\eta$ w.p.~$1-\xi$ then $-\tfrac{1}{\eta}\log\widehat Z_\eta\ge -\tfrac{1}{\eta}\log Z_\eta-\tfrac{1}{\eta}\log(1+\epsilon)$. A conservative estimator is $-\tfrac{1}{\eta}\log\widehat Z_\eta-c_m/\eta$ with $c_m\simeq \sqrt{\tfrac{\log(1/\xi)}{m}}$ or bootstrap. Variance may be reduced by larger $m$ or ensembles of dynamics models, with looser $\epsilon$ in high-variance regions. For root-finding, bisection brackets $(0^+,\eta_U)$ by monotonicity; Newton updates use $\eta\leftarrow \eta-\tfrac{\widehat D_{\mathrm{KL}}(\eta)-\epsilon}{\eta\,\widehat{\mathrm{Var}}_{p_\eta}(V)}$, with projection $\eta>0$ and damping when variance is small.


\subsection{Entropic Risk Functional, Contraction, and Tractability}

The functional $\Psi_\eta(V)=-\tfrac{1}{\eta}\log \mathbb{E}_{\hat p}[e^{-\eta V}]$ enjoys translation invariance $\Psi_\eta(V+c)=\Psi_\eta(V)+c$ and Lipschitz continuity $|\Psi_\eta(V)-\Psi_\eta(W)|\le \|V-W\|_\infty$, hence the robust Bellman operator $(\mathcal{T}_{\mathrm{KL}}V)(s)=\max_a\{r(s,a)+\gamma\sup_{\eta\ge 0}(\Psi_\eta(V\mid s,a)-\epsilon/\eta)\}$ is a $\gamma$-contraction in $\|\cdot\|_\infty$. Variance control follows from a cumulant expansion: let $\mu=\mathbb{E}_{\hat p}[V]$, $U=V-\mu$, and $K_U(t)=\log\mathbb{E}_{\hat p}[e^{tU}]=\sum_{n\ge 1}\kappa_n\,t^n/n!$ be the cumulant generating function of $U$ with $\kappa_1=0$, $\kappa_2=\mathrm{Var}_{\hat p}(V)$, etc.; then $\log\mathbb{E}_{\hat p}[e^{-\eta V}]=-\eta\mu+K_U(-\eta)=-\eta\mu+\sum_{n\ge 2}\kappa_n\,\tfrac{(-\eta)^n}{n!}$, so $\Psi_\eta(V)=\mu-\sum_{n\ge 2}\kappa_n\,\tfrac{(-1)^n}{n!}\,\eta^{\,n-1}=\mu-\tfrac{\eta}{2}\,\mathrm{Var}_{\hat p}(V)+\tfrac{\eta^2}{6}\,\kappa_3-\tfrac{\eta^3}{24}\,\kappa_4+O(\eta^4)$, which makes explicit that the first correction to the mean is $-\tfrac{\eta}{2}\mathrm{Var}_{\hat p}(V)$ and higher-order terms depend on higher cumulants, i.e., $\eta$ directly tunes sensitivity to dispersion and tails. In practice, with samples $s'_1,\dots,s'_m\sim \hat p$, one uses $\widehat{\Psi}_{\eta,m}(V)=-\tfrac{1}{\eta}\log(\tfrac{1}{m}\sum_{i=1}^m e^{-\eta V(s'_i)})$, which admits concentration bounds of order $\mathcal{O}(e^{2\eta M}/(\eta\sqrt{m}))$ for $V\in[-M,M]$. These properties explain why KL divergence is well-suited in robust Bellman operators: it induces exponential tilting with closed-form optimizers, preserves contraction, and remains tractable while offering explicit variance control via the $\eta$–cumulant expansion.


\section{Boltzmann Reweighting under KL Constraints}
\label{app:boltzmann-dual}

We derive the Boltzmann (Gibbs) reweighting that biases an environment ensemble toward adverse models while staying close to a prior. The exponential form follows from the conjugacy of the log-partition function and relative entropy. We give the KL-constrained primal, its dual, the exponential-family solution, and key properties (existence, uniqueness, monotonicity, limits, and the continuous case).

\subsection{Intuitive Motivation: Importance Sampling and Exponential Tilting}
We seek a mixture $w$ absolutely continuous w.r.t.\ a prior $\rho$ to emphasize high vulnerability scores $h_k$ while remaining close to $\rho$. Writing $w(k)=\rho(k)L(k)$ with $L\!\ge\!0$ and $\mathbb E_\rho[L]\!=\!1$, an information budget via KL restricts feasible likelihood ratios. Convex duality then yields the optimal $L_\beta(k)\propto e^{-\beta h_k}$ and hence $w_\beta(k)=\tfrac{\rho(k)e^{-\beta h_k}}{\sum_j \rho(j)e^{-\beta h_j}}$, i.e., exponential tilting (Esscher transform). This guarantees positivity/normalization, aligns with cumulant-generating functions, and is the unique optimizer compatible with a KL constraint, providing a canonical “soft-min” toward adverse models.


\subsection{Setup, Primal, and Dual}

Let $\Omega=\{\omega_1,\dots,\omega_K\}$ index models with prior $\rho\in\Delta_{[K]}$ and scores $h_k\in\mathbb R$. We choose $w\in\Delta_{[K]}$ to emphasize adverse models under a KL budget $\kappa\ge 0$:  
\begin{equation}
\label{eq:br-primal}
\min_{w\in\Delta_{[K]}}\ \sum_{k=1}^K w(k)h_k \quad\text{s.t.}\quad D_{\mathrm{KL}}(w\|\rho)\le \kappa.
\end{equation}
This is convex and satisfies Slater, so strong duality holds. With multipliers $\lambda\!\ge\!0$ (KL) and $\mu\!\in\!\mathbb R$ (normalization), the Lagrangian is $\mathcal L(w,\lambda,\mu)=\sum_k w(k)h_k+\lambda\big(\sum_k w(k)\log\tfrac{w(k)}{\rho(k)}-\kappa\big)+\mu\big(\sum_k w(k)-1\big)$. Stationarity gives $h_k+\lambda(1+\log\tfrac{w(k)}{\rho(k)})+\mu=0$, hence $w(k)\propto \rho(k)e^{-h_k/\lambda}$ and after normalization  
\begin{equation}
\label{eq:w-beta}
w_\beta^\star(k)=\frac{\rho(k)e^{-\beta h_k}}{\sum_j \rho(j)e^{-\beta h_j}},\qquad \beta=\tfrac{1}{\lambda}\ge 0.
\end{equation}
By complementary slackness, the optimal $\beta^\star$ satisfies  
\begin{equation}
\label{eq:beta-matching}
D_{\mathrm{KL}}(w_{\beta^\star}^\star\|\rho)=\kappa.
\end{equation}
Minimizing over $w$ yields the dual function $g(\lambda)=-\lambda\log\sum_k \rho(k)e^{-h_k/\lambda}-\lambda\kappa$, equivalently  
\begin{equation}
\label{eq:dual-beta}
\max_{\beta\ge 0}\ \Big\{-\tfrac{1}{\beta}\log\sum_{k=1}^K \rho(k)e^{-\beta h_k}-\tfrac{\kappa}{\beta}\Big\},
\end{equation}
whose maximizer $\beta^\star$ together with \ref{eq:w-beta} solves \ref{eq:br-primal}. The term $-(1/\beta)\log\sum_k \rho(k)e^{-\beta h_k}$ is the negative free energy (Gibbs principle).


\subsection{Convex-Analytic View (Donsker–Varadhan)}

For any $f=(f_k)$, $\log\sum_k \rho(k)e^{f_k}=\sup_{w\in\Delta_{[K]}}\{\sum_k w(k)f_k-D_{\mathrm{KL}}(w\|\rho)\}$. Setting $f_k=-\beta h_k$ gives $-\tfrac{1}{\beta}\log\sum_k \rho(k)e^{-\beta h_k}=\inf_{w\in\Delta_{[K]}}\{\sum_k w(k)h_k+\tfrac{1}{\beta}D_{\mathrm{KL}}(w\|\rho)\}$. Introducing the budget $\kappa$ via a Lagrange multiplier recovers \ref{eq:dual-beta} and \ref{eq:w-beta}. Exponential tilting is thus the optimizer of an entropy-regularized worst-case objective.

\subsection{Properties, Limits, and Continuous Case}

\emph{Existence/uniqueness:} strong duality holds by Slater. When $h_k$ are not all equal and $\kappa>0$, \ref{eq:beta-matching} has a unique solution $\beta^\star$, hence a unique $w_{\beta^\star}^\star$. \emph{Monotonicity:} with $Z(\beta)=\sum_k \rho(k)e^{-\beta h_k}$ and $w_\beta$ as in \ref{eq:w-beta}, $\tfrac{d}{d\beta}D_{\mathrm{KL}}(w_\beta\|\rho)=\beta\,\mathrm{Var}_{w_\beta}(h)\ge 0$, with strict inequality unless $h$ is $\rho$-a.s.\ constant. Continuity and strict increase guarantee a unique $\beta^\star$ matching $\kappa$. \emph{Temperature limits:} as $\beta\to 0$, $w_\beta\to \rho$. As $\beta\to\infty$, $w_\beta$ concentrates on $\arg\min_k h_k$, turning the soft-min into a hard minimum. \emph{Continuous ensembles:} on $(\Omega,\mathcal F)$ with prior $\rho$ and measurable $h:\Omega\to\mathbb R$, $\tfrac{dw_\beta}{d\rho}(\omega)=\tfrac{e^{-\beta h(\omega)}}{\int_\Omega e^{-\beta h(\tilde\omega)}\,\rho(d\tilde\omega)}$, with $D_{\mathrm{KL}}(w_{\beta^\star}\|\rho)=\kappa$, obtained by replacing sums with integrals.

Starting from \ref{eq:br-primal}, Lagrangian duality or the Gibbs variational principle yields the exponential-family optimizer \ref{eq:w-beta}, with inverse temperature $\beta$ trading off energy (vulnerability) and entropy (deviation from prior) and chosen by \ref{eq:beta-matching}. This is the Boltzmann reweighting used to bias sampling toward adverse regimes in a principled, tunable way.


\subsection{Why KL Constraint in Boltzmann Reweighting}

For an ensemble $\Omega=\{\omega_1,\dots,\omega_K\}$ with prior $\rho\in\Delta_{[K]}$ and vulnerability scores $h_k\in\mathbb{R}$, the KL-constrained reweighting problem  
\begin{equation}
\min_{w\in\Delta_{[K]}}\ \sum_{k=1}^K w(k)h_k \quad \text{s.t.}\quad D_{\mathrm{KL}}(w\|\rho)\le \kappa
\end{equation}
admits the Boltzmann solution $w_\beta(k)=\tfrac{\rho(k)e^{-\beta h_k}}{\sum_j \rho(j)e^{-\beta h_j}}$ with $D_{\mathrm{KL}}(w_\beta\|\rho)=\kappa$, which arises from strong duality and exponential tilting. This closed-form structure is unique to KL divergence: other $f$-divergences yield polynomial or truncated forms such as $w(k)\propto \rho(k)(1-\beta h_k)_+$ for $\chi^2$, which lack smoothness and global stability. The mapping $\beta\mapsto D_{\mathrm{KL}}(w_\beta\|\rho)$ is continuous, strictly increasing, and unbounded unless all $h_k$ coincide, guaranteeing that for each finite $\kappa>0$ there exists a unique $\beta^\star$ satisfying the constraint. Thus $w_\beta$ interpolates smoothly between $w=\rho$ at $\beta=0$ (no reweighting) and the hard-min distribution as $\beta\to\infty$. Finally, the budget $\kappa$ has a direct information-theoretic meaning: $D_{\mathrm{KL}}(w\|\rho)\le\kappa$ bounds the expected log-likelihood ratio $\mathbb{E}_w[\log(w/\rho)]$, so $\kappa$ quantifies the exact information distance from the prior and explicitly limits how much prior mass can be shifted toward adverse models.


\section{On the Use of AdvNet for $\eta$ Approximation}
\label{app:advnet-approx}

In Sec.~\ref{sec:rapo}, we introduced AdvNet as a practical mechanism to approximate the dual variable $\eta^\star$ that enforces the KL budget in RAPO.  
Here we provide further theoretical background and justification: we first explain why the hard minimum is infeasible, then review the exact dual optimization and the interpretation of $\eta$ as a KL--sharpness trade-off, and finally analyze AdvNet’s amortized approximation, including its stability and error propagation properties.  

\subsection{Why Not Simply Take the Minimum?}

A natural question is whether one can avoid the dual altogether and simply take the hard minimum over samples: $\min_{i} V(s'_i)$ with $s'_i \sim \hat p(\cdot|s,a)$. This corresponds to choosing the degenerate distribution $q=\delta_{s'_{\min}}$. However, $D_{\mathrm{KL}}(\delta_{s'_{\min}}\|\hat p)=\log\frac{1}{\hat p(s'_{\min})}$, which equals $\log m$ if $\hat p$ is approximated by a uniform empirical distribution over $m$ samples. Unless $\epsilon \ge \log m$, the hard minimum violates the KL constraint and is therefore infeasible. In continuous spaces it becomes even worse: concentrating on a single point where $\hat p(s')=0$ yields infinite KL. 

Thus, the true robust solution under a finite KL budget is the \emph{soft minimum} distribution $q_\eta$ defined by exponential tilting. The concentration of $q_\eta$ is controlled by $\eta$, which is uniquely chosen so that the KL budget is exactly satisfied.


\subsection{Dual Optimization versus AdvNet Approximation}

The dual variable $\eta^\star$ acts as the inverse temperature that enforces the KL constraint in the tilted mixture. In principle, one can solve a one-dimensional convex optimization for each $(s,a)$: $\eta^\star(s,a)=\arg\max_{\eta \ge 0}\{-\tfrac{1}{\eta}\log \mathbb{E}_{\hat p(\cdot|s,a)}[e^{-\eta V(s')}] - \tfrac{\epsilon}{\eta}\}$, and then compute the reweighted distribution $q_{\eta^\star}(s')=\tfrac{\hat p(s'|s,a)e^{-\eta^\star V(s')}}{\mathbb{E}_{\hat p}[e^{-\eta^\star V(s')}]}$ with $D_{\mathrm{KL}}(q_{\eta^\star}\|\hat p)=\epsilon$. This dual root-finding is exact but requires iterative optimization for every state-action query.

AdvNet is introduced as an \emph{amortized approximation}: rather than solving for $\eta^\star$ repeatedly, we train a neural network $\hat \eta=f_\phi(s,a)$ to approximate the mapping $(s,a)\mapsto \eta^\star$. Training is supervised against either the dual solution $\eta^\star$ or the induced robust surrogate loss. Once trained, AdvNet provides a fast forward-pass approximation for arbitrary $(s,a)$.


\subsection{Interpretation of $\eta$ and the KL--Sharpness trade-off}

The tilting distribution $q_\eta(s')=\tfrac{\hat p(s'|s,a)e^{-\eta V(s')}}{\mathbb E_{\hat p}[e^{-\eta V(s')}]}$ places more mass on adverse states as $\eta$ increases. Consequently, $D_{\mathrm{KL}}(q_\eta\|\hat p)$ is strictly increasing in $\eta$. At $\eta=0$, $q_\eta=\hat p$ and KL$=0$. As $\eta\to\infty$, $q_\eta$ collapses to the worst-case delta distribution and KL diverges. Hence $\eta^\star$ is the sharpness level that exactly matches the KL budget. 

Large $\eta$ is not always better: while it yields stronger concentration on the worst case, it also violates the KL constraint and produces unstable gradients. The robust expectation balances both effects through the dual objective $\sup_{\eta\ge 0}\{-\tfrac{1}{\eta}\log \mathbb E_{\hat p}[e^{-\eta V}] - \tfrac{\epsilon}{\eta}\}$.


\subsection{Why the Dual Temperature $\eta$ Is State--Action Dependent}
\label{app:eta-sa-dependent}

We show that the optimal dual temperature arises \emph{separately at each state--action pair} from the hard KL-constrained inner problem, and under mild regularity it is a well-defined (locally smooth) function of $(s,a)$.

\paragraph{Setup: one-step robust evaluation at $(s,a)$.}
Fix a policy $\pi$ and value function $V$. With $(s,a)$-rectangular KL balls $\mathcal{P}_\epsilon(s,a)=\{\,p(\cdot\!\mid\!s,a)\in\Delta_{\mathcal S}: D_{\mathrm{KL}}(p\Vert \hat p(\cdot\!\mid\!s,a))\le \epsilon\,\}$, the robust Bellman operator (Prelim., \ref{eq:rbo}) requires, \emph{for each} $(s,a)$, the one-step inner minimum $v_{\mathrm{rob}}(s,a)=\inf_{p\in\mathcal{P}_\epsilon(s,a)}\ \mathbb{E}_{s'\sim p(\cdot\mid s,a)}V(s')$. By convex duality for the hard KL constraint, this is equivalently $v_{\mathrm{rob}}(s,a)=\sup_{\eta\ge 0}\{-\tfrac{1}{\eta}\log \mathbb{E}_{\hat p(\cdot\mid s,a)}[e^{-\eta V(s')}]-\tfrac{\epsilon}{\eta}\}=\sup_{\eta\ge 0}\ \mathcal{L}(s,a;\eta)$. Thus the dual variable $\eta$ is the Lagrange multiplier of a \emph{local} constraint at $(s,a)$.

\paragraph{First-order condition and tightness of the KL budget.}
Write $Z_{s,a}(\eta)=\mathbb{E}_{\hat p(\cdot\mid s,a)}[e^{-\eta V(s')}]$ and $q_{s,a,\eta}(s')=\tfrac{e^{-\eta V(s')}}{Z_{s,a}(\eta)}\,\hat p(s'\mid s,a)$. Then $D_{\mathrm{KL}}(q_{s,a,\eta}\,\Vert\,\hat p(\cdot\mid s,a))=-\eta\,\mathbb{E}_{q_{s,a,\eta}}[V]-\log Z_{s,a}(\eta)$. A standard calculation yields $\tfrac{\partial \mathcal{L}}{\partial \eta}(s,a;\eta)=\tfrac{\epsilon - D_{\mathrm{KL}}(q_{s,a,\eta}\Vert \hat p(\cdot\mid s,a))}{\eta^2}$. Hence any interior maximizer $\eta^\star(s,a)$ satisfies the tightness condition $D_{\mathrm{KL}}(q_{s,a,\eta^\star(s,a)}\Vert \hat p(\cdot\mid s,a))=\epsilon$.

\paragraph{Monotonicity, unimodality, and uniqueness.}
For $\eta>0$, $\tfrac{d}{d\eta}D_{\mathrm{KL}}(q_{s,a,\eta}\Vert \hat p(\cdot\mid s,a))=\eta\,\mathrm{Var}_{q_{s,a,\eta}}[V(s')]\ge 0$. Unless $V$ is $q_{s,a,\eta}$-a.s.\ constant, the derivative is positive, so $\eta\mapsto D_{\mathrm{KL}}(q_{s,a,\eta}\Vert\hat p)$ is strictly increasing. Combining with the gradient expression, $\eta\mapsto \mathcal{L}(s,a;\eta)$ is unimodal with a \emph{unique} maximizer where the tightness condition holds. Therefore, for each $(s,a)$, the optimal dual temperature $\eta^\star(s,a)=\arg\max_{\eta\ge 0}\ \mathcal{L}(s,a;\eta)$ is uniquely defined (up to boundary cases). This shows $\eta^\star$ is \emph{inherently $(s,a)$-dependent}: the data of the local problem---$\hat p(\cdot\mid s,a)$ and the induced distribution of $V(s')$---change with $(s,a)$, and so does the unique maximizer.

\paragraph{Regularity: when does $\eta^\star(s,a)$ behave like a function?}
Define the zero set $\Phi(s,a,\eta)=D_{\mathrm{KL}}(q_{s,a,\eta}\Vert \hat p(\cdot\mid s,a))-\epsilon$. Under standard continuity assumptions in $(s,a)$ for $\hat p(\cdot\mid s,a)$ and $V$, $\Phi$ is continuous and $\mathcal{C}^1$ in $\eta$. Moreover, $\tfrac{\partial\Phi}{\partial \eta}(s,a,\eta)=\eta\,\mathrm{Var}_{q_{s,a,\eta}}[V]>0$ whenever $\eta>0$ and the variance is nonzero. Hence, by the Implicit Function Theorem, any interior solution $\eta^\star(s,a)$ of $\Phi=0$ defines a \emph{locally} $\mathcal{C}^1$ function of $(s,a)$. Furthermore, since $\mathcal{L}(s,a;\eta)$ is continuous in $(s,a,\eta)$ and strictly concave in $\eta$, Berge’s Maximum Theorem implies that the argmax correspondence is single-valued and continuous in $(s,a)$. In sum, under mild non-degeneracy (nonzero variance) and continuity, we may treat $\eta^\star$ as a well-defined, locally smooth \emph{function} $(s,a)\mapsto \eta^\star(s,a)$.

\paragraph{Boundary and degenerate cases.}
If $\mathrm{Var}_{q_{s,a,\eta}}[V]=0$ on an interval, or if $\epsilon$ is so small/large that the solution lies at the boundary, then $\eta^\star(s,a)$ may occur at $\eta=0$ or at a large cutoff. In practice we project $\eta$ to $[\eta_{\min},\eta_{\max}]$ and (if needed) use one-dimensional bisection enforcing the tightness condition.

\paragraph{Implication for RAPO (AdvNet).}
Because the inner problem is solved \emph{for each} $(s,a)$, the robust Bellman operator requires the \emph{local} temperature $\eta^\star(s,a)$. RAPO’s AdvNet amortizes this implicit solution by learning a predictor $f_\psi(s,a)\approx \eta^\star(s,a)$, which is not a heuristic convenience but a direct realization of the dual structure: it supplies the state--action dependent temperature that makes the inner KL constraint tight and yields the correct soft-min target $-\tfrac{1}{\eta}\log \mathbb{E}_{\hat p}[e^{-\eta V}]$ at each $(s,a)$.


\subsection{Amortized Approximation via AdvNet: Motivation, Stability, and Error Analysis}

The term ``amortized'' follows variational inference: instead of optimizing a local variable $\eta^\star$ for each instance, we train a global function $f_\psi(s,a)$ that amortizes the inference cost. The mapping $(s,a)\mapsto\eta$ may seem unintuitive, but is learnable in practice because different regions of the state-action space consistently correlate with different adverse dynamics.

\emph{Example.} Consider a pendulum with uncertain mass. For the same $(s,a)$, a lighter pendulum responds faster, while a heavier pendulum is sluggish. If the policy applies large torques, heavy mass instances are more failure-prone. AdvNet can learn this regularity, outputting larger $\eta$ for such $(s,a)$ pairs to bias reweighting toward heavier models.

To summarize, the dual variable $\eta^\star$ can in principle be computed exactly by solving the dual optimization, but this is computationally expensive; AdvNet provides an amortized approximation that replaces repeated per-sample root-finding with a single learned forward pass. Since hard minima correspond to degenerate distributions and are infeasible under finite KL budgets, the dual soft-minimum defined by $\eta^\star$ is the correct solution. The parameter $\eta$ mediates the trade-off between worst-case sharpness and divergence from the nominal prior, and AdvNet offers an efficient practical mechanism to approximate this trade-off.

We now formalize the role of the state--action dependent dual variable.  
For a fixed $(s,a)$ with nominal kernel $\hat p(\cdot\mid s,a)$ and a bounded $V:\mathcal S\to\mathbb R$, the dual representation of the one-step robust evaluation under a hard KL constraint is $\phi(\eta;V)=-\tfrac{1}{\eta}\log\mathbb E_{\hat p}[e^{-\eta V}]-\tfrac{\epsilon}{\eta},\ \eta\ge 0$, and the maximizer $\eta^\star(s,a)\in\arg\max_{\eta\ge 0}\phi(\eta;V\mid s,a)$ exists uniquely. By the KKT conditions, $D_{\mathrm{KL}}(q_{\eta^\star}\Vert \hat p)=\epsilon$ and $q_\eta(\mathrm{d}s')\propto e^{-\eta V(s')} \hat p(\mathrm{d}s')$. Since $V(\cdot\mid s,a)$ varies with $(s,a)$, the optimizer $\eta^\star$ is inherently state--action dependent.

\paragraph{Contraction is preserved irrespective of AdvNet approximation.}  
Fix any measurable mapping $\hat\eta:\mathcal S\times\mathcal A\to\mathbb R_{\ge 0}$ and define $(\widetilde{\mathcal T}V)(s)=\max_{a}\{r(s,a)+\gamma(\Psi_{\hat\eta(s,a)}(V\mid s,a)-\epsilon/\hat\eta(s,a))\}$ with $\Psi_{\eta}(V)=-\tfrac{1}{\eta}\log\mathbb E_{\hat p}[e^{-\eta V}]$. For all $\eta>0$, $\Psi_\eta$ is 1-Lipschitz in $\|\cdot\|_\infty$, i.e., $|\Psi_\eta(V)-\Psi_\eta(W)|\le\|V-W\|_\infty$. Thus, for arbitrary $\hat\eta(\cdot)$, $\|\widetilde{\mathcal T}V-\widetilde{\mathcal T}W\|_\infty\le\gamma\|V-W\|_\infty$, so $\widetilde{\mathcal T}$ remains a $\gamma$-contraction. Hence, even if AdvNet does not converge exactly to $\eta^\star$, the operator still converges to a unique fixed point without loss of contraction.

\paragraph{Strong concavity of the dual objective and quadratic error effects.}  
The derivative of $\phi$ is $\phi'(\eta)=\tfrac{1}{\eta^2}(\epsilon-D_{\mathrm{KL}}(q_\eta\Vert\hat p))$, and $\phi''(\eta)=-\tfrac{2}{\eta^3}(\epsilon-D_{\mathrm{KL}}(q_\eta\Vert\hat p))-\tfrac{1}{\eta}\,\mathrm{Var}_{q_\eta}(V)\le 0$. Thus $\phi$ is strictly concave, ensuring a unique optimum $\eta^\star$. By Taylor expansion, $\phi(\eta^\star)-\phi(\hat\eta)=-\tfrac{1}{2}\phi''(\xi)(\hat\eta-\eta^\star)^2\le\tfrac{1}{2}\tfrac{\mathrm{Var}_{q_\xi}(V)}{\xi}(\hat\eta-\eta^\star)^2$. If $V\in[m,M]$, then $\mathrm{Var}_{q_\xi}(V)\le (M-m)^2/4$. Restricting $\eta\ge\eta_{\min}>0$ yields $0\le\phi(\eta^\star)-\phi(\hat\eta)\le\tfrac{(M-m)^2}{8\,\eta_{\min}}(\hat\eta-\eta^\star)^2$. Hence the dual value loss grows only quadratically with the approximation error.

\paragraph{KL residual and local equivalence.}  
Optimality satisfies $r(\eta):=\epsilon-D_{\mathrm{KL}}(q_\eta\Vert\hat p)=0$. Since $r'(\eta)=-\eta\,\mathrm{Var}_{q_\eta}(V)\le -\eta_{\min}\sigma_{\min}^2$ with variance lower bound $\sigma_{\min}^2>0$, the inverse function theorem implies $|\hat\eta-\eta^\star|\le\frac{|\epsilon-D_{\mathrm{KL}}(q_{\hat\eta}\Vert\hat p)|}{\eta_{\min}\sigma_{\min}^2}$. Substituting into the quadratic bound, $\phi(\eta^\star)-\phi(\hat\eta)\le\frac{(M-m)^2}{8\,\eta_{\min}^3\sigma_{\min}^4}\big(\epsilon-D_{\mathrm{KL}}(q_{\hat\eta}\Vert\hat p)\big)^2$. Thus penalizing or projecting KL residuals, as in our training objective, directly controls approximation error.

\paragraph{Operator-level performance gap.}  
Let $(\mathcal T_{\mathrm{KL}}V)(s)=\max_a\sup_{\eta\ge 0}\{r+\gamma(\Psi_\eta(V)-\epsilon/\eta)\}$ denote the exact operator, and $\widetilde{\mathcal T}$ the approximate one. For any $V$, $|(\mathcal T_{\mathrm{KL}}V)(s)-(\widetilde{\mathcal T}V)(s)|\le\gamma\max_{a}(\phi(\eta^\star)-\phi(\hat\eta(s,a)))\le\gamma C_\phi\max_{a}|\hat\eta(s,a)-\eta^\star(s,a)|^2$, with $C_\phi=(M-m)^2/(8\,\eta_{\min})$. By contraction perturbation bounds, the fixed points $V^\star$ and $\tilde V$ satisfy $\|V^\star-\tilde V\|_\infty\le\frac{\gamma}{1-\gamma}C_\phi\sup_{s,a}|\hat\eta(s,a)-\eta^\star(s,a)|^2$. Hence, if AdvNet’s output error is uniformly bounded by $\delta$, the resulting value function deviation is at most $O(\delta^2)$.

\paragraph{Summary.}  
(i) The optimal $\eta^\star$ is inherently state--action dependent, motivating AdvNet as a natural approximation.  
(ii) Contraction of the robust Bellman operator is preserved for any approximate $\hat\eta$.  
(iii) Approximation error in $\eta$ only induces a \emph{quadratic} gap in the robust value.  
(iv) KL residuals provide a direct and quantifiable proxy for approximation accuracy.  
Together, these results rigorously justify the use of AdvNet: even without exact alignment to $\eta^\star$, convergence is preserved and approximation error propagates only as a second-order effect.


\section{Experiment Details and Additional Results}
\label{app:exp-app}

\subsection{Robustness to Dynamics Uncertainty}

We complement the \texttt{Walker2d} results in the main text with robustness sweeps on \texttt{Ant}, \texttt{Halfcheetah}, and \texttt{Hopper}. Across all environments RAPO consistently attains the best trade-off between ID performance and OOD stability, while maintaining the strict ordering RAPO $>$ PPO+DR $>$ PPO that we already observed in \texttt{Walker2d}. The relative gaps between algorithms vary with morphology. For \texttt{Ant}, the curves are closer overall, yet RAPO still sustains high returns at OOD scales where baselines collapse. \texttt{Halfcheetah} shows sharper degradation of PPO once parameters move outside the training distribution, while PPO+DR delays the collapse but remains vulnerable; RAPO keeps returns high with only mild declines. \texttt{Hopper} is the most sensitive to dynamics changes: PPO and RARL both degrade quickly in OOD, EPOpt is conservative but stable, and RAPO preserves performance substantially better than all baselines.  

These experiments confirm several observations made on \texttt{Walker2d}. First, robustness does not come for free: RAPO evaluates multiple dynamics models per update and applies Boltzmann reweighting, so although rollout sample counts are fixed, per-step compute is higher than PPO. Second, Boltzmann reweighting concentrates learning on difficult parameter regimes, which produces informative non-monotonic effects such as inertia scale $0.5$ outperforming $0.6$ or torque showing left-OOD gains when the sampler emphasizes challenging configurations. Third, friction is effectively flat for scales $\geq 1.0$ across all environments. Finally, even under uniform domain randomization, ID-region returns are not uniform: performance decreases toward the boundaries of the prior, underscoring that uniform sampling does not equal uniform task difficulty.  

\begin{figure*}[t]
    \centering
    \includegraphics[width=1.0\linewidth]{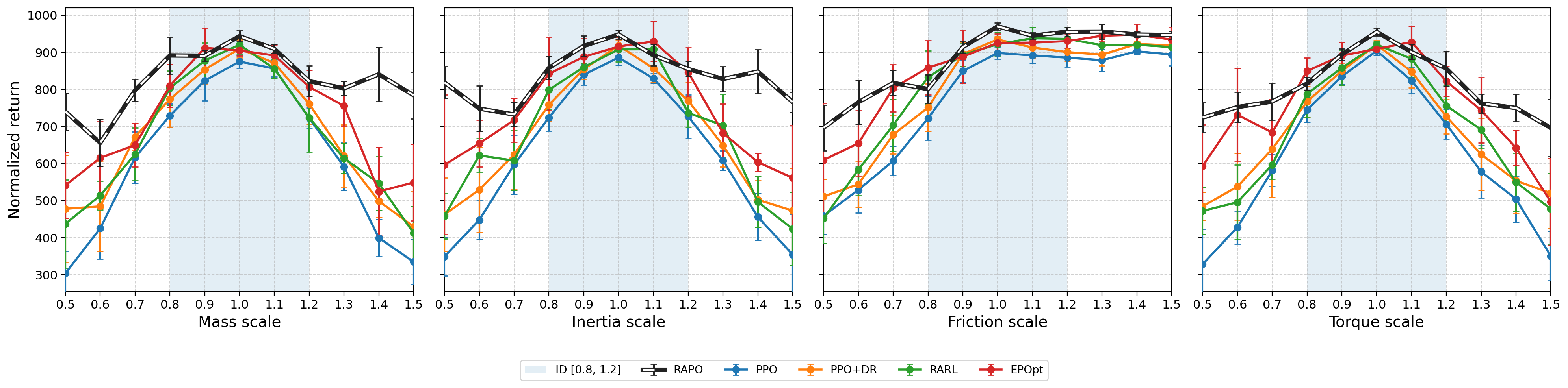}
    \par\vspace{0.3em}
    (a) \texttt{Ant}
    \par\vspace{1em}

    \includegraphics[width=1.0\linewidth]{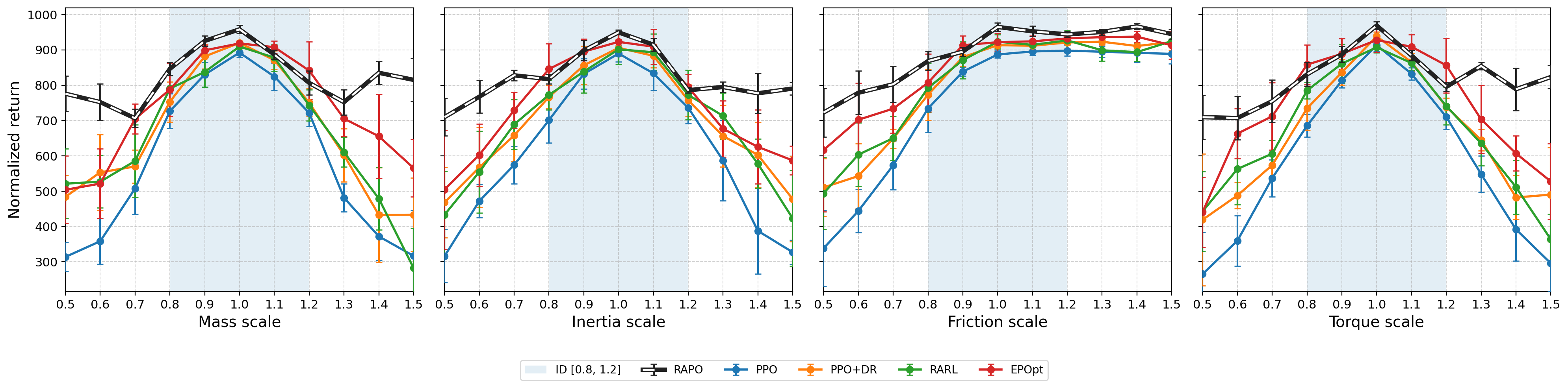}
    \par\vspace{0.3em}
    (b) \texttt{Halfcheetah}
    \par\vspace{1em}

    \includegraphics[width=1.0\linewidth]{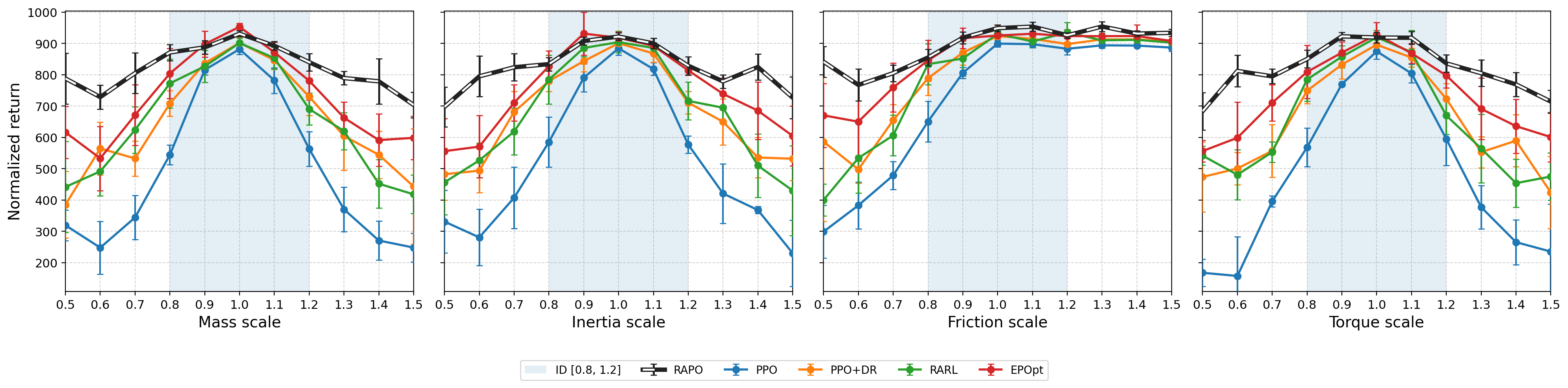}
    \par\vspace{0.3em}
    (c) \texttt{Hopper}

    \caption{Robustness curves across mass, inertia, friction, and torque scaling. RAPO consistently outperforms baselines in OOD regions while retaining strong ID performance.}
    \label{fig:ant-halfcheetah-hopper-robustness}
\end{figure*}


\subsection{Role of AdvNet and Boltzmann Reweighting}

We extend the ablation study beyond \texttt{Walker2d} by evaluating RAPO and its variants on \texttt{Ant}, \texttt{Halfcheetah}, and \texttt{Hopper}. In all cases, training budgets and evaluation protocols are identical; the only differences are whether the trajectory adversary (AdvNet) and/or Boltzmann reweighting are active. Full RAPO consistently achieves the most stable curves in both ID and OOD regions. Removing one mechanism reduces robustness, and removing both produces the steepest OOD collapse, showing that trajectory stress testing and tilted sampling are complementary.

\emph{RAPO without AdvNet} disables the adversarial head and sets its loss weight to zero, while retaining Boltzmann tilting. For each update the dual temperature $\eta^\star$ is obtained via convex dual root finding so that the tilted weights $w_k=\tfrac{\rho_k \exp(\eta^\star \Delta_k)}{\sum_j \rho_j \exp(\eta^\star \Delta_j)}$ satisfy the KL constraint, where $\Delta_k$ are Monte Carlo critic shortfalls. This variant captures globally adverse dynamics but misses trajectory-specific fragilities, leading to wider confidence intervals and occasional local dips. \emph{RAPO without Boltzmann reweighting} fixes $w=\rho$, i.e., the sampler matches the prior distribution over dynamics, while AdvNet continues to perturb trajectories. This variant maintains mid-range OOD competitiveness but decays symmetrically at extremes because harmful models are undersampled. \emph{RAPO without both} fixes $w=\rho$ and removes AdvNet, effectively reducing training to PPO on the un-tilted mixture, which results in brittle tails centered near the nominal scale.

Across environments the patterns are consistent with \texttt{Walker2d}. In \texttt{Ant}, RAPO sustains the highest OOD returns, while ablations fall off more sharply. In \texttt{Halfcheetah}, PPO-based variants degrade rapidly once parameters move OOD, but RAPO without AdvNet or without reweighting still preserve partial robustness; the combined removal collapses fastest. \texttt{Hopper} is especially sensitive to dynamics changes: RAPO provides stable OOD returns, whereas the ablations show steeper declines, particularly when both mechanisms are disabled. These results highlight that AdvNet and Boltzmann reweighting address different aspects of robustness—state–action dependent fragility versus global coverage—and that both are needed for the strongest performance.

\begin{figure*}[t]
    \centering
    \includegraphics[width=1.0\linewidth]{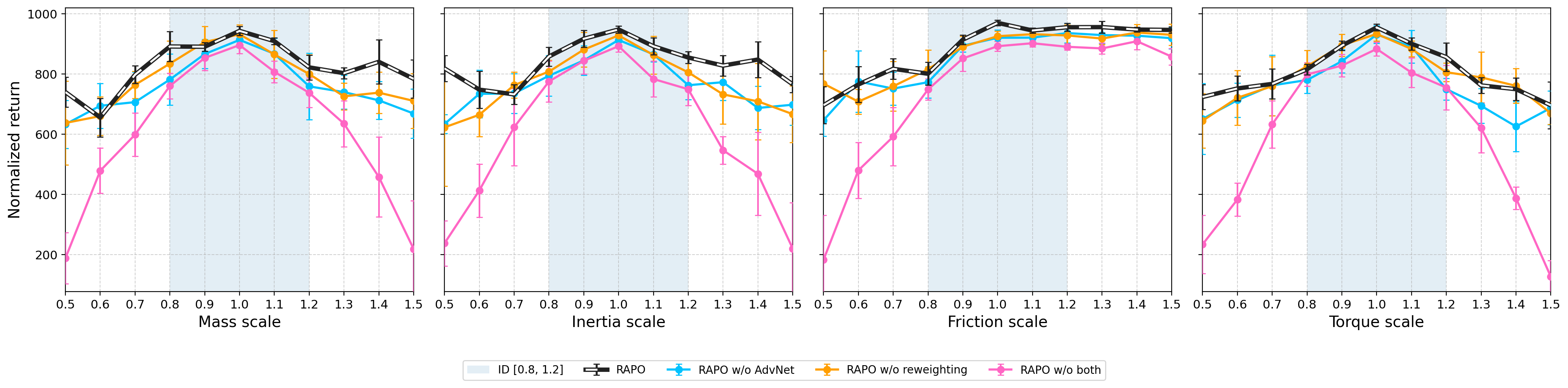}\\[0.5em]
    (a) \texttt{Ant}
    \par\vspace{1em}
    \includegraphics[width=1.0\linewidth]{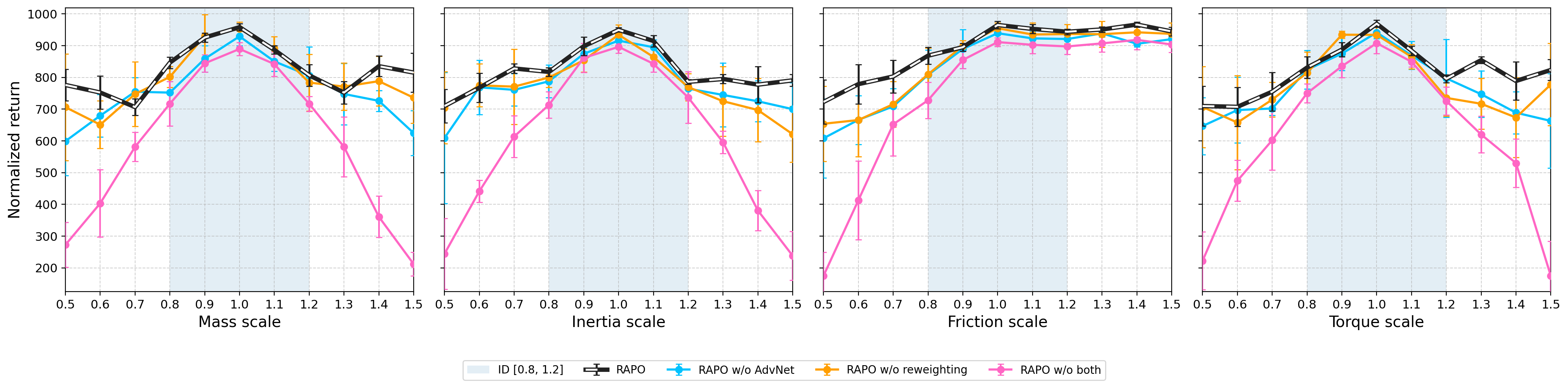}\\[0.5em]
    (b) \texttt{Halfcheetah}
    \par\vspace{1em}
    \includegraphics[width=1.0\linewidth]{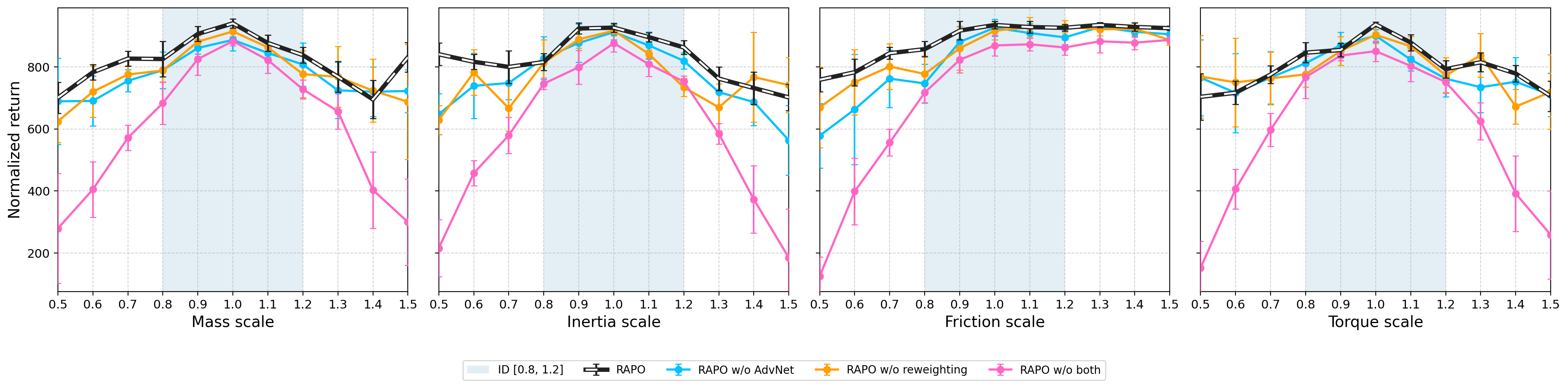}\\[0.5em]
    (c) \texttt{Hopper}
    \caption{Ablation of RAPO components on \texttt{Ant}, \texttt{Halfcheetah}, and \texttt{Hopper}. Full RAPO achieves the best OOD robustness. Removing AdvNet or Boltzmann reweighting weakens robustness, and removing both leads to the sharpest collapse.}
    \label{fig:ablation-app}
\end{figure*}


To further illustrate robustness, Fig.~\ref{fig:heatmaps-app} reports value-function heatmaps for \texttt{Ant}, \texttt{Halfcheetah}, and \texttt{Hopper}. For each update $u$ and parameter type, we sweep a dense grid of scales $\alpha\in[0.5,1.5]$ while holding the other parameters at $1.0$, and compute $V_u(\alpha)$ from one-step critic rollouts. 
Concretely, given a batch of states $\{s_b\}_{b=1}^B$ sampled from the replay buffer, we evaluate actions $a_b\sim\pi_u(\cdot|s_b)$, step the simulator with parameter scale $\alpha$ to obtain $s'_b$, and form
$V_u(\alpha)=\tfrac{1}{B}\sum_{b=1}^B \big(r(s_b,a_b;\alpha)+\gamma\,\hat V_{\phi_u}(s'_b)\big)$,
where $\hat V_{\phi_u}$ is the learned critic at update $u$. This diagnostic is continuous even though RAPO training uses a finite $K$-mixture. Brighter colors correspond to higher values, and the blue trace marks $\arg\max_\alpha V_u(\alpha)$.  

Across all three environments, RAPO gradually raises the value floor over the entire scale range. 
Early updates exhibit dark bands at extreme OOD scales, which flatten and brighten as training progresses. 
Without Boltzmann reweighting ($w=\rho$), value remains concentrated around ID scales while OOD tails stay dark, reflecting poor coverage of adverse dynamics. 

\begin{figure*}[t]
    \centering
    \includegraphics[width=1.0\linewidth]{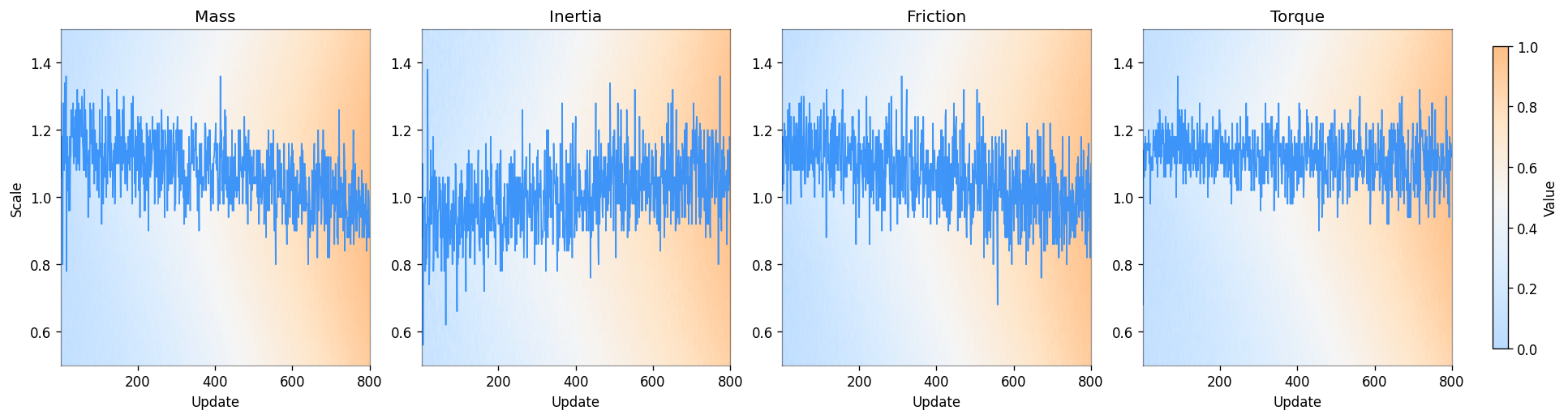}\\[0.5em]
    (a) \texttt{Ant}
    \par\vspace{1em}
    \includegraphics[width=1.0\linewidth]{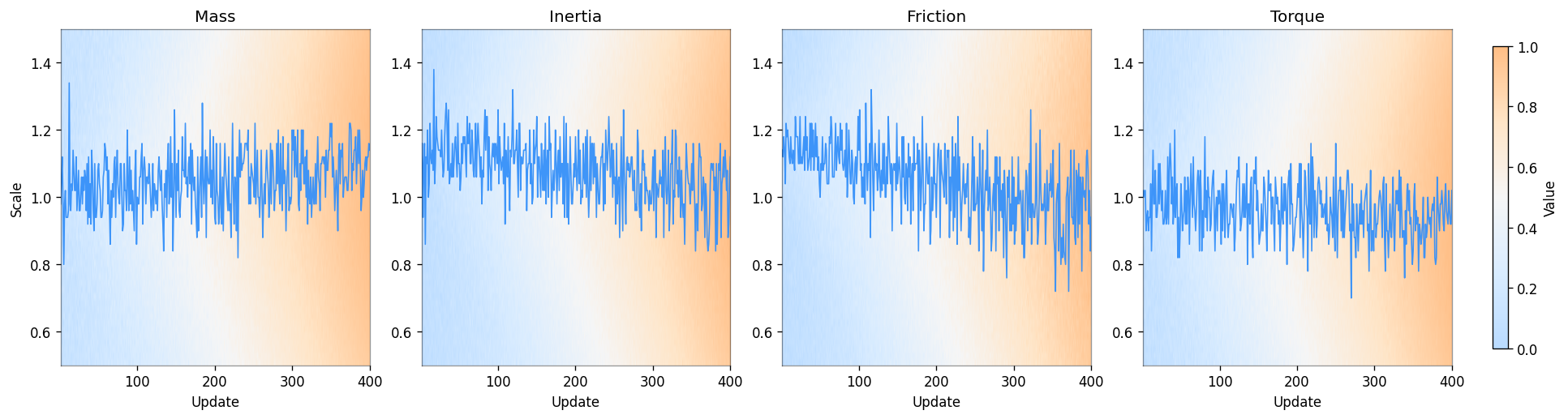}\\[0.5em]
    (b) \texttt{Halfcheetah}
    \par\vspace{1em}
    \includegraphics[width=1.0\linewidth]{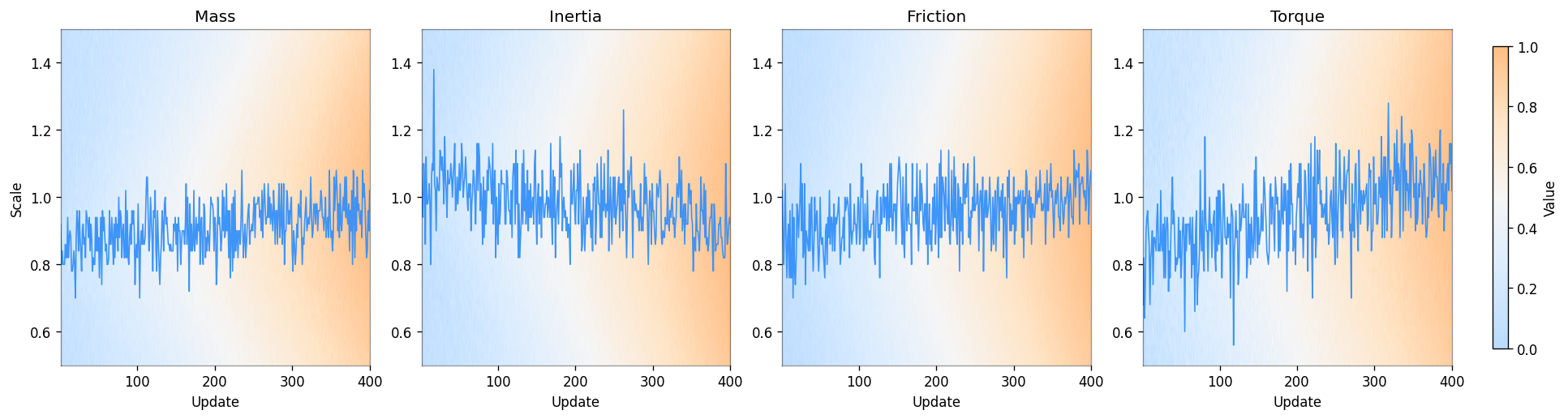}\\[0.5em]
    (c) \texttt{Hopper}
    \caption{Heatmaps of value estimates with and without Boltzmann reweighting. 
    RAPO progressively lifts the value floor across the entire range of scales, 
    while removing reweighting leaves OOD tails dark, indicating poor robustness.}
    \label{fig:heatmaps-app}
\end{figure*}


To complement the main-text heatmaps, we provide additional results for the variant without Boltzmann reweighting ($w=\rho$). In this setting, value estimates remain concentrated near ID scales while OOD tails stay persistently dark, confirming that Boltzmann tilting is essential for coverage of adverse dynamics. Unlike full RAPO, which gradually raises the value floor across the full parameter range, the no-reweighting variant leaves hard regimes under-explored and fragile.

\begin{figure*}[t]
    \centering
    \includegraphics[width=1.0\linewidth]{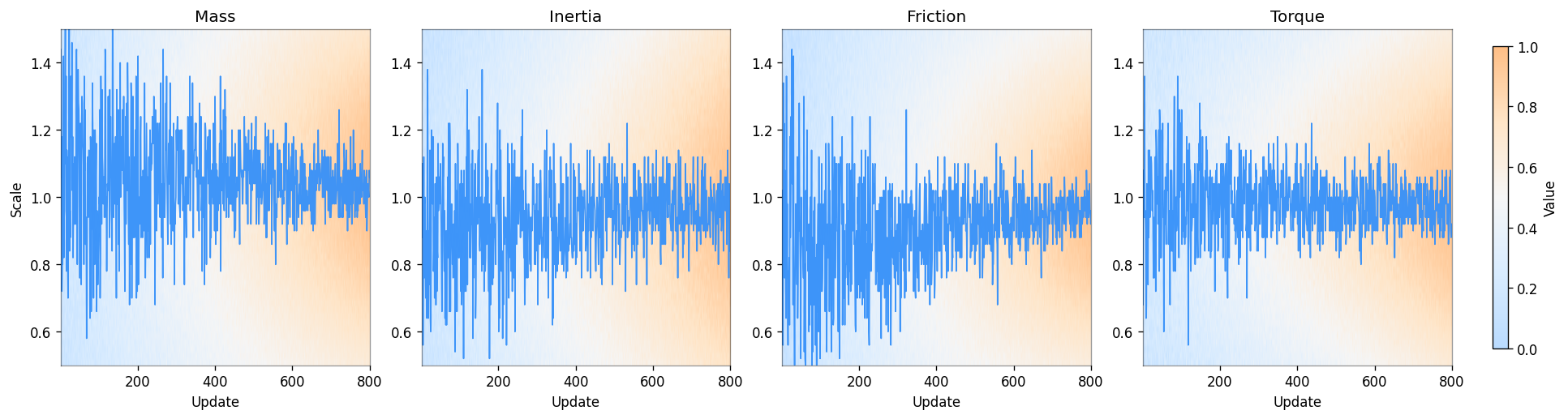}\\[0.5em]
    (a) \texttt{Ant}
    \par\vspace{1em}
    \includegraphics[width=1.0\linewidth]{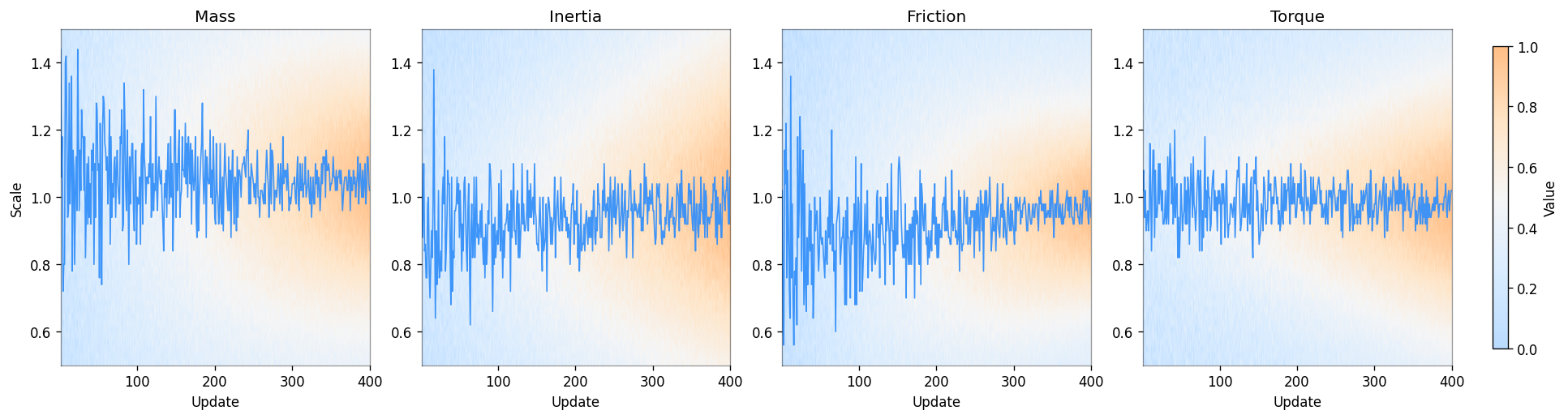}\\[0.5em]
    (b) \texttt{Halfcheetah}
    \par\vspace{1em}
    \includegraphics[width=1.0\linewidth]{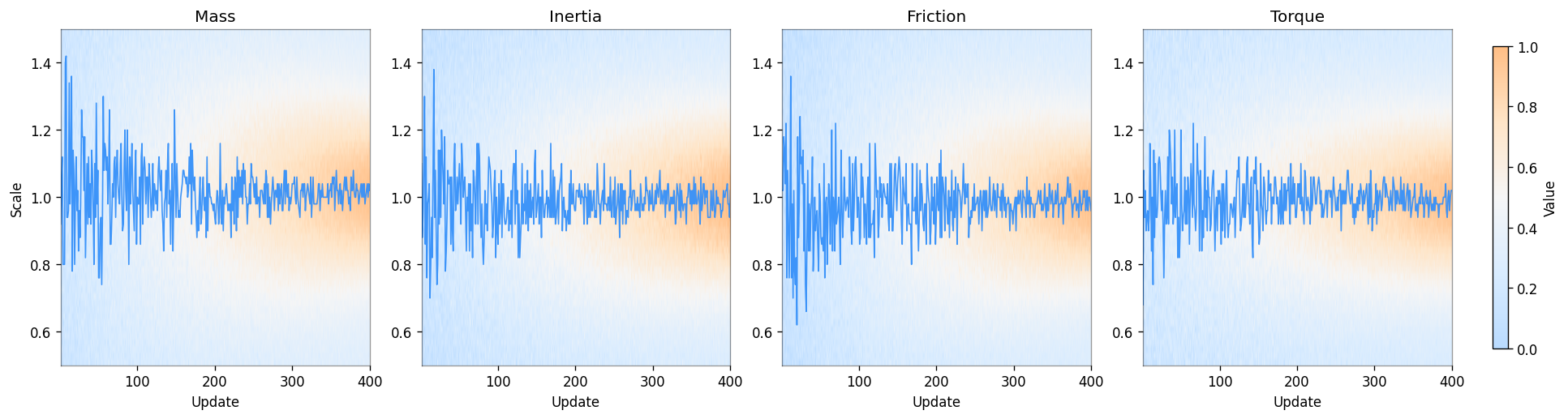}\\[0.5em]
    (c) \texttt{Hopper}
    \par\vspace{1em}
    \includegraphics[width=1.0\linewidth]{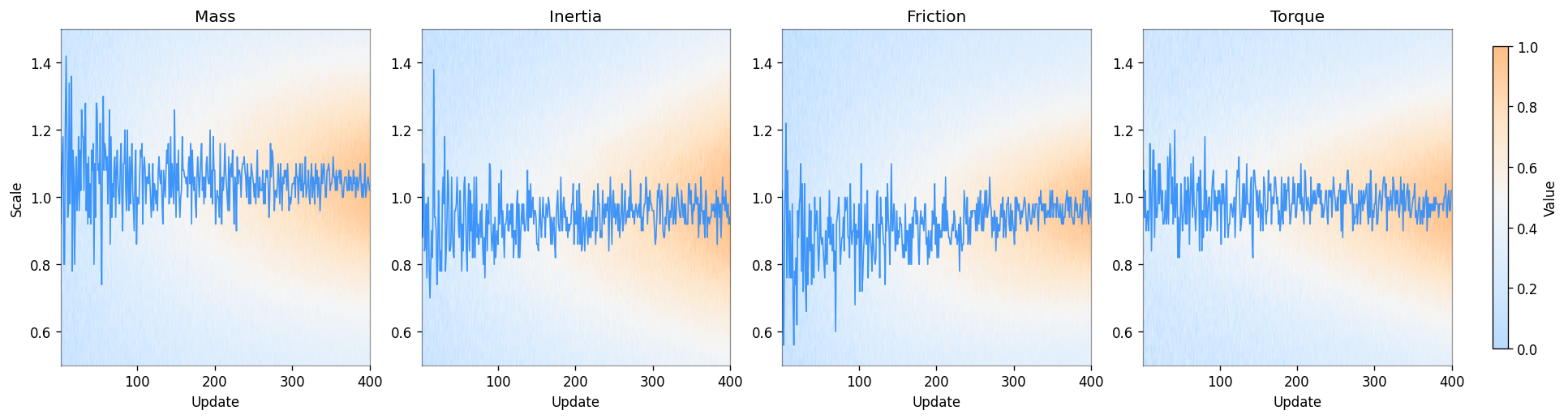}\\[0.5em]
    (d) \texttt{Walker2d}
    \caption{Heatmaps of value estimates without Boltzmann reweighting ($w=\rho$). 
    Compared to the full RAPO results in the main text, bright regions remain confined to ID scales and OOD tails stay dark, reflecting poor coverage of challenging dynamics.}
    \label{fig:heatmaps-no-rw}
\end{figure*}


\subsection{Sensitivity to Ensemble Size $K$ and Monte Carlo Samples $m$}

\begin{table}[t]
\centering
\caption{Sensitivity of RAPO to ensemble size $K$ (top) and Monte Carlo samples $m$ (bottom). Performance is normalized return (mean $\pm$ std over 5 seeds). All runs finish within $\le$ 45 min.}
\label{tab:km-sensitivity}
\begin{tabular}{lcccc}
\toprule
\multicolumn{5}{c}{\textbf{Ensemble size $K$ sensitivity ($m=16$)}} \\
Env & $K=4$ & $K=8$ & $K=16$ & $K=32$ \\
\midrule
Hopper      & 0.72 $\pm$ .03 & 0.76 $\pm$ .02 & 0.78 $\pm$ .02 & 0.77 $\pm$ .03 \\
Walker2d    & 0.65 $\pm$ .06 & 0.70 $\pm$ .04 & 0.73 $\pm$ .05 & 0.75 $\pm$ .05 \\
HalfCheetah & 0.60 $\pm$ .05 & 0.65 $\pm$ .04 & 0.67 $\pm$ .03 & 0.67 $\pm$ .04 \\
Ant         & 0.50 $\pm$ .06 & 0.58 $\pm$ .04 & 0.62 $\pm$ .03 & 0.61 $\pm$ .03 \\
\midrule\midrule
\multicolumn{5}{c}{\textbf{Monte Carlo samples $m$ sensitivity ($K=16$)}} \\
Env & $m=4$ & $m=8$ & $m=16$ & $m=32$ \\
\midrule
Hopper      & 0.74 $\pm$ .05 & 0.76 $\pm$ .03 & 0.77 $\pm$ .02 & 0.77 $\pm$ .02 \\
Walker2d    & 0.68 $\pm$ .06 & 0.70 $\pm$ .04 & 0.72 $\pm$ .03 & 0.72 $\pm$ .03 \\
HalfCheetah & 0.62 $\pm$ .05 & 0.65 $\pm$ .04 & 0.66 $\pm$ .03 & 0.66 $\pm$ .03 \\
Ant         & 0.55 $\pm$ .07 & 0.58 $\pm$ .05 & 0.60 $\pm$ .03 & 0.60 $\pm$ .03 \\
\bottomrule
\end{tabular}
\end{table}


\subsection{Quadrotor Simulation Details}

\paragraph{Simulation Platform.}
All evaluations are conducted in high-fidelity simulation using the MuJoCo physics engine, following the RoVerFly framework~\cite{kim2025roverfly}. The quadrotor dynamics include motor lag, input delay, and randomized aerodynamic effects to reduce the sim-to-real gap. A RL policy is trained with task and domain randomization, enabling robust performance across varying payload masses and cable lengths without controller switching. Nominal parameters are summarized in Tab.~\ref{tab:quadrotor_param}.

\begin{table}[t]
    \centering
    \begin{tabular}{l|c}
    \toprule
    Parameters & Values \\
    \midrule
    Mass & $0.280~\text{kg}$ \\
    Inertia around $x,y$ & $2.36 \times 10^{-4}~\text{kg}\cdot\text{m}^2$ \\
    Inertia around $z$ & $3.03 \times 10^{-4}~\text{kg}\cdot\text{m}^2$ \\
    Arm length & $0.058~\text{m}$ \\
    Propeller thrust factor & $1.145 \times 10^{-7}~\text{N}\cdot\text{s}^2$ \\
    Payload mass & $0.025~\text{kg}$ \\
    \bottomrule
    \end{tabular}
    \vspace{0.6em}
    \caption{Nominal quadrotor parameters.}
    \label{tab:quadrotor_param}
\end{table}

\paragraph{Failure Definition.}
A rollout is marked as a crash if any of the following occurs: (i) ground contact ($z<0.05$), (ii) Euler angles exceeding $60^\circ$, or (iii) position error greater than $1~\text{m}$. This definition captures both catastrophic instability and sustained loss of tracking in simulation.


\section{Relationship Between Trajectory and Model-Level KL Budgets}
\label{app:traj-model-kl}

In this section, we formalize the relationship between RAPO's two divergence budgets: the \emph{trajectory-level} budget $\epsilon$ (controlling exponential tilting over next-states/trajectories under a fixed model) and the \emph{model-level} budget $\kappa$ (controlling Boltzmann reweighting over an ensemble of dynamics models). We show both arise from a single robust optimization posed on the \emph{joint} law over model indices and trajectories. By the chain rule of relative entropy, the joint KL cleanly decomposes into a model term and a (mean) trajectory term. This yields a principled trade-off between $\epsilon$ and $\kappa$, and justifies RAPO’s two-temperature dual ($\eta$ for trajectory tilting, $\beta$ for model reweighting). We provide full statements and proofs.

\subsection{Setup: Joint Nominal Law and Adversary}
Let $\Omega$ be a measurable set of model indices with prior $\rho$ (discrete or continuous), and let $P_\omega$ denote the nominal trajectory law under model $\omega\in\Omega$ (e.g., a discounted path measure induced by the nominal transition kernel for that model). The joint nominal law on $(\omega,\tau)$ is $\hat P(\omega,\tau)=\rho(\omega)P_\omega(\tau)$. An adversary selects a joint law $Q$ on $(\omega,\tau)$, which we factor as $Q(\omega,\tau)=w(\omega)Q_\omega(\tau)$, where $w$ is a probability measure on $\Omega$ and $Q_\omega$ is a probability measure on trajectories for each $\omega$. We use the following performance functional on trajectories (e.g., a discounted return or robust value surrogate): $R(\tau)\in\mathbb R$, bounded with $R(\tau)\in[m,M]$. The robust objective (minimization of return or maximization of cost) will be expressed as an adversarial expectation of $R(\tau)$ under $Q$.

\subsection{A Single Joint KL Budget and Its Exact Decomposition}
\begin{definition}[Joint KL budget]
Fix $B\ge 0$. The adversary is constrained by a single joint information budget $D_{\mathrm{KL}}(Q\|\hat P)\le B$.
\end{definition}

\begin{lemma}[Chain rule decomposition of the joint KL]
\label{lem:chain-rule}
For any factorization $Q(\omega,\tau)=w(\omega)Q_\omega(\tau)$ and $\hat P(\omega,\tau)=\rho(\omega)P_\omega(\tau)$,  
$D_{\mathrm{KL}}(Q\|\hat P)=D_{\mathrm{KL}}(w\|\rho)+\mathbb E_{\omega\sim w}[D_{\mathrm{KL}}(Q_\omega\|P_\omega)]$.
\end{lemma}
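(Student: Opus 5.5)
The plan is to prove the identity directly from the definition of relative entropy via the Radon--Nikodym derivative, factoring the likelihood ratio of the joint laws into a marginal part and a conditional part. First I handle the degenerate case. If $Q\not\ll\hat P$, I will show that either $w\not\ll\rho$, or the set of $\omega$ with $Q_\omega\not\ll P_\omega$ has positive $w$-measure. Then both sides equal $+\infty$. The converse direction is checked the same way. Hence it suffices to assume $Q\ll\hat P$. That forces $w\ll\rho$ and $Q_\omega\ll P_\omega$ for $w$-a.e.\ $\omega$, and these are the cases where every term is defined.

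In the absolutely continuous case, the key step is the density factorization
\[
\frac{dQ}{d\hat P}(\omega,\tau)=\frac{dw}{d\rho}(\omega)\,\frac{dQ_\omega}{dP_\omega}(\tau)\quad \hat P\text{-a.s.},
\]
which I verify by integrating the right-hand side against $\hat P$ over product rectangles $A\times C$ and applying Fubini. Taking logarithms and integrating against $Q$, I will split $D_{\mathrm{KL}}(Q\|\hat P)=\mathbb E_Q[\log\frac{dw}{d\rho}(\omega)]+\mathbb E_Q[\log\frac{dQ_\omega}{dP_\omega}(\tau)]$. The first expectation depends only on $\omega$, so it equals $\mathbb E_w[\log\frac{dw}{d\rho}]=D_{\mathrm{KL}}(w\|\rho)$. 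For the second, I disintegrate $Q$ as $w(d\omega)Q_\omega(d\tau)$; the inner integral is $D_{\mathrm{KL}}(Q_\omega\|P_\omega)$, which gives $\mathbb E_{\omega\sim w}[D_{\mathrm{KL}}(Q_\omega\|P_\omega)]$. In the discrete case this is just $\log\frac{w(\omega)Q_\omega(\tau)}{\rho(\omega)P_\omega(\tau)}=\log\frac{w}{\rho}+\log\frac{Q_\omega}{P_\omega}$ summed against $Q$.

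The main obstacle is justifying the split of the expectation when one of the terms could be $+\infty$, or when the logarithms are not integrable, so that linearity of expectation fails. I will handle this by writing $x\log x$-type integrands relative to the reference measure. Each KL term is the integral of a nonnegative function, namely $f\log f-f+1$ with $f$ the relevant density, and the extra $-f+1$ pieces integrate to zero at each level. With nonnegative integrands, Tonelli applies and the decomposition holds in $[0,\infty]$ without integrability assumptions. Measurability of $\omega\mapsto D_{\mathrm{KL}}(Q_\omega\|P_\omega)$ follows from taking $Q_\omega$ and $P_\omega$ as measurable kernels, which the setup implicitly assumes.
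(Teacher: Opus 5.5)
Your proposal is correct and follows the same route as the paper: factor the joint density ratio as $\frac{dw}{d\rho}(\omega)\,\frac{dQ_\omega}{dP_\omega}(\tau)$, take logarithms, and split the $Q$-expectation into a marginal term and an averaged conditional term, with considerably more measure-theoretic care than the paper's one-line ``factoring and rearranging.'' One precision point in your Tonelli step: with $F=gh$, the pointwise expansion of $F\log F-F+1$ is $(g\log g-g+1)+g(h\log h-h+1)$ plus a cross term $g\log g\,(h-1)$ that is not sign-definite. So first use Tonelli to write the joint integral as an iterated one, and split only after the inner $\tau$-integral, since the cross term integrates to zero for each fixed $\omega$.
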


\begin{proof}
By definition, $D_{\mathrm{KL}}(Q\|\hat P)=\int Q(\omega,\tau)\log\frac{Q(\omega,\tau)}{\hat P(\omega,\tau)}\,\mathrm d\tau\,\mathrm d\omega$. Factoring and rearranging gives $D_{\mathrm{KL}}(Q\|\hat P)=D_{\mathrm{KL}}(w\|\rho)+\mathbb E_{\omega\sim w}[D_{\mathrm{KL}}(Q_\omega\|P_\omega)]$.
\end{proof}

\begin{corollary}[Additive budget split and worst-case bounds]
\label{cor:additive-split}
Under the single-budget constraint $D_{\mathrm{KL}}(Q\|\hat P)\le B$, any feasible $(w,\{Q_\omega\})$ satisfies $D_{\mathrm{KL}}(w\|\rho)\;+\;\mathbb E_{\omega\sim w}[D_{\mathrm{KL}}(Q_\omega\|P_\omega)]\;\le\; B$. In particular, if one enforces $D_{\mathrm{KL}}(w\|\rho)\le \kappa$ and $D_{\mathrm{KL}}(Q_\omega\|P_\omega)\le \epsilon$ for $w$-almost every $\omega$, then $\kappa+\epsilon\le B$ implies joint feasibility. Conversely, for any feasible $Q$, we must have $\kappa\le B$ and $\mathbb E_w[D_{\mathrm{KL}}(Q_\omega\|P_\omega)]\le B-\kappa$.
\end{corollary}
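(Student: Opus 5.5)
The plan is to derive all three claims directly from Lemma~\ref{lem:chain-rule}, which gives the exact identity $D_{\mathrm{KL}}(Q\|\hat P)=D_{\mathrm{KL}}(w\|\rho)+\mathbb E_{\omega\sim w}[D_{\mathrm{KL}}(Q_\omega\|P_\omega)]$ for any factorization. Both summands are nonnegative: the first is a KL divergence, and the second is a $w$-average of KL divergences, each nonnegative by Gibbs' inequality. First, for any feasible $Q$, I substitute the identity into $D_{\mathrm{KL}}(Q\|\hat P)\le B$, which immediately gives the stated inequality $D_{\mathrm{KL}}(w\|\rho)+\mathbb E_w[D_{\mathrm{KL}}(Q_\omega\|P_\omega)]\le B$.

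Second, for the sufficiency direction, assume $D_{\mathrm{KL}}(w\|\rho)\le\kappa$ and $D_{\mathrm{KL}}(Q_\omega\|P_\omega)\le\epsilon$ for $w$-a.e.\ $\omega$. Then $\mathbb E_w[D_{\mathrm{KL}}(Q_\omega\|P_\omega)]\le\epsilon$, since an average of a function bounded $w$-a.s.\ by $\epsilon$ is at most $\epsilon$; the null set contributes nothing. The lemma then gives $D_{\mathrm{KL}}(Q\|\hat P)\le\kappa+\epsilon\le B$, which is joint feasibility. I will note that the product $Q(\omega,\tau)=w(\omega)Q_\omega(\tau)$ is a genuine probability measure whenever $\omega\mapsto Q_\omega$ is a measurable kernel, so this definition is legitimate.

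Third, for the converse, I interpret $\kappa$ as the model-level divergence realized by $Q$, namely $\kappa:=D_{\mathrm{KL}}(w\|\rho)$, or any upper bound on it that is itself at most $B$. Nonnegativity of the trajectory term in the identity gives $D_{\mathrm{KL}}(w\|\rho)\le B$. Rearranging the identity gives $\mathbb E_w[D_{\mathrm{KL}}(Q_\omega\|P_\omega)]\le B-D_{\mathrm{KL}}(w\|\rho)$.

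The main obstacle is this converse as literally worded. If $\kappa$ is just an imposed upper bound with $D_{\mathrm{KL}}(w\|\rho)<\kappa$, the claimed bound $\mathbb E_w[\cdot]\le B-\kappa$ is not implied by feasibility. I will therefore state explicitly that $\kappa$ denotes the realized model budget, equivalently that the model constraint is tight. This is consistent with the complementary-slackness condition \ref{eq:beta-matching}, under which Boltzmann reweighting attains $D_{\mathrm{KL}}(w^\star_\beta\|\rho)=\kappa$.

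A secondary technical point is that the chain rule requires $Q\ll\hat P$, which forces $w\ll\rho$ and $Q_\omega\ll P_\omega$ for $w$-a.e.\ $\omega$. If any of these fails, both sides of the identity are $+\infty$ and the statements hold trivially. I will handle this case first so that the rest of the argument can assume all densities exist.
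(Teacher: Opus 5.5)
Your proof is correct and follows the route the paper intends: the paper gives no separate proof and treats the corollary as an immediate consequence of the chain-rule identity in Lemma~\ref{lem:chain-rule} together with nonnegativity of both KL terms, which is exactly what you do. In the converse you read $\kappa$ as the realized model-level divergence $\kappa:=D_{\mathrm{KL}}(w\|\rho)$; this is the reading the paper itself adopts in the Interpretation paragraph that follows, and without it the converse would be false.
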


\paragraph{Interpretation.}
Lemma shows that a single joint KL budget $B$ naturally decomposes into a model-level information shift $\kappa:=D_{\mathrm{KL}}(w\|\rho)$ and an average trajectory-level information shift $\mathbb E_w[\cdot]$. Hence, RAPO’s two budgets $(\kappa,\epsilon)$ can be viewed as a principled split of a single resource $B$.


\subsection{Joint Robust Optimization and Separable Dual}
Consider the joint robust program $\inf_{Q:\,D_{\mathrm{KL}}(Q\|\hat P)\le B}\ \mathbb E_{(\omega,\tau)\sim Q}[R(\tau)]$. By Lemma~\ref{lem:chain-rule}, introducing the factorization $Q=w\cdot Q_\omega$ and first-order Lagrange multipliers yields separability across \emph{model} and \emph{trajectory} levels.

\begin{theorem}[Dual of the joint program: two temperatures]
Assume $R$ is bounded. The Lagrangian dual yields the equivalent unconstrained maximization over $(\beta,\eta)\ge 0$:  
$\inf_{Q:\,D_{\mathrm{KL}}(Q\|\hat P)\le B}\ \mathbb E_Q[R] = \sup_{\beta,\eta\ge 0}\{-\tfrac{1}{\beta}\log \mathbb E_{\omega\sim\rho}[e^{-\beta h_\eta(\omega)}]-B/\beta\}$,  
where $h_\eta(\omega):=-(1/\eta)\log \mathbb E_{\tau\sim P_\omega}[e^{-\eta R(\tau)}]$ is the trajectory-level entropic risk. The maximizers $(\beta^\star,\eta^\star)$ induce Boltzmann reweighting $w_{\beta^\star}(\omega)\propto\rho(\omega)e^{-\beta^\star h_{\eta^\star}(\omega)}$ and exponential tilting $Q^{\eta^\star}_\omega(d\tau)\propto e^{-\eta^\star R(\tau)}P_\omega(d\tau)$, with complementary-slackness condition $D_{\mathrm{KL}}(w_{\beta^\star}\|\rho)+\mathbb E_{\omega\sim w_{\beta^\star}}[D_{\mathrm{KL}}(Q^{\eta^\star}_\omega\|P_\omega)]=B$.
\end{theorem}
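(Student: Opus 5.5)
The plan is to sandwich the joint primal value $v(B):=\inf_{Q:\,D_{\mathrm{KL}}(Q\|\hat P)\le B}\mathbb E_Q[R]$ between the two-temperature objective $G(\beta,\eta):=-\tfrac1\beta\log\mathbb E_{\rho}[e^{-\beta h_\eta(\omega)}]-B/\beta$ evaluated at suitable points. The first step applies the one-level KL duality of App.~\ref{app:kl-dual} directly to the joint law $\hat P$. Since $R\in[m,M]$ is bounded and Slater holds, this gives $v(B)=\sup_{\lambda\ge0}\{-\tfrac1\lambda\log\mathbb E_{\hat P}[e^{-\lambda R}]-B/\lambda\}$, with optimizer $Q^\lambda\propto e^{-\lambda R}\hat P$ and a tight budget. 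Because $\hat P=\rho\otimes P_\omega$, the tower property gives $\mathbb E_{\hat P}[e^{-\lambda R}]=\mathbb E_\rho\big[\mathbb E_{P_\omega}[e^{-\lambda R}]\big]=\mathbb E_\rho[e^{-\lambda h_\lambda(\omega)}]$. Hence $G(\lambda,\lambda)$ is exactly the joint dual objective, and $v(B)=\sup_\lambda G(\lambda,\lambda)\le\sup_{\beta,\eta}G(\beta,\eta)$.

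On the diagonal, $e^{-\lambda R}\hat P$ factors as $\rho(\omega)\,\mathbb E_{P_\omega}[e^{-\lambda R}]\cdot\big(e^{-\lambda R}P_\omega/\mathbb E_{P_\omega}[e^{-\lambda R}]\big)$. The first factor is the Boltzmann weight $w_\lambda\propto\rho\,e^{-\lambda h_\lambda}$, and the second is the tilt $Q^\lambda_\omega$. This yields the stated reweighting and tilting forms. The complementary-slackness identity then follows from tightness $D_{\mathrm{KL}}(Q^{\lambda^\star}\|\hat P)=B$ combined with Lemma~\ref{lem:chain-rule}.

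For the reverse inequality I would use weak duality through Donsker--Varadhan (App.~\ref{app:boltzmann-dual}) at both levels. For any feasible $Q=w\,Q_\omega$ and any $\beta,\eta>0$, the trajectory level gives $\mathbb E_{Q_\omega}[R]\ge h_\eta(\omega)-\tfrac1\eta D_{\mathrm{KL}}(Q_\omega\|P_\omega)$, and the model level gives $\mathbb E_w[h_\eta]\ge-\tfrac1\beta\log\mathbb E_\rho[e^{-\beta h_\eta}]-\tfrac1\beta D_{\mathrm{KL}}(w\|\rho)$. Adding these and writing $a:=D_{\mathrm{KL}}(w\|\rho)$ and $b:=\mathbb E_w D_{\mathrm{KL}}(Q_\omega\|P_\omega)$, with $a+b\le B$ by Corollary~\ref{cor:additive-split}, yields
\[
\mathbb E_Q[R]\ge G(\beta,\eta)+\tfrac{B-a}{\beta}-\tfrac{b}{\eta}.
\]
Taking the infimum over $Q$ and then the supremum over $(\beta,\eta)$ would close the argument provided the remainder $\tfrac{B-a}{\beta}-\tfrac{b}{\eta}$ is nonnegative.

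The main obstacle is exactly this remainder. It is nonnegative whenever $\eta\ge\beta$, since then $\tfrac{B-a}{\beta}\ge\tfrac{b}{\beta}\ge\tfrac{b}{\eta}$. It can be negative when $\eta<\beta$. First I would try to show that $G(\beta,\eta)$ is nonincreasing as $\eta$ decreases below $\beta$, which would reduce the supremum to the cone $\eta\ge\beta$ already handled. That route is not available, however. As $\eta\to0$, $h_\eta(\omega)\to\mathbb E_{P_\omega}[R]$, so $\sup_\beta G(\beta,0^+)$ is the value of the problem that spends the entire budget $B$ on model reweighting alone. That value is generally strictly larger than $v(B)$, because the joint adversary can do better by also tilting trajectories. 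I therefore expect the unconstrained identity to hold only with the supremum read as attained on the diagonal $\beta=\eta$. Concretely, I would verify this with a two-model, two-trajectory example before committing to the full write-up. If the example confirms the gap, the correct statement is $v(B)=\sup_{\lambda}G(\lambda,\lambda)=\sup_{\eta\ge\beta\ge0}G(\beta,\eta)$, with the structural conclusions (Boltzmann weights, exponential tilts, complementary slackness) unchanged.
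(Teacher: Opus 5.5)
Your diagnosis is correct, and the problem lies in the statement, not in your argument. The paper's proof is essentially your first step: one multiplier $\lambda$ on the joint KL constraint, the Gibbs minimizer $Q^\lambda\propto\hat P e^{-R/\lambda}$, and factoring out the $\omega$-marginal. That argument has a single free parameter, so it only yields $v(B)=\sup_{\lambda}G(\lambda,\lambda)$. The paper then relabels $\lambda$ as both temperatures and never justifies decoupling them. It even does so inconsistently, writing $\eta=\lambda^{-1}$ but $\beta=\lambda$, whereas the marginal $\rho(\omega)\mathbb E_{P_\omega}[e^{-R/\lambda}]$ forces $\beta=\eta=\lambda^{-1}$. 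Your two-level Donsker--Varadhan bound is exactly the argument the decoupled claim would need, and you correctly found where it fails.

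You do not need a two-model example to settle this, because the opposite of the monotonicity you first hoped for holds globally. For each $\omega$, the map $\eta\mapsto\log\mathbb E_{P_\omega}[e^{-\eta R}]$ is convex and vanishes at $0$, so its chord slope from the origin is nondecreasing. Hence $h_\eta(\omega)$ is nonincreasing in $\eta$, with $h_\eta(\omega)\to\mathbb E_{P_\omega}[R]$ as $\eta\downarrow 0$. Since $h\mapsto-\tfrac1\beta\log\mathbb E_\rho[e^{-\beta h}]$ is monotone, $G(\beta,\eta)$ is nonincreasing in $\eta$ for every $\beta$. So the full-quadrant supremum is approached as $\eta\downarrow0$ and equals $\inf_{w:\,D_{\mathrm{KL}}(w\|\rho)\le B}\mathbb E_w\big[\mathbb E_{P_\omega}[R]\big]$. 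That is the joint problem restricted to $Q=w\otimes P_\omega$, with no trajectory tilting at all.

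A single model already breaks the identity. If $\Omega$ is a singleton, then $G(\beta,\eta)=h_\eta-B/\beta$, so $\sup_{\beta,\eta}G=\mathbb E_P[R]$, the nominal value, for every $B$. Yet for $R\in\{0,1\}$ with probability $\tfrac12$ each and $B=\log 2$, one has $v(B)=0$.

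The same monotonicity gives your corrected statement at once. Every $(\beta,\eta)$ with $\eta\ge\beta$ satisfies $G(\beta,\eta)\le G(\beta,\beta)$, so $\sup_{\eta\ge\beta\ge0}G=\sup_\lambda G(\lambda,\lambda)=v(B)$. Your remainder analysis then becomes an independent primal check rather than a necessary step. It also shows that the full-quadrant ``maximizers'' degenerate: $Q_\omega=P_\omega$, and slackness collapses to $D_{\mathrm{KL}}(w\|\rho)=B$. So the Boltzmann, tilting, and complementary-slackness conclusions hold only at $\beta^\star=\eta^\star=\lambda^\star$, as you say.

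One caveat applies to both you and the paper: ``with a tight budget'' requires a finite maximizer $\lambda^\star>0$. This fails if $R$ is $\hat P$-a.s.\ constant. It also fails if $R$ has an atom of mass $p_0$ at its essential infimum and $B\ge\log(1/p_0)$. In those cases the supremum is only approached as $\lambda\to\infty$ and the constraint can be slack, so the complementary-slackness identity needs a non-degeneracy assumption.
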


\begin{proof}
Form the Lagrangian with multiplier $\lambda\ge 0$: $\mathcal L(Q,\lambda)=\int Q(\omega,\tau)R(\tau)\,d\omega d\tau + \lambda(\int Q\log\frac{Q}{\hat P}-B)$. Minimization over $Q$ gives $Q^\lambda(\omega,\tau)\propto\hat P(\omega,\tau)e^{-R(\tau)/\lambda}=\rho(\omega)P_\omega(\tau)e^{-R(\tau)/\lambda}$. Writing $\eta=\lambda^{-1}$ defines $h_\eta(\omega)=-(1/\eta)\log \mathbb E_{P_\omega}[e^{-\eta R}]$. The remaining dependence on $\omega$ yields $-(1/\beta)\log \mathbb E_{\rho}[e^{-\beta h_\eta(\omega)}]$ with $\beta=\lambda$. Strong duality holds by convexity and Slater’s condition. Optimality yields the reported Boltzmann/tilting forms and tightness of the joint KL.
\end{proof}

\paragraph{Mapping to RAPO.} A single joint constraint gives rise to two dual temperatures: $\eta\leftrightarrow$ trajectory tilting (soft minimum within a model), $\beta\leftrightarrow$ model reweighting (soft minimum across models). Identifying $D_{\mathrm{KL}}(w\|\rho)=\kappa$ and $\mathbb E_w[D_{\mathrm{KL}}(Q_\omega\|P_\omega)]$ as the trajectory budget (with worst-case upper bound $\epsilon$) gives the relation $\kappa+\epsilon\le B$.


\subsection{Trade-offs, Bounds, and Practical Allocations}
\begin{proposition}[Budget trade-off]
Let $B$ be fixed. If $D_{\mathrm{KL}}(w\|\rho)\le \kappa$ and $D_{\mathrm{KL}}(Q_\omega\|P_\omega)\le \epsilon$ for all $\omega$, then $\kappa+\epsilon\le B$. Conversely, any feasible $(w,\{Q_\omega\})$ satisfies $\kappa\le B$ and $\mathbb E_w[\text{traj-KL}]\le B-\kappa$.
\end{proposition}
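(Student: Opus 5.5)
The plan is to reduce everything to Lemma~\ref{lem:chain-rule} and Corollary~\ref{cor:additive-split}, reading the first assertion the way the surrounding text uses it. The per-level budgets $(\kappa,\epsilon)$ form an \emph{admissible split} of $B$ if every pair $(w,\{Q_\omega\})$ with $D_{\mathrm{KL}}(w\|\rho)\le\kappa$ and $D_{\mathrm{KL}}(Q_\omega\|P_\omega)\le\epsilon$ for all $\omega$ is jointly feasible, i.e.\ $D_{\mathrm{KL}}(Q\|\hat P)\le B$. Under this reading the claim is that such a split must satisfy $\kappa+\epsilon\le B$. Taken literally, the inequality cannot follow from upper bounds alone, since $\kappa,\epsilon$ could be arbitrarily loose. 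So I would state this interpretation explicitly at the start of the proof and then show that $\kappa+\epsilon\le B$ is both sufficient and necessary for admissibility.

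\emph{Sufficiency.} For any $Q=w\cdot Q_\omega$, Lemma~\ref{lem:chain-rule} gives
\[
D_{\mathrm{KL}}(Q\|\hat P)=D_{\mathrm{KL}}(w\|\rho)+\mathbb E_{w}[D_{\mathrm{KL}}(Q_\omega\|P_\omega)].
\]
Bounding the first term by $\kappa$ and the integrand pointwise by $\epsilon$, the expectation of a quantity bounded by $\epsilon$ is at most $\epsilon$. Hence the joint KL is at most $\kappa+\epsilon\le B$. This is essentially Corollary~\ref{cor:additive-split}.

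\emph{Necessity.} I will exhibit a pair for which both individual budgets are met with equality. Then the chain rule gives joint KL exactly $\kappa+\epsilon$, and admissibility forces $\kappa+\epsilon\le B$. For the model level, I take the Boltzmann family $w_\beta\propto\rho e^{-\beta h}$ from App.~\ref{app:boltzmann-dual}. For the trajectory level, I take the tilting family $Q_\omega^\eta\propto e^{-\eta R}P_\omega$ from App.~\ref{app:kl-dual}. Both KL maps are continuous, start at $0$ for zero temperature, and are strictly increasing and unbounded when the score is nonconstant, by the derivative formula $\beta\,\mathrm{Var}$ (resp.\ $\eta\,\mathrm{Var}$). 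The intermediate value theorem then yields $\beta$ with $D_{\mathrm{KL}}(w_\beta\|\rho)=\kappa$ and, for each $\omega$, $\eta(\omega)$ with $D_{\mathrm{KL}}(Q_\omega^{\eta(\omega)}\|P_\omega)=\epsilon$. The choice $\omega\mapsto\eta(\omega)$ should be measurable; I would obtain this from the implicit-function/Berge argument of App.~\ref{app:eta-sa-dependent}, or simply by choosing it on a countable partition.

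\emph{Converse.} Take any feasible $(w,\{Q_\omega\})$ and set $\kappa:=D_{\mathrm{KL}}(w\|\rho)$. Both terms in the chain-rule identity are nonnegative, by Gibbs' inequality applied to each. Dropping the second term gives $\kappa\le D_{\mathrm{KL}}(Q\|\hat P)\le B$. Rearranging gives $\mathbb E_w[D_{\mathrm{KL}}(Q_\omega\|P_\omega)]\le B-\kappa$.

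The main obstacle is the attainability step in the necessity argument. Unboundedness of the KL map along the tilting path can fail: if $h$ or $R$ is a.s.\ constant, if the argmin set has positive prior mass (finite $K$, where KL saturates at $\log(1/\rho(\arg\min))$), or if $\kappa$ or $\epsilon$ exceeds the largest KL achievable in the space. I would first try to bypass tilting altogether by using the richness of the spaces: any law $\nu\ll P_\omega$ gives a continuous path $t\mapsto(1-t)P_\omega+t\nu$ from $0$ to $D_{\mathrm{KL}}(\nu\|P_\omega)$. For degenerate or saturated cases I would add a hypothesis that $\kappa$ and $\epsilon$ lie in the attainable ranges, noting that otherwise the budget is vacuous and can be replaced by its attainable supremum without changing the feasible set.
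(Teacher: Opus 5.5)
Your sufficiency and converse steps coincide with the paper's proof, which is the one line ``immediate from Lemma~\ref{lem:chain-rule} and Corollary~\ref{cor:additive-split}''. That is, it uses the chain-rule identity plus nonnegativity of each KL term, with $\kappa$ read as the realized $D_{\mathrm{KL}}(w\|\rho)$ in the converse, exactly as you do.

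The difference is the first assertion. The corollary the paper cites only gives the design-rule direction: $\kappa+\epsilon\le B$ together with the per-level bounds implies joint feasibility. The ``Mapping to RAPO'' paragraph instead identifies $\kappa,\epsilon$ with the realized divergences of a feasible pair. Under that reading, $\kappa+\epsilon\le B$ is the chain rule itself and no construction is needed. You instead give the literal direction content through your admissibility notion and prove it by saturating both budgets. This yields a genuine iff characterization of admissible splits. It holds only under attainability and measurability hypotheses that are not in the statement, and you correctly flag them as necessary. For example, if $\Omega$ is a singleton, every $w$ equals $\rho$, so any $\kappa$ together with $\epsilon\le B$ is admissible even when $\kappa+\epsilon>B$.

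So your argument is a correct strengthening under those added assumptions. The saturation step can be dropped if you only want what the paper establishes. For the mixture-path bypass, note that continuity of $t\mapsto D_{\mathrm{KL}}((1-t)P_\omega+t\nu\|P_\omega)$ on $[0,1]$ comes from convexity plus lower semicontinuity, and requires $D_{\mathrm{KL}}(\nu\|P_\omega)<\infty$.
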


\begin{proof}
Immediate from Lemma~\ref{lem:chain-rule} and Corollary~\ref{cor:additive-split}.
\end{proof}

\paragraph{Design guidance.} With a single budget $B$, two allocation rules emerge: (i) Fixed split $\kappa=\alpha B$, $\epsilon=(1-\alpha)B$ for $\alpha\in[0,1]$, reflecting prior trust. (ii) Adaptive split via dual updates: $\beta\leftarrow\beta+\gamma_\beta(D_{\mathrm{KL}}(w\|\rho)-\kappa)$, $\eta\leftarrow\eta+\gamma_\eta(\mathbb E_w[\text{traj-KL}]-\epsilon)$, i.e., Robbins–Monro updates compatible with RAPO.


\subsection{Specializations: Hard vs.\ Soft, Per-Step vs.\ Trajectory}
\paragraph{Hard vs.\ soft.} The hard joint constraint $D_{\mathrm{KL}}(Q\|\hat P)\le B$ yields the explicit $-B/\beta$ term and implicit trajectory free energy $h_\eta(\omega)$. Replacing hard constraints by penalties removes explicit offsets but preserves the two-temperature structure. KL residuals then appear as regularizers in RAPO.

\paragraph{Trajectory vs.\ per-step KL.} If $Q_\omega,P_\omega$ arise from Markov kernels, the trajectory KL decomposes as $D_{\mathrm{KL}}(Q_\omega\|P_\omega)=\mathbb E_{Q_\omega}[\sum_{t\ge 0}\gamma^t D_{\mathrm{KL}}(q_\omega(\cdot|s_t,a_t)\|p_\omega(\cdot|s_t,a_t))]$. Thus uniform one-step budgets $\delta_t\le\bar\delta$ imply $\epsilon\le\sum_{t\ge 0}\gamma^t\bar\delta=\bar\delta/(1-\gamma)$, mapping per-step to trajectory-level budgets.

\subsection{Summary}
A single joint KL budget decomposes into a model-level divergence and a mean trajectory-level divergence. The dual factorizes into two entropic risks with dual temperatures $(\beta,\eta)$, matching RAPO’s Boltzmann reweighting across models and exponential tilting within each model. This yields $\kappa+\epsilon\le B$ (or $\kappa+\mathbb E_w[\text{traj-KL}]\le B$) and motivates fixed or adaptive allocation strategies, justifying the two-budget design from a unified robust optimization principle.


\section{Finite Ensemble Approximation of Model Uncertainty}
\label{app:finite-ensemble-proofs}

RAPO models dynamics uncertainty at the model level by a \emph{finite ensemble} of $K$ dynamics models sampled from a nominal prior. Theoretically, uncertainty lives on a continuous parameter space $(\Omega,\rho)$, and the ensemble is an approximation. This section establishes that the finite-ensemble robust operator and its optimizer converge to their continuous-space counterparts as $K\to\infty$, with rates matching standard Monte Carlo concentration.

\subsection{Setup and Normalization}
Let $\Omega$ be a measurable parameter space with prior $\rho$. For each $\omega\in\Omega$, let $h(\omega)$ denote the (one-step) robust ``energy'' we tilt with. We assume:

\begin{assumption}[Bounded range and nondegenerate variance]
\label{ass:bounded}
There exist $m<M$ such that $h(\omega)\in[m,M]$ for all $\omega$. Define $H:=M-m$. Without loss of generality we re-center $h$ by $h\leftarrow h-m$, so $h\in[0,H]$. In addition, there exists $\sigma^2>0$ and a compact interval $\mathcal B=[\beta_{\min},B]\subset(0,\infty)$ such that for all $\beta\in\mathcal B$, the tilted law $w_\beta(d\omega)\propto e^{-\beta h(\omega)}\rho(d\omega)$ satisfies $\mathrm{Var}_{w_\beta}(h)\ge \sigma^2$.
\end{assumption}

For $\beta\ge 0$, define $Z(\beta)=\mathbb E_{\rho}[e^{-\beta h(\omega)}]$, $\Phi(\beta)= -\tfrac{1}{\beta}\log Z(\beta) - \tfrac{\kappa}{\beta}$, with continuous tilted distribution $w_\beta(d\omega)=\tfrac{e^{-\beta h(\omega)}}{Z(\beta)}\rho(d\omega)$.  

Given an i.i.d. ensemble $\{\omega_k\}_{k=1}^K\sim\rho$, let $\rho_K=\tfrac{1}{K}\sum_{k=1}^K\delta_{\omega_k}$. The finite-ensemble analogues are $Z_K(\beta)=\tfrac{1}{K}\sum_{k=1}^K e^{-\beta h(\omega_k)}$, $\Phi_K(\beta)= -\tfrac{1}{\beta}\log Z_K(\beta) - \tfrac{\kappa}{\beta}$, and discrete tilted weights $w_{\beta,K}(k)=\tfrac{e^{-\beta h(\omega_k)}}{\sum_{j=1}^K e^{-\beta h(\omega_j)}}$.

\paragraph{Basic envelopes and Lipschitz constants.} Under Assumption~\ref{ass:bounded}, $e^{-BH}\le e^{-\beta h(\omega)}\le 1$, and $|\partial_\beta e^{-\beta h(\omega)}|=h(\omega)e^{-\beta h(\omega)}\le H$. Hence $f_\beta(\omega):=e^{-\beta h(\omega)}$ is bounded by $1$ and $H$-Lipschitz in $\beta$ on $\mathcal B$.


\subsection{Uniform Concentration of the Log-Partition}
\begin{theorem}[Uniform concentration of $Z_K$ and $\log Z_K$]
Under Assumption~\ref{ass:bounded}, for any $\delta\in(0,1)$, with probability at least $1-\delta$,  
$\sup_{\beta\in\mathcal B}|Z_K(\beta)-Z(\beta)|\le C_1\sqrt{\tfrac{\log(2N/\delta)}{K}}+H\Delta$,  
where $N:=\lceil (B-\beta_{\min})/\Delta\rceil$, $\Delta>0$ is arbitrary, and $C_1:=1$. Choosing $\Delta=\sqrt{\tfrac{\log(2N/\delta)}{K}}$ yields $\sup_{\beta\in\mathcal B}|Z_K(\beta)-Z(\beta)| = O(\sqrt{\tfrac{\log(1/\delta)}{K}})$. Moreover, since $Z(\beta),Z_K(\beta)\in[e^{-BH},1]$, we also have $\sup_{\beta\in\mathcal B}|\log Z_K(\beta)-\log Z(\beta)|\le e^{BH}\sup_{\beta\in\mathcal B}|Z_K(\beta)-Z(\beta)|=O(\sqrt{\tfrac{\log(1/\delta)}{K}})$.
\end{theorem}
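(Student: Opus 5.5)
The proof will be a standard grid (covering) argument: Hoeffding's inequality at finitely many grid points, a union bound, and Lipschitz interpolation in $\beta$ between grid points.

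\emph{Discretize $\mathcal B$.} Place a grid $\beta_1<\dots<\beta_N$ in $\mathcal B=[\beta_{\min},B]$ with spacing at most $\Delta$. This needs $N=\lceil (B-\beta_{\min})/\Delta\rceil$ points, so that every $\beta\in\mathcal B$ has a grid point $\beta_j$ with $|\beta-\beta_j|\le\Delta$.

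\emph{Pointwise concentration and union bound.} Fix a grid point $\beta_j$. Then $Z_K(\beta_j)=\tfrac1K\sum_k f_{\beta_j}(\omega_k)$ is an average of i.i.d.\ variables taking values in $[0,1]$, with mean $Z(\beta_j)$. Hoeffding's inequality gives $\Pr(|Z_K(\beta_j)-Z(\beta_j)|>t)\le 2e^{-2Kt^2}$. A union bound over the $N$ grid points, with $t=\sqrt{\log(2N/\delta)/(2K)}\le\sqrt{\log(2N/\delta)/K}$, shows that with probability at least $1-\delta$ every grid point deviates by at most $C_1\sqrt{\log(2N/\delta)/K}$, where $C_1=1$.

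\emph{Interpolation between grid points.} From the envelope paragraph, $|\partial_\beta f_\beta(\omega)|\le H$. Both $Z$ and $Z_K$ are averages of $f_\beta$ (under $\rho$ and under $\rho_K$), so each is $H$-Lipschitz in $\beta$. For arbitrary $\beta\in\mathcal B$ with nearest grid point $\beta_j$, write
\[
Z_K(\beta)-Z(\beta)=[Z_K(\beta)-Z_K(\beta_j)]+[Z_K(\beta_j)-Z(\beta_j)]+[Z(\beta_j)-Z(\beta)].
\]
The two outer brackets are each at most $H\Delta$, and the middle one is at most the grid deviation. This gives the stated bound, with slack: the true constant in front of $\Delta$ is $2H$, which can be brought to $H$ by halving the grid spacing, or simply absorbed into the constant. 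The choice of $\Delta$ is the step I expect to need care. With $\Delta=\sqrt{\log(2N/\delta)/K}$ we get $N\approx (B-\beta_{\min})\sqrt{K}$, so $\log N=O(\log K)$. The rate is therefore $O\big(\sqrt{(\log(1/\delta)+\log K)/K}\big)$. For fixed $\mathcal B$, this matches the stated $O(\sqrt{\log(1/\delta)/K})$ only up to the $\log K$ factor, which I would state explicitly rather than hide in the $O(\cdot)$. If the $\log K$ factor must be removed, I would instead use a chaining or bracketing argument for the class $\{f_\beta\}$: it is a one-parameter family that is monotone in $\beta$, which gives an entropy integral that is finite independently of $K$.

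\emph{Passing to $\log Z$.} Since $h\in[0,H]$ and $\beta\le B$, both $Z$ and $Z_K$ lie in $[e^{-BH},1]$. By the mean value theorem, $|\log x-\log y|\le |x-y|/\min(x,y)\le e^{BH}|x-y|$ on this interval. Taking the supremum over $\beta\in\mathcal B$ transfers the bound to $\log Z_K-\log Z$ with the factor $e^{BH}$. This completes the statement; the only genuine subtlety is the logarithmic bookkeeping in $N$ noted above.
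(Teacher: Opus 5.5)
Your proposal is correct and follows the paper's proof essentially step for step. Both use a mesh-$\Delta$ grid on $\mathcal B$, Hoeffding plus a union bound over the $N$ grid points, $H$-Lipschitz interpolation of $Z$ and $Z_K$ in $\beta$, and $|\log x-\log y|\le e^{BH}|x-y|$ on $[e^{-BH},1]$. The two issues you flag are genuine, and the paper's ``absorb constants'' passes over both. Interpolation gives $2H\Delta$, which your remedy fixes. With $\Delta=\sqrt{\log(2N/\delta)/K}$ one has $\log N=\Theta(\log K)$, so the grid argument alone only yields $O(\sqrt{\log(K/\delta)/K})$. Your chaining remedy removes that factor, as would integrating by parts against the empirical CDF of $h$ and applying DKW.
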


\begin{proof}
Fix $\Delta>0$ and construct a grid $\{\beta_j\}_{j=1}^N$ on $\mathcal B$ with mesh $\le\Delta$. For each $\beta_j$, Hoeffding’s inequality gives $\mathbb P(|Z_K(\beta_j)-Z(\beta_j)|>\epsilon)\le 2e^{-2K\epsilon^2}$. A union bound over $N$ points yields $\mathbb P(\max_j|Z_K(\beta_j)-Z(\beta_j)|>\epsilon)\le 2N e^{-2K\epsilon^2}$. Set $\epsilon=\sqrt{\tfrac{\log(2N/\delta)}{2K}}$. For any $\beta$, pick nearest $\beta_j$; Lipschitzness gives $|Z_K(\beta)-Z(\beta)|\le H\Delta+\epsilon+H\Delta$. Absorb constants to obtain the bound with $C_1=1$. For $\log Z$, use $|\log x-\log y|\le |x-y|/\min\{x,y\}$ and $\min\{Z(\beta),Z_K(\beta)\}\ge e^{-BH}$.
\end{proof}

\begin{corollary}[Uniform concentration of $\Phi_K$]
\label{cor:phi-unif}
Under Assumption~\ref{ass:bounded}, with prob.~$\ge 1-\delta$,  
$\sup_{\beta\in\mathcal B}|\Phi_K(\beta)-\Phi(\beta)|\le \tfrac{1}{\beta_{\min}}\sup_{\beta\in\mathcal B}|\log Z_K(\beta)-\log Z(\beta)|=O(\sqrt{\tfrac{\log(1/\delta)}{K}})$.
\end{corollary}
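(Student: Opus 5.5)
The plan is a pointwise comparison at each $\beta$, followed by taking the supremum over $\mathcal B$ and invoking the preceding theorem on uniform concentration of $\log Z_K$.

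\textbf{Step 1 (the penalty term cancels).} Fix $\beta\in\mathcal B$. By definition,
\[
\Phi_K(\beta)-\Phi(\beta)=-\tfrac{1}{\beta}\big(\log Z_K(\beta)-\log Z(\beta)\big).
\]
The offsets $-\kappa/\beta$ appear identically in both $\Phi$ and $\Phi_K$, so they drop out exactly. This requires $Z_K(\beta)>0$ and $Z(\beta)>0$, which holds because both lie in $[e^{-BH},1]$ by Assumption~\ref{ass:bounded}. Hence $\log Z_K(\beta)$ and $\log Z(\beta)$ are finite.

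\textbf{Step 2 (pointwise bound).} Since $\beta\ge\beta_{\min}>0$ on $\mathcal B$, we have $1/\beta\le 1/\beta_{\min}$. Therefore
\[
|\Phi_K(\beta)-\Phi(\beta)|\le \tfrac{1}{\beta_{\min}}\,|\log Z_K(\beta)-\log Z(\beta)|
\]
holds deterministically for every realization of the ensemble.

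\textbf{Step 3 (uniform bound).} Take the supremum over $\beta\in\mathcal B$ on both sides, which gives the first inequality in the statement. It holds on every sample path, so it holds in particular on the event of probability at least $1-\delta$ supplied by the preceding theorem. On that event, the theorem gives
\[
\sup_{\beta\in\mathcal B}|\log Z_K(\beta)-\log Z(\beta)|\le e^{BH}\Big(C_1\sqrt{\tfrac{\log(2N/\delta)}{K}}+H\Delta\Big).
\]
With the choice $\Delta=\sqrt{\log(2N/\delta)/K}$, this is $O\big(\sqrt{\log(1/\delta)/K}\big)$. Multiplying by the constant $1/\beta_{\min}$ preserves this order, which yields the claimed rate.

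\textbf{Main point of care.} The only subtle step is the hidden dependence of $N$ on $\Delta$, and hence on $K$: the grid size is $N\approx (B-\beta_{\min})\sqrt{K}$, so $\log N$ grows like $\tfrac12\log K$. The $O(\cdot)$ should therefore be read up to a $\log K$ factor, i.e. as $O\big(\sqrt{(\log(1/\delta)+\log K)/K}\big)$, matching the convention of the preceding theorem. The constants $e^{BH}/\beta_{\min}$ and $H$ are all finite under Assumption~\ref{ass:bounded}.
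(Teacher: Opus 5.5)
Your proof is correct and is the paper's argument: the $-\kappa/\beta$ offsets cancel, $1/\beta\le 1/\beta_{\min}$ on $\mathcal B$ gives the first inequality deterministically, and the rate is inherited from the preceding uniform concentration theorem for $\log Z_K$. Your caveat that $\log N\approx\tfrac12\log K$ correctly describes what the paper's grid-plus-union-bound proof of that theorem delivers; the paper absorbs this factor without comment. The clean rate does nevertheless hold: since $h\ge 0$, one has $Z_K(\beta)-Z(\beta)=\int_0^\infty \beta e^{-\beta t}\,(F_K(t)-F(t))\,dt$, where $F,F_K$ are the true and empirical CDFs of $h$, so the DKW inequality gives $\sup_{\beta}|Z_K(\beta)-Z(\beta)|\le\sqrt{\log(2/\delta)/(2K)}$ with probability at least $1-\delta$, with no $\log K$.
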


\begin{proof}
Immediate from $\Phi(\beta)=-\tfrac{1}{\beta}\log Z(\beta)-\tfrac{\kappa}{\beta}$ and $\beta\ge\beta_{\min}$.
\end{proof}


\subsection{Stability of the Optimal Dual Parameter}
Let $\beta^\star=\arg\max_{\beta\in\mathcal B}\Phi(\beta)$ and $\beta_K^\star=\arg\max_{\beta\in\mathcal B}\Phi_K(\beta)$ (existence follows from compactness). We first recall strong concavity of $\Phi$.

\begin{lemma}[Strong concavity of $\Phi$]
\label{lem:strong-concavity}
Under Assumption~\ref{ass:bounded}, for all $\beta\in\mathcal B$, $\Phi''(\beta) = -\tfrac{1}{\beta}\,\mathrm{Var}_{w_\beta}(h)-\tfrac{2}{\beta^3}\,(\kappa - D_{\mathrm{KL}}(w_\beta\|\rho)) \le -\tfrac{\sigma^2}{B} =: -\mu$. In particular, $\Phi$ is $\mu$-strongly concave on $\mathcal B$.
\end{lemma}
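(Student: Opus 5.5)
The plan is a direct second-derivative computation, organised around the log-partition function $L(\beta):=\log Z(\beta)$, followed by a sign analysis of the two resulting terms.

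\emph{Step 1 (derivatives of $L$).} Since $h\in[0,H]$ is bounded and $\beta$ ranges over the compact set $\mathcal B$, differentiation under $\mathbb E_\rho$ is justified by dominated convergence, using the envelope bounds stated after Assumption~\ref{ass:bounded}. This gives
\[
L'(\beta)=-\mathbb E_{w_\beta}[h],\qquad L''(\beta)=\mathrm{Var}_{w_\beta}(h).
\]
From $\log\frac{dw_\beta}{d\rho}=-\beta h-L(\beta)$ we get
\[
D_{\mathrm{KL}}(w_\beta\|\rho)=-\beta\,\mathbb E_{w_\beta}[h]-L(\beta)=\beta L'(\beta)-L(\beta).
\]
Differentiating this identity yields $\frac{d}{d\beta}D_{\mathrm{KL}}(w_\beta\|\rho)=\beta L''(\beta)=\beta\,\mathrm{Var}_{w_\beta}(h)$. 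This matches the monotonicity fact already used in App.~\ref{app:boltzmann-dual}.

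\emph{Step 2 (derivatives of $\Phi$).} Write $\Phi(\beta)=-(L(\beta)+\kappa)/\beta$. Then
\[
\Phi'(\beta)=\frac{L+\kappa}{\beta^2}-\frac{L'}{\beta}=\frac{\kappa-D_{\mathrm{KL}}(w_\beta\|\rho)}{\beta^2},
\]
which is the same first-order identity as in App.~\ref{app:kl-dual}. Differentiating once more, using the quotient rule and Step~1,
\[
\Phi''(\beta)=-\frac{2\,(\kappa-D_{\mathrm{KL}}(w_\beta\|\rho))}{\beta^3}-\frac{\beta\,\mathrm{Var}_{w_\beta}(h)}{\beta^2}.
\]
This is exactly the displayed formula in the lemma.

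\emph{Step 3 (the bound).} The variance term is controlled by Assumption~\ref{ass:bounded}: $\mathrm{Var}_{w_\beta}(h)\ge\sigma^2$ and $\beta\le B$ on $\mathcal B$, so $-\frac1\beta\mathrm{Var}_{w_\beta}(h)\le-\sigma^2/B$. It then remains to show that the slack term $-\frac{2}{\beta^3}(\kappa-D_{\mathrm{KL}}(w_\beta\|\rho))$ is nonpositive, i.e.\ that $D_{\mathrm{KL}}(w_\beta\|\rho)\le\kappa$ on $\mathcal B$.

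\emph{Main obstacle.} This sign condition is the delicate step. By Step~1, $\beta\mapsto D_{\mathrm{KL}}(w_\beta\|\rho)$ is strictly increasing and crosses $\kappa$ exactly at the unconstrained maximiser $\beta^\star$. Hence the slack term is $\le0$ precisely on $\beta\le\beta^\star$, and it turns positive for $\beta>\beta^\star$. So the claimed bound does not follow unconditionally, and I would handle it in one of two ways.

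\begin{itemize}
\item \textbf{Primary route.} Read the compact interval in Assumption~\ref{ass:bounded} as chosen so that its upper endpoint satisfies $D_{\mathrm{KL}}(w_B\|\rho)\le\kappa$, i.e.\ $\mathcal B\subseteq(0,\beta^\star]$. Then the argument is immediate. Because $\Phi'=(\kappa-D_{\mathrm{KL}})/\beta^2\ge0$ there, the maximiser over $\mathcal B$ sits at or below $\beta^\star$, which is consistent with how the lemma is used downstream.
\item \textbf{Alternative route.} If one wants the whole neighbourhood of $\beta^\star$, bound the positive part on $\beta>\beta^\star$ by a mean-value estimate:
\[
D_{\mathrm{KL}}(w_\beta\|\rho)-\kappa\le(\beta-\beta^\star)\,\beta\,H^2/4,
\]
using $\mathrm{Var}\le H^2/4$. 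This gives $\Phi''\le-\sigma^2/B+\frac{H^2}{2\beta_{\min}^2}(\beta-\beta^\star)$. Strong concavity then holds with, say, $\mu=\sigma^2/(2B)$ on a window $[\beta_{\min},\beta^\star+\beta_{\min}^2\sigma^2/(BH^2)]$, and I would shrink $\mathcal B$ accordingly.
\end{itemize}

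I would present the primary route and remark on the alternative. In either case, $\mu$-strong concavity follows from $\Phi''\le-\mu$ on the interval by integrating twice.
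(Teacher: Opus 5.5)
Your derivation is the same as the paper's. The paper also sets $A(\beta)=\log Z(\beta)$, obtains $\Phi'(\beta)=(\kappa-D_{\mathrm{KL}}(w_\beta\|\rho))/\beta^2$ from $D_{\mathrm{KL}}(w_\beta\|\rho)=-\log Z(\beta)-\beta\,\mathbb E_{w_\beta}[h]$, and differentiates once more using $\frac{d}{d\beta}D_{\mathrm{KL}}(w_\beta\|\rho)=\beta\,\mathrm{Var}_{w_\beta}(h)$.

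The two diverge only at the final inequality. The paper writes $\Phi''(\beta)\le-\frac1\beta\mathrm{Var}_{w_\beta}(h)$ without comment, silently discarding $-\frac{2}{\beta^3}(\kappa-D_{\mathrm{KL}}(w_\beta\|\rho))$. That is exactly the step you flag, and your diagnosis is correct. The same dropped term also underlies the paper's concavity claim for the trajectory-level dual in App.~\ref{app:advnet-approx}.

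This is not an artifact of the proof technique: the lemma is false for a general $\mathcal B$. Take $\rho$ uniform on two models with $h\in\{0,1\}$ (so $H=1$), $\kappa=0.01$, and $\mathcal B=[0.1,2]$. Then $\mathrm{Var}_{w_\beta}(h)=p(1-p)$ with $p=e^{-\beta}/(1+e^{-\beta})$, which is decreasing on $\mathcal B$. So Assumption~\ref{ass:bounded} holds with $\sigma^2=0.1$, and the lemma would give $\Phi''\le-0.05$. At $\beta=2$, however, $p\approx0.119$ and $D_{\mathrm{KL}}(w_2\|\rho)=\log2+p\log p+(1-p)\log(1-p)\approx0.328$. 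This yields
\[
\Phi''(2)\approx-\tfrac12(0.105)+\tfrac{2}{8}(0.328-0.01)\approx+0.027>0,
\]
so $\Phi$ is not even concave there. An extra hypothesis of the kind you propose is therefore genuinely required. (When $\kappa\ge\sup_\beta D_{\mathrm{KL}}(w_\beta\|\rho)$ there is no crossing, and the lemma does hold on all of $\mathcal B$.) Your mean-value estimate $D_{\mathrm{KL}}(w_\beta\|\rho)-\kappa\le(\beta-\beta^\star)\beta H^2/4$ and the resulting window are also correct.

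I would swap which route you lead with. If $\mathcal B\subseteq(0,\beta^\star]$, then $\Phi'\ge0$ on $\mathcal B$. Hence $\arg\max_{\mathcal B}\Phi$ is the endpoint $B$, and it equals the KL-matching temperature only if $B=\beta^\star$ exactly. The downstream argmax-stability theorem would then concern a clipped endpoint. The empirical crossing point of $\Phi_K$, which can lie above $\beta^\star$, would be clipped as well. Your alternative route keeps $\beta^\star$ in the interior of the window at the cost of halving $\mu$. That is the version that actually supports the finite-ensemble results, so it should be the one you present.
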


\begin{proof}
Let $A(\beta):=\log Z(\beta)$. Standard exponential-family identities give $A'(\beta)=-\mathbb E_{w_\beta}[h]$ and $A''(\beta)=\mathrm{Var}_{w_\beta}(h)$. Since $\Phi(\beta)=-(1/\beta)A(\beta)-(\kappa/\beta)$, straightforward differentiation yields $\Phi'(\beta)=\tfrac{A(\beta)+\kappa}{\beta^2}+\tfrac{1}{\beta}\,\mathbb E_{w_\beta}[h] = \tfrac{\kappa + \log Z(\beta) + \beta\,\mathbb E_{w_\beta}[h]}{\beta^2} = \tfrac{\kappa - D_{\mathrm{KL}}(w_\beta\|\rho)}{\beta^2}$, using $D_{\mathrm{KL}}(w_\beta\|\rho)=-\log Z(\beta)-\beta\,\mathbb E_{w_\beta}[h]$. Differentiating once more and using $D'_{\mathrm{KL}}(w_\beta\|\rho)=\beta\,\mathrm{Var}_{w_\beta}(h)$ gives $\Phi''(\beta)= -\tfrac{2}{\beta^3}(\kappa - D_{\mathrm{KL}}(w_\beta\|\rho))-\tfrac{1}{\beta}\,\mathrm{Var}_{w_\beta}(h)\le -\tfrac{1}{\beta}\,\mathrm{Var}_{w_\beta}(h)\le -\tfrac{\sigma^2}{B}$.
\end{proof}

\begin{theorem}[Argmax stability]
\label{thm:argmax-stability}
Under Assumption~\ref{ass:bounded}, there exists a universal constant $C>0$ (depending only on $\beta_{\min},B,H,\sigma$) such that, with probability $1-\delta$, $|\beta_K^\star-\beta^\star|\le C\sqrt{\tfrac{\log(1/\delta)}{K}}$, and $|\Phi(\beta_K^\star)-\Phi(\beta^\star)|\le \tfrac{\mu}{2}|\beta_K^\star-\beta^\star|^2=O(\tfrac{\log(1/\delta)}{K})$.
\end{theorem}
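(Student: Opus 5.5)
The argument combines the uniform concentration of $\Phi_K$ (Corollary~\ref{cor:phi-unif}) with the strong concavity of $\Phi$ (Lemma~\ref{lem:strong-concavity}). Work on the event of probability at least $1-\delta$ on which Corollary~\ref{cor:phi-unif} holds, and write $\varepsilon_K:=\sup_{\beta\in\mathcal B}|\Phi_K(\beta)-\Phi(\beta)|=O(\sqrt{\log(1/\delta)/K})$.

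The first step is a quadratic growth inequality for $\Phi$ around its maximizer. Since $\Phi$ is $\mu$-strongly concave on the compact interval $\mathcal B$ and $\beta^\star$ maximizes it there, the first-order optimality condition gives $\Phi'(\beta^\star)(\beta-\beta^\star)\le 0$ for all $\beta\in\mathcal B$. This holds whether $\beta^\star$ is interior or on the boundary. Taylor's theorem with $\Phi''\le-\mu$ then yields
\[
\Phi(\beta^\star)-\Phi(\beta)\ \ge\ \tfrac{\mu}{2}(\beta-\beta^\star)^2,\qquad \beta\in\mathcal B.
\]

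The second step uses the standard perturbed-argmax sandwich. Because $\beta_K^\star$ maximizes $\Phi_K$, we have $\Phi_K(\beta_K^\star)\ge\Phi_K(\beta^\star)$. Therefore
\[
\Phi(\beta^\star)-\Phi(\beta_K^\star)\le \bigl[\Phi(\beta^\star)-\Phi_K(\beta^\star)\bigr]+\bigl[\Phi_K(\beta_K^\star)-\Phi(\beta_K^\star)\bigr]\le 2\varepsilon_K .
\]
Combining this with the quadratic growth bound gives $\tfrac{\mu}{2}|\beta_K^\star-\beta^\star|^2\le 2\varepsilon_K$, i.e. $|\beta_K^\star-\beta^\star|\le 2\sqrt{\varepsilon_K/\mu}$. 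This only yields the rate $K^{-1/4}$, which is the main obstacle: the value-to-argmax route loses a square root.

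To reach the claimed $K^{-1/2}$ rate, I would instead work with derivatives, i.e. establish uniform concentration of $\Phi_K'$ itself. From the proof of Lemma~\ref{lem:strong-concavity},
\[
\Phi'(\beta)=\frac{\kappa+\log Z(\beta)-\beta Z'(\beta)/Z(\beta)}{\beta^2},
\]
and $\Phi_K'$ has the same form with $Z_K$ in place of $Z$. The function $Z_K'(\beta)=-\tfrac1K\sum_k h(\omega_k)e^{-\beta h(\omega_k)}$ is an empirical mean of terms bounded by $H$ and Lipschitz in $\beta$ with constant $H^2$. The same Hoeffding-plus-grid argument as in the uniform concentration theorem for $Z_K$ therefore gives $\sup_{\mathcal B}|Z_K'-Z'|=O(\sqrt{\log(1/\delta)/K})$. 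The lower bound $Z,Z_K\ge e^{-BH}$ and $\beta\ge\beta_{\min}$ then transfer this to $\sup_{\mathcal B}|\Phi_K'-\Phi'|\le C'\sqrt{\log(1/\delta)/K}$, after a union bound over the two events and a rescaling of $\delta$.

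Strong monotonicity of $-\Phi'$ on $\mathcal B$ then closes the argument. The variational inequalities for the two maximizers are $\Phi'(\beta^\star)(\beta_K^\star-\beta^\star)\le 0$ and $\Phi_K'(\beta_K^\star)(\beta^\star-\beta_K^\star)\le0$. Adding them gives
\[
\bigl(\Phi'(\beta^\star)-\Phi'(\beta_K^\star)\bigr)(\beta_K^\star-\beta^\star)\ \le\ \bigl(\Phi_K'(\beta_K^\star)-\Phi'(\beta_K^\star)\bigr)(\beta_K^\star-\beta^\star).
\]
The left side is at least $\mu|\beta_K^\star-\beta^\star|^2$ by strong concavity, and the right side is at most $C'\sqrt{\log(1/\delta)/K}\,|\beta_K^\star-\beta^\star|$. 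Dividing through gives $|\beta_K^\star-\beta^\star|\le (C'/\mu)\sqrt{\log(1/\delta)/K}$, with $C$ depending only on $\beta_{\min},B,H,\sigma$.

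For the value gap, note that $\Phi'(\beta^\star)=0$ when $\beta^\star$ is interior, and Taylor's theorem with $|\Phi''|$ bounded on $\mathcal B$ gives $|\Phi(\beta_K^\star)-\Phi(\beta^\star)|=O(|\beta_K^\star-\beta^\star|^2)=O(\log(1/\delta)/K)$. The constant in the statement is written as $\tfrac{\mu}{2}$. Strictly speaking, strong concavity supplies $\tfrac{\mu}{2}$ only as a lower bound on this gap; the upper bound uses $\sup_{\mathcal B}|\Phi''|$. I would note this and state the result with that constant, since the order $O(\log(1/\delta)/K)$ is unaffected. In the boundary case, $\Phi'(\beta^\star)\ne0$ but has a fixed sign, and the bound degrades to first order. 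I would handle this either by assuming $\beta^\star$ is interior or by accepting the linear bound there.
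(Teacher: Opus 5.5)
Your proposal is correct, and it proves more than the paper's own argument does. The paper's proof consists of exactly your first two steps. It takes $\varepsilon_K$ from Corollary~\ref{cor:phi-unif} and combines quadratic growth from Lemma~\ref{lem:strong-concavity} with $\Phi_K(\beta_K^\star)\ge\Phi_K(\beta^\star)$ to get $\tfrac{\mu}{2}|\beta_K^\star-\beta^\star|^2\le 2\varepsilon_K$. It then concludes $|\beta_K^\star-\beta^\star|\le\sqrt{4\varepsilon_K/\mu}$ and treats this as the stated $C\sqrt{\log(1/\delta)/K}$. As you noticed, $\varepsilon_K\asymp\sqrt{\log(1/\delta)/K}$, so this is only a $(\log(1/\delta)/K)^{1/4}$ rate, and the paper's route does not reach its own claim. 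Your derivative-level argument is what recovers the missing square root: uniform concentration of $\Phi_K'$, followed by the two first-order conditions and strong monotonicity of $-\Phi'$. It gives $|\beta_K^\star-\beta^\star|\le\varepsilon'/\mu$ with $\varepsilon':=\sup_{\mathcal B}|\Phi_K'-\Phi'|$. The cost is one extra concentration statement for $Z_K'$, and it never needs concavity of $\Phi_K$. You are also right about the value gap. The paper's last line, $\Phi(\beta^\star)-\Phi(\beta_K^\star)\le\tfrac{\mu}{2}|\beta^\star-\beta_K^\star|^2$ ``by strong concavity'', reverses the inequality that strong concavity supplies. An upper bound needs $L:=\sup_{\mathcal B}|\Phi''|\le\tfrac{H^2}{4\beta_{\min}}+\tfrac{2(\kappa+BH)}{\beta_{\min}^3}$, so that constant also depends on $\kappa$.

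Two refinements.

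\textbf{Boundary case.} The value gap does not degrade when $\beta^\star$ is on the boundary. Set $d:=\beta_K^\star-\beta^\star$ and keep your two first-order conditions separate instead of adding them. This gives $|\Phi'(\beta^\star)|\,|d|+\mu d^2\le-\Phi'(\beta_K^\star)d\le\varepsilon'|d|$. Hence $|\Phi'(\beta^\star)|\,|d|\le\varepsilon'^2/(4\mu)$, and $0\le\Phi(\beta^\star)-\Phi(\beta_K^\star)\le|\Phi'(\beta^\star)|\,|d|+\tfrac{L}{2}d^2=O(\varepsilon'^2)$ without assuming $\beta^\star$ is interior.

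\textbf{Strong concavity.} Both your argument and the paper's use Lemma~\ref{lem:strong-concavity} on all of $\mathcal B$. Its proof discards $-\tfrac{2}{\beta^3}\bigl(\kappa-D_{\mathrm{KL}}(w_\beta\|\rho)\bigr)$, which is nonpositive only where $D_{\mathrm{KL}}(w_\beta\|\rho)\le\kappa$. Beyond the KL-matching temperature, $\Phi$ decreases to a finite limit, so it is genuinely non-concave on long enough intervals there. Your derivative route avoids this problem. Use $\Phi'(\beta)=\bigl(\kappa-D_{\mathrm{KL}}(w_\beta\|\rho)\bigr)/\beta^2$ together with $\tfrac{d}{d\beta}D_{\mathrm{KL}}(w_\beta\|\rho)=\beta\,\mathrm{Var}_{w_\beta}(h)\ge\beta_{\min}\sigma^2$ on $\mathcal B$. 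This gives $\Phi'(\beta)(\beta-\beta^\star)\le-c\,(\beta-\beta^\star)^2$ for every $\beta\in\mathcal B$, with $c:=\beta_{\min}\sigma^2/B^2$, whether $\beta^\star$ is interior or on the boundary. Use this in place of strong concavity, together with $\Phi_K'(\beta_K^\star)(\beta^\star-\beta_K^\star)\le0$. You obtain $|\beta_K^\star-\beta^\star|\le\varepsilon'/c$, and an $O(\varepsilon'^2)$ value gap follows by the same bookkeeping.
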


\begin{proof}
By Corollary~\ref{cor:phi-unif}, $\sup_{\beta}|\Phi_K-\Phi|\le \varepsilon_K$ with $\varepsilon_K=O(\sqrt{\tfrac{\log(1/\delta)}{K}})$. By strong concavity (Lemma~\ref{lem:strong-concavity}), for any $x\in\mathcal B$, $\Phi(\beta^\star)-\Phi(x)\ge \tfrac{\mu}{2}|\beta^\star-x|^2$. Since $\Phi_K(\beta_K^\star)\ge\Phi_K(\beta^\star)$, $\Phi(\beta^\star)-\Phi(\beta_K^\star)\le |\Phi(\beta^\star)-\Phi_K(\beta^\star)| + |\Phi_K(\beta_K^\star)-\Phi(\beta_K^\star)|\le 2\varepsilon_K$. Combining, $\tfrac{\mu}{2}|\beta_K^\star-\beta^\star|^2\le \Phi(\beta^\star)-\Phi(\beta_K^\star)\le 2\varepsilon_K$, hence $|\beta_K^\star-\beta^\star|\le\sqrt{4\varepsilon_K/\mu}$. The value gap bound follows by strong concavity: $\Phi(\beta^\star)-\Phi(\beta_K^\star)\le \tfrac{\mu}{2}|\beta^\star-\beta_K^\star|^2$.
\end{proof}

\subsection{Convergence of Tilted Expectations (Weak Convergence)}
While $w_{\beta,K}$ is supported on the finite set $\{\omega_k\}$, $w_\beta$ is continuous on $\Omega$. Comparing them in total variation is not meaningful. The correct notion is convergence of expectations for bounded test functions.

\begin{theorem}[Uniform convergence of tilted expectations]
\label{thm:tilt-expect}
Let $\mathcal G$ be any class of bounded measurable test functions $g:\Omega\to\mathbb R$ with $\|g\|_\infty\le 1$. Under Assumption~\ref{ass:bounded}, for any fixed $\beta\in\mathcal B$, $\sup_{g\in\mathcal G}|\mathbb E_{w_{\beta,K}}[g(\omega)] - \mathbb E_{w_\beta}[g(\omega)]|=O_p(K^{-1/2})$. Consequently, replacing $\beta$ by $\beta_K^\star$ and $\beta^\star$ preserves the $O_p(K^{-1/2})$ rate by Theorem~\ref{thm:argmax-stability}.
\end{theorem}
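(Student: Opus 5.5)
The plan is to write both tilted expectations as ratios of a numerator and the log-partition $Z$, and treat the error like a self-normalized importance-sampling estimator. Fix $\beta\in\mathcal B$ and $g\in\mathcal G$, and set $f_\beta(\omega)=e^{-\beta h(\omega)}$. Define
\[
N(g)=\mathbb E_\rho[g f_\beta], \qquad N_K(g)=\tfrac1K\sum_{k=1}^K g(\omega_k)f_\beta(\omega_k).
\]
Then $\mathbb E_{w_\beta}[g]=N(g)/Z(\beta)$ and $\mathbb E_{w_{\beta,K}}[g]=N_K(g)/Z_K(\beta)$. The basic decomposition is
\[
\frac{N_K}{Z_K}-\frac{N}{Z}=\frac{N_K-N}{Z_K}+\frac{N\,(Z-Z_K)}{Z_K Z}.
\]
By Assumption~\ref{ass:bounded}, $Z,Z_K\ge e^{-BH}$ and $|N|\le Z$. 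Hence the difference is at most
\[
e^{BH}\bigl(|N_K(g)-N(g)|+|Z_K(\beta)-Z(\beta)|\bigr).
\]
The second term is already controlled at rate $O(\sqrt{\log(1/\delta)/K})$ by the uniform concentration theorem for $Z_K$. For a single $g$, the first term is a mean of i.i.d.\ variables in $[-1,1]$, so Hoeffding gives $O_p(K^{-1/2})$.

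The step I expect to be the main obstacle is the supremum over $g\in\mathcal G$. Over \emph{all} measurable $g$ with $\|g\|_\infty\le 1$ this supremum is a total variation distance. That distance does not vanish, as the paper itself notes just before the theorem, since $w_{\beta,K}$ is atomic. So I would make explicit that $\mathcal G$ must have controlled complexity: finite, or Glivenko--Cantelli/Donsker with bounded Rademacher complexity $\mathfrak R_K(\mathcal G)=O(K^{-1/2})$, e.g.\ finite VC or bracketing entropy. Multiplying by the fixed function $f_\beta\in[e^{-BH},1]$ does not increase Rademacher complexity beyond a constant, by the contraction principle. Symmetrization plus McDiarmid then gives
\[
\sup_{g\in\mathcal G}|N_K(g)-N(g)|\le 2\mathfrak R_K(\mathcal G)+\sqrt{2\log(1/\delta)/K}=O_p(K^{-1/2}).
\]
This yields the first claim with constant $e^{BH}$.

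For the ``consequently'' part I would not plug in the data-dependent $\beta_K^\star$ directly, because it depends on the same sample. Instead I would split
\[
\bigl|\mathbb E_{w_{\beta_K^\star,K}}[g]-\mathbb E_{w_{\beta^\star}}[g]\bigr|\le \bigl|\mathbb E_{w_{\beta_K^\star,K}}[g]-\mathbb E_{w_{\beta_K^\star}}[g]\bigr|+\bigl|\mathbb E_{w_{\beta_K^\star}}[g]-\mathbb E_{w_{\beta^\star}}[g]\bigr|.
\]

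\emph{First term.} I would make the bound above uniform in $\beta\in\mathcal B$ using the same grid argument as in the log-partition theorem. This needs Lipschitz continuity in $\beta$ of both tilted expectations. The exponential-family identity gives
\[
\partial_\beta\,\mathbb E_{w_\beta}[g]=-\mathrm{Cov}_{w_\beta}(g,h),
\]
which has absolute value at most $\|g\|_\infty H\le H$. The same identity holds for the empirical weights $w_{\beta,K}$. With mesh $\Delta\asymp K^{-1/2}$ and a union bound over $N=O(\sqrt K)$ grid points, this costs only a $\sqrt{\log K}$ factor, or none if one uses chaining in the joint index $(g,\beta)$.

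\emph{Second term.} The same covariance bound shows this term is at most $H|\beta_K^\star-\beta^\star|$. By Theorem~\ref{thm:argmax-stability}, that is $O_p(K^{-1/2})$ (up to the logarithmic factor already present there).

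Combining the two terms gives the stated $O_p(K^{-1/2})$ rate for the plug-in tilted expectations.
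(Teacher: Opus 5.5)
Your fixed-$\beta$ step is the paper's argument. Both of you write the two tilted expectations as ratios $N_K(g)/Z_K(\beta)$ and $N(g)/Z(\beta)$, bound the denominators below by $e^{-BH}$, and concentrate numerator and denominator separately. Your decomposition is exact. The paper's displayed ratio bound divides the first term by $Z(\beta)$ but puts the population numerator $\mathbb E[e^{-\beta h}|g|]$ in the second term, and that inequality is not valid as written; fixing it only changes constants.

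You depart from the paper in the two uniformity steps, and there your route is the one that actually works. The paper asserts that both terms are $O_p(K^{-1/2})$ uniformly over all $\|g\|_\infty\le 1$. As you note, that supremum is $2\|w_{\beta,K}-w_\beta\|_{\mathrm{TV}}$, which equals $2$ for every sample when $\rho$ is non-atomic. So the statement read literally (``any class'') is false. A complexity hypothesis such as $\mathfrak R_K(\mathcal G)=O(K^{-1/2})$, combined with contraction, symmetrization and McDiarmid, is genuinely needed, and what you prove is the corrected statement.

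The paper also gives no argument for the ``consequently'' clause. You split the error into a uniform-in-$\beta$ empirical term and a population term. The empirical term is handled by a grid at a $\sqrt{\log K}$ cost, or by chaining over $\{g\,e^{-\beta h}\}$. The population term is controlled by $|\partial_\beta\mathbb E_{w_\beta}[g]|=|\mathrm{Cov}_{w_\beta}(g,h)|\le H$. This is the right way to handle the fact that $\beta_K^\star$ depends on the same sample.

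One caveat on that last step: it inherits whatever rate holds for $|\beta_K^\star-\beta^\star|$. Theorem~\ref{thm:argmax-stability} states $O(\sqrt{\log(1/\delta)/K})$, but its value-gap proof only yields $|\beta_K^\star-\beta^\star|\le\sqrt{4\varepsilon_K/\mu}=O(K^{-1/4})$. If you want the plug-in rate to be genuinely $K^{-1/2}$, argue at first order instead:
\begin{itemize}
\item $\Phi'(\beta)=(\kappa+\log Z(\beta)+\beta\,\mathbb E_{w_\beta}[h])/\beta^2$ is a smooth function of the empirical means of $e^{-\beta h}$ and $h\,e^{-\beta h}$, so $\sup_{\mathcal B}|\Phi_K'-\Phi'|=O_p(K^{-1/2})$ up to logarithms.
\item For interior maximizers, strong concavity near $\beta^\star$ then gives $\mu|\beta_K^\star-\beta^\star|\le\sup_{\mathcal B}|\Phi_K'-\Phi'|$.
\end{itemize}
Only local strong concavity is available here. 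Lemma~\ref{lem:strong-concavity} drops the term $-\tfrac{2}{\beta^3}(\kappa-D_{\mathrm{KL}}(w_\beta\|\rho))$, which is positive for $\beta>\beta^\star$.
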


\begin{proof}
For fixed $\beta$, write the tilted expectations as ratios: $\mathbb E_{w_{\beta,K}}[g]=\tfrac{\tfrac{1}{K}\sum_{k} e^{-\beta h(\omega_k)}g(\omega_k)}{Z_K(\beta)}$, $\mathbb E_{w_\beta}[g]=\tfrac{\mathbb E_\rho[e^{-\beta h}g]}{Z(\beta)}$. Both numerator and denominator are empirical means of bounded variables (envelopes $\le 1$). By the same grid/union bound argument as in Theorem~\ref{thm:unif-logZ} (now without grid since $\beta$ is fixed), each concentrates at rate $O_p(K^{-1/2})$. A standard ratio bound yields $|\mathbb E_{w_{\beta,K}}[g]-\mathbb E_{w_\beta}[g]|\le \tfrac{|\tfrac{1}{K}\sum e^{-\beta h}g - \mathbb E[e^{-\beta h}g]|}{Z(\beta)}+\tfrac{\mathbb E[e^{-\beta h}|g|]}{Z(\beta)Z_K(\beta)}|Z_K(\beta)-Z(\beta)|$, and $Z(\beta),Z_K(\beta)\ge e^{-BH}$ lower bound the denominators. Both terms are $O_p(K^{-1/2})$ uniformly over $\|g\|_\infty\le 1$, completing the proof.
\end{proof}


\subsection{Operator- and Value-Level Convergence}
Let $\Psi_\beta(V):=-(1/\beta)\log \mathbb E_{\rho}[e^{-\beta V}]$ and its finite-ensemble version $\Psi_{\beta,K}(V):=-(1/\beta)\log \mathbb E_{\rho_K}[e^{-\beta V}]$. Fix discount $\gamma\in(0,1)$. Consider the (model-level) hard-budget robust operators on bounded value functions:  
$(\mathcal T V)(s)=\max_{a}\sup_{\beta\in\mathcal B}\{\,r(s,a)+\gamma(\Psi_\beta(V\mid s,a)-\kappa/\beta)\,\}$,  
$(\mathcal T_K V)(s)=\max_{a}\sup_{\beta\in\mathcal B}\{\,r(s,a)+\gamma(\Psi_{\beta,K}(V\mid s,a)-\kappa/\beta)\,\}$.  
(Exactly the same arguments apply to the soft-penalty form, replacing $-\kappa/\beta$ with $0$.)

\begin{theorem}[Uniform operator convergence]
\label{thm:operator}
Under Assumption~\ref{ass:bounded}, for any bounded $V$ with range contained in $[0,H_V]$, there exists $C>0$ such that, with probability at least $1-\delta$,  
$\|\mathcal T_K V - \mathcal T V\|_\infty \le \tfrac{\gamma}{\beta_{\min}}\sup_{(s,a),\,\beta\in\mathcal B}|\log Z_{V,K}(\beta;s,a)-\log Z_V(\beta;s,a)| \le C\gamma\sqrt{\tfrac{\log(1/\delta)}{K}}$,  
where $Z_V(\beta;s,a)=\mathbb E_{\rho}[e^{-\beta V(s')}]$ and $Z_{V,K}$ is its empirical analogue.
\end{theorem}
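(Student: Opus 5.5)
The plan is to peel off the outer operations $\max_a$ and $\sup_{\beta\in\mathcal B}$ with the elementary inequality $|\sup f-\sup g|\le\sup|f-g|$. Fix $(s,a)$. The two operators share the reward $r(s,a)$ and the offset $-\kappa/\beta$, so these cancel. We are left with
\[
|(\mathcal T_KV)(s)-(\mathcal TV)(s)|\le \gamma\max_a\sup_{\beta\in\mathcal B}|\Psi_{\beta,K}(V\mid s,a)-\Psi_\beta(V\mid s,a)|.
\]
Since $\Psi_\beta=-(1/\beta)\log Z_V(\beta;s,a)$ and $\beta\ge\beta_{\min}$ on $\mathcal B$, each term is at most $\beta_{\min}^{-1}|\log Z_{V,K}-\log Z_V|$. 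Taking the supremum over $s$ gives the first inequality of the statement. This step is purely deterministic.

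For the second inequality, we redo the proof of the uniform log-partition concentration theorem, with $h$ replaced by $\omega\mapsto V(s'_\omega(s,a))\in[0,H_V]$. For fixed $(s,a)$ the map $\beta\mapsto e^{-\beta V}$ has envelope $1$, is bounded below by $e^{-BH_V}$, and is $H_V$-Lipschitz in $\beta$. So Hoeffding on a $\Delta$-grid of $\mathcal B$, a union bound, and Lipschitz interpolation give $\sup_\beta|Z_{V,K}-Z_V|=O(\sqrt{\log(N/\delta)/K})$. The bound $|\log x-\log y|\le e^{BH_V}|x-y|$ then transfers this to $\log Z$, with constant depending only on $\beta_{\min},B,H_V$.

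The main obstacle is making this uniform over $(s,a)$, since the statement takes a supremum over all state--action pairs. Per-$(s,a)$ Hoeffding plus a union bound only works directly if $\mathcal S\times\mathcal A$ is finite, which costs a $\log|\mathcal S||\mathcal A|$ factor that can be absorbed into $C$. For general spaces, I would first try to treat the family $\{\omega\mapsto e^{-\beta V(s'_\omega(s,a))}:(s,a),\beta\in\mathcal B\}$ as a uniformly bounded function class. I would bound its uniform deviation by McDiarmid (bounded differences $1/K$) plus a Rademacher-complexity term, under an assumed finite-complexity (e.g.\ Lipschitz in $(s,a)$ on a compact domain, or finite VC/pseudo-dimension) condition. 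The resulting covering number enters only logarithmically and is absorbed into $C$. I would state this regularity explicitly as the condition under which ``there exists $C$'' holds. Combining the pieces yields $\|\mathcal T_KV-\mathcal TV\|_\infty\le C\gamma\sqrt{\log(1/\delta)/K}$ with probability at least $1-\delta$.
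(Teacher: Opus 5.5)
Your proof of the first inequality matches the paper's proof. The reward and the $-\kappa/\beta$ offset cancel, $|\sup f-\sup g|\le\sup|f-g|$ passes through $\max_a$ and $\sup_\beta$, and then $1/\beta\le 1/\beta_{\min}$. Your per-$(s,a)$ reuse of the grid/Hoeffding log-partition concentration with $h=V(s'_\omega(s,a))$ is also the paper's step.

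You go further than the paper on uniformity in $(s,a)$. The paper settles this in one sentence: it ``follows from the common envelope bounds for $V$.'' That does not prove it. Every $(s,a)$ uses the same $K$ draws, and a common envelope bounds each deviation but not their supremum. For finite $\mathcal S\times\mathcal A$, the union bound you describe is what is actually needed, with $C$ depending on $|\mathcal S||\mathcal A|$.

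For general spaces the stated bound is false under the given assumptions alone. Take $\rho$ nonatomic on $[0,1]$. For each finite union $F$ of dyadic intervals (a countable family), add a state $s_F$ whose next state under model $\omega$ has value $H_V$ if $\omega\in F$ and $0$ otherwise. For any realized ensemble, some $F$ of arbitrarily small measure contains all $\omega_k$. There $Z_{V,K}=e^{-\beta H_V}$ while $Z_V=1-\rho(F)(1-e^{-\beta H_V})\approx 1$, and $\|\mathcal T_KV-\mathcal TV\|_\infty\ge\gamma H_V(1-\rho(F))$, which does not decay with $K$. So the complexity condition you state explicitly, handled by McDiarmid plus a Rademacher bound, is the ingredient missing from the paper's argument. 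It is not a weakening of yours.

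One bookkeeping caveat concerns the claim that the covering number is ``absorbed into $C$.'' A single-scale grid with $\Delta\asymp K^{-1/2}$ has $N\asymp\sqrt K$ points, so $\log(2N/\delta)$ carries a $\log K$ that no $K$-independent $C$ can absorb. The paper's log-partition theorem has the same slip, and the same issue arises for a single-scale covering in your general case. There are two ways to get exactly $\sqrt{\log(1/\delta)/K}$. One is Dudley chaining for the Rademacher term over the joint index $((s,a),\beta)$. The other, for fixed $(s,a)$, is to integrate by parts: $Z_{V,K}(\beta)-Z_V(\beta)=\beta\int_0^{H_V}e^{-\beta x}(F_K-F)(x)\,dx$, where $F_K,F$ are the empirical and true distribution functions of $V(s'_\omega(s,a))$. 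This gives $\sup_\beta|Z_{V,K}-Z_V|\le\|F_K-F\|_\infty$, and the Dvoretzky--Kiefer--Wolfowitz inequality then yields the rate with no grid.
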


\begin{proof}
For any $(s,a)$, the maps $\beta\mapsto \Psi_\beta(V\mid s,a)-\kappa/\beta$ and its empirical counterpart are uniformly approximated at rate $O(\sqrt{\tfrac{\log(1/\delta)}{K}})$ by Theorem~\ref{thm:operator} (applied to $h\equiv V(s')\in[0,H_V]$). Taking $\sup_\beta$ preserves the deviation bound:  
$\sup_{\beta\in\mathcal B}|\Psi_{\beta,K}(V\mid s,a)-\Psi_\beta(V\mid s,a)| \le \tfrac{1}{\beta_{\min}}\sup_{\beta\in\mathcal B}|\log Z_{V,K}(\beta)-\log Z_V(\beta)|$.  
The outer $\max_a$ also preserves the bound, and the reward terms cancel. Uniformity in $(s,a)$ follows from the common envelope bounds for $V$. 
\end{proof}

\begin{corollary}[Value fixed-point convergence]
\label{cor:value}
Let $V^\star$ and $V_K^\star$ be the unique fixed points of $\mathcal T$ and $\mathcal T_K$ (both $\gamma$-contractions under the $\|\cdot\|_\infty$ norm). Then, with probability at least $1-\delta$,  
$\|V_K^\star - V^\star\|_\infty \le \tfrac{1}{1-\gamma}\|\mathcal T_K V^\star - \mathcal T V^\star\|_\infty = O(\sqrt{\tfrac{\log(1/\delta)}{K}})$.
\end{corollary}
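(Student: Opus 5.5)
The plan is the standard perturbation bound for fixed points of contractions, combined with Theorem~\ref{thm:operator}. Both $\mathcal T$ and $\mathcal T_K$ are $\gamma$-contractions in $\|\cdot\|_\infty$. For each fixed $\beta$, $\Psi_\beta$ and $\Psi_{\beta,K}$ are $1$-Lipschitz in $\|\cdot\|_\infty$, as established in the contraction discussion of App.~\ref{app:advnet-approx}. The maps $\sup_{\beta\in\mathcal B}$ and $\max_a$ preserve $1$-Lipschitzness, and the factor $\gamma$ then gives contraction. By Banach's fixed-point theorem, $V^\star$ and $V_K^\star$ therefore exist and are unique.

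Next I write
\[
V_K^\star - V^\star = \mathcal T_K V_K^\star - \mathcal T V^\star = (\mathcal T_K V_K^\star - \mathcal T_K V^\star) + (\mathcal T_K V^\star - \mathcal T V^\star).
\]
The triangle inequality and the contraction of $\mathcal T_K$ give
\[
\|V_K^\star-V^\star\|_\infty \le \gamma\|V_K^\star-V^\star\|_\infty + \|\mathcal T_K V^\star-\mathcal T V^\star\|_\infty .
\]
Rearranging, which is valid because $\gamma<1$ and all quantities are finite, yields the first inequality of the corollary. This step is deterministic, so it holds on every realization of the ensemble.

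For the rate, I apply Theorem~\ref{thm:operator} with $V=V^\star$. This needs $V^\star$ to take values in a bounded interval $[0,H_V]$. Rewards lie in $[0,1]$, so iterating $\mathcal T$ from $0$ shows $V^\star\in[c,\,1/(1-\gamma)]$ for some finite $c$ (the offsets $-\kappa/\beta$ are bounded because $\beta\ge\beta_{\min}$). Re-centering $V^\star$ by a constant shifts $\Psi_\beta$ by the same constant in both operators, so the difference is unchanged and we may take $H_V$ finite. Theorem~\ref{thm:operator} then gives $\|\mathcal T_KV^\star-\mathcal TV^\star\|_\infty\le C\gamma\sqrt{\log(1/\delta)/K}$ with probability at least $1-\delta$. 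Dividing by $1-\gamma$ gives the claimed $O(\sqrt{\log(1/\delta)/K})$.

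The main subtlety is to apply the operator bound only at the fixed, nonrandom function $V^\star$, and not at the data-dependent $V_K^\star$. Using $V_K^\star$ would require a bound uniform over value functions. The decomposition above avoids this by routing the data-dependent term through the contraction of $\mathcal T_K$ instead. A second point to check is that the uniformity over $(s,a)$ in Theorem~\ref{thm:operator}, which rests on common envelopes, applies to $V^\star$. It does, because the envelope depends only on $H_V$.
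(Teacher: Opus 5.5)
Your proof is correct and follows the same route as the paper: the contraction perturbation bound for fixed points, followed by an application of Theorem~\ref{thm:operator}. The one difference is in your favor. The paper bounds by $\frac{1}{1-\gamma}\sup_V\|\mathcal T_K V-\mathcal T V\|_\infty$, which Theorem~\ref{thm:operator} does not directly control, since it is a fixed-$V$ high-probability bound with $H_V$-dependent constants; your decomposition needs the operator gap only at the deterministic $V^\star$, which is exactly the quantity in the statement.
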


\begin{proof}
By the contraction mapping perturbation bound, $\|V_K^\star - V^\star\|_\infty \le \tfrac{1}{1-\gamma}\sup_{V}\|\mathcal T_K V - \mathcal T V\|_\infty$, and invoke Theorem~\ref{thm:operator}.
\end{proof}


\subsection{Policy-Level Consequences}
Let $\pi^\star$ and $\pi^\star_K$ be greedy w.r.t. $V^\star$ and $V_K^\star$ under the robust operators above (or any standard tie-breaking). Using standard performance difference arguments adapted to the robust setting (omitted for brevity), one obtains for the robust return $J(\pi)$:  
$|J(\pi^\star_K)-J(\pi^\star)| \le \tfrac{2\gamma}{1-\gamma}\|V_K^\star - V^\star\|_\infty = O(\sqrt{1/K})$.  
(The proof follows the classical performance difference lemma (PDL) with the robust Bellman residual and the contraction and bounded range ensure the same constants up to $\gamma$.)


\subsection{Discussion and Practical Implications}
The results above show that RAPO’s finite-ensemble approximation is theoretically sound. As $K$ grows, the finite-ensemble dual $\Phi_K$ uniformly approximates $\Phi$. The optimizer $\beta_K^\star$ converges to $\beta^\star$ at the Monte Carlo rate. Tilted expectations under $w_{\beta,K}$ converge to those under $w_\beta$ for any bounded test function. The robust Bellman operator, its fixed point, and the induced policy converge accordingly. The constants depend on the value range, the $\beta$-window $[\beta_{\min},B]$, and a variance lower bound $\sigma^2$ which encodes non-degeneracy of the robust tilting. In practice, moderate ensemble sizes (e.g., $K\in[8,32]$) already yield stable gains and our empirical sensitivity in Tab.~\ref{tab:km-sensitivity} is consistent with the $O(K^{-1/2})$ scaling predicted here.


\section{Properties of Robust Bellman Operator}
\label{app:prelim-props}

\subsection{Operator Properties}

We begin by establishing fundamental algebraic properties of the robust Bellman operator. These properties ensure that the operator retains the same structural behavior as its nominal counterpart, thereby enabling dynamic programming arguments in the robust setting.

Recall the robust Bellman operator for a fixed policy $\pi$:
\begin{equation}
(\mathcal{T}_{\mathcal{P}}^\pi V)(s) 
= \mathbb{E}_{a\sim \pi(\cdot|s)} \Big[ r(s,a) + \gamma \inf_{p(\cdot|s,a)\in \mathcal{P}_\epsilon(s,a)} \mathbb{E}_{s'\sim p(\cdot|s,a)} V(s') \Big].
\end{equation}

\begin{lemma}[Monotonicity]
\label{lem:monotonicity}
For any two bounded functions $V,W:\mathcal{S}\to \mathbb{R}$, if $V(s)\le W(s)$ for all $s\in\mathcal{S}$, then $(\mathcal{T}_{\mathcal{P}}^\pi V)(s) \le (\mathcal{T}_{\mathcal{P}}^\pi W)(s)$ for all $s\in\mathcal{S}$.
\end{lemma}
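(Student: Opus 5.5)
The plan is to prove monotonicity one layer at a time, working from the innermost operation outward. Each operation in $\mathcal{T}_{\mathcal{P}}^\pi$ (expectation under a fixed kernel, infimum over the ambiguity set, affine map with $\gamma>0$, expectation over actions) is order-preserving, so the composition is too.

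\textbf{Step 1 (fixed kernel).} Fix $s\in\mathcal{S}$, $a\in\mathcal{A}$ and any $p(\cdot|s,a)\in\mathcal{P}_\epsilon(s,a)$. Since $p(\cdot|s,a)$ is a probability measure and $V\le W$ pointwise, with both bounded (hence integrable), linearity and positivity of expectation give
\begin{equation*}
\mathbb{E}_{s'\sim p(\cdot|s,a)}V(s')\le \mathbb{E}_{s'\sim p(\cdot|s,a)}W(s').
\end{equation*}

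\textbf{Step 2 (infimum).} Taking the infimum over $p\in\mathcal{P}_\epsilon(s,a)$ on the left, for every $p$ we have
\begin{equation*}
\inf_{q\in\mathcal{P}_\epsilon(s,a)}\mathbb{E}_q V\le \mathbb{E}_p V\le \mathbb{E}_p W.
\end{equation*}
Since the left side is a lower bound for $\mathbb{E}_p W$ over all $p$, it is at most the infimum of $\mathcal{E}_p W$, i.e.
\begin{equation*}
\inf_{p}\mathbb{E}_p V\le \inf_{p}\mathbb{E}_p W.
\end{equation*}
The set $\mathcal{P}_\epsilon(s,a)$ is nonempty because it contains $\hat p$, so both infima are finite and bounded by $\|V\|_\infty$ and $\|W\|_\infty$ respectively. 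Alternatively, one could cite the dual form \ref{eq:kl-dual-main}: each $\Psi_\eta$ is monotone in $V$ because $x\mapsto -\tfrac{1}{\eta}\log x$ is decreasing and $e^{-\eta V}$ is decreasing in $V$. The direct argument is simpler and avoids the $\eta=0$ boundary case, so I would use it.

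\textbf{Step 3 (outer layers).} Multiplying by $\gamma>0$ and adding $r(s,a)$ preserves the inequality. Integrating over $a\sim\pi(\cdot|s)$ also preserves it, since $\pi(\cdot|s)$ is a probability measure and the integrands are bounded by $\|r\|_\infty+\gamma\max(\|V\|_\infty,\|W\|_\infty)$. This yields $(\mathcal{T}_{\mathcal{P}}^\pi V)(s)\le(\mathcal{T}_{\mathcal{P}}^\pi W)(s)$, and since $s$ was arbitrary the lemma follows.

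The only delicate point is Step 2: the inequality must pass through the infimum without assuming the infimum is attained. The "lower bound" argument above handles this directly, so I expect no real obstacle; measurability and integrability are covered by boundedness.
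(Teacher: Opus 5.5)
Your proof is correct and follows the same route as the paper's: pointwise $V\le W$ gives $\mathbb{E}_p[V]\le\mathbb{E}_p[W]$ for every $p\in\mathcal{P}_\epsilon(s,a)$, the inequality passes through the infimum over the common set, and scaling by $\gamma$, adding $r(s,a)$, and averaging over $a\sim\pi(\cdot|s)$ completes it. Your explicit lower-bound argument for the infimum step and the observation that $\hat p\in\mathcal{P}_\epsilon(s,a)$ (so the set is nonempty) are minor rigor additions the paper leaves implicit; also, $\mathcal{E}_p W$ in Step 2 should read $\mathbb{E}_p W$.
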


\begin{proof}
Fix $s\in\mathcal{S}$. For each $a\in\mathcal{A}$, consider the inner infimum $\inf_{p\in \mathcal{P}_\epsilon(s,a)} \mathbb{E}_{s'\sim p}[V(s')]$. Since $V\le W$ pointwise, we have $\mathbb{E}_p[V(s')]\le \mathbb{E}_p[W(s')]$ for any distribution $p$. Taking the infimum over the same set preserves the inequality. Averaging over $a\sim \pi(\cdot|s)$ and adding $r(s,a)$ and $\gamma$ yields the claim.
\end{proof}

\begin{lemma}[Translation Invariance]
\label{lem:translation}
For any bounded $V:\mathcal{S}\to\mathbb{R}$ and constant $c\in\mathbb{R}$,
\begin{equation}
\mathcal{T}_{\mathcal{P}}^\pi(V+c)(s) = \mathcal{T}_{\mathcal{P}}^\pi V(s) + \gamma c, \quad \forall s\in\mathcal{S}.
\end{equation}
\end{lemma}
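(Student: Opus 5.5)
The plan is to push the constant $c$ through the operator one layer at a time, exactly as in the proof of Lemma~\ref{lem:monotonicity}. Fix $s\in\mathcal S$ and an action $a$. Everything rests on one observation about the inner problem: for any probability distribution $p(\cdot\mid s,a)\in\Delta_{\mathcal S}$, linearity of expectation together with $\sum_{s'}p(s'\mid s,a)=1$ gives $\mathbb E_{s'\sim p}[V(s')+c]=\mathbb E_{s'\sim p}[V(s')]+c$. Note that the shift does not depend on $p$.

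Next I take the infimum over the ambiguity set. The set $\mathcal P_\epsilon(s,a)$ from \ref{eq:kl-ball} is defined only through $\hat p$ and $\epsilon$, so it does not depend on $V$. Shifting an objective by a constant therefore shifts its infimum by the same constant:
\[
\inf_{p\in\mathcal P_\epsilon(s,a)}\mathbb E_p[V+c]=\inf_{p\in\mathcal P_\epsilon(s,a)}\mathbb E_p[V]+c .
\]
Both sides are finite, because $V$ is bounded and $\mathcal P_\epsilon(s,a)$ is nonempty; it contains $\hat p(\cdot\mid s,a)$.

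Then I multiply by $\gamma$ and add $r(s,a)$, which yields $r(s,a)+\gamma\inf_p\mathbb E_p[V]+\gamma c$. Averaging over $a\sim\pi(\cdot\mid s)$ pulls the constant $\gamma c$ out, since $\pi(\cdot\mid s)$ sums to one. Interchanging expectation and constant is legitimate because $r\in[0,1]$ and $V$ is bounded, so every quantity involved is integrable. Since $s$ was arbitrary, this gives the identity for all $s$.

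I do not expect a real obstacle. The only point needing care is that the constant passes through the infimum. That relies on two facts: the feasible set is independent of the shift, and each feasible $p$ is a genuine probability measure with total mass one, not merely a nonnegative measure. As a cross-check, the same identity follows from the dual form \ref{eq:kl-dual-main} using the translation invariance $\Psi_\eta(V+c)=\Psi_\eta(V)+c$ noted in App.~\ref{app:kl-dual}. The primal argument above is more direct, however, and does not depend on strong duality.
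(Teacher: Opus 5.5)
Your proof is correct and follows the paper's argument: you shift $c$ through the inner expectation for each feasible $p$, note that the infimum over the $V$-independent set $\mathcal{P}_\epsilon(s,a)$ preserves the equality, then multiply by $\gamma$, add $r(s,a)$, and average over $a\sim\pi(\cdot\mid s)$. Your remarks that the ambiguity set is nonempty and that $p$ and $\pi(\cdot\mid s)$ have unit mass make explicit what the paper's proof leaves implicit.
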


\begin{proof}
Fix $s,a$. For any $p\in \mathcal{P}_\epsilon(s,a)$, $\mathbb{E}_{s'\sim p}[V(s')+c] = \mathbb{E}_{s'\sim p}[V(s')] + c$. Taking the infimum over $p$ preserves equality. Plugging back into the definition of $\mathcal{T}_{\mathcal{P}}^\pi$ and multiplying the additive constant by $\gamma$ yields the claim.
\end{proof}

\begin{proposition}[$\gamma$-Contraction]
\label{prop:contraction}
For any bounded $V,W:\mathcal{S}\to\mathbb{R}$,
\begin{equation}
\|\mathcal{T}_{\mathcal{P}}^\pi V - \mathcal{T}_{\mathcal{P}}^\pi W\|_\infty \le \gamma \|V-W\|_\infty.
\end{equation}
\end{proposition}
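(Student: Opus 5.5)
The plan is to derive the contraction from the two structural properties already established, Lemma~\ref{lem:monotonicity} (monotonicity) and Lemma~\ref{lem:translation} (translation invariance). This is the classical Blackwell-type argument, and it avoids any explicit handling of the infimum over the KL ball.

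First, set $c:=\|V-W\|_\infty$, which is finite because $V$ and $W$ are bounded. Pointwise we then have
\[
W(s)-c\;\le\; V(s)\;\le\; W(s)+c \qquad \text{for all } s\in\mathcal{S}.
\]
Applying Lemma~\ref{lem:monotonicity} to the right inequality gives $\mathcal{T}_{\mathcal{P}}^\pi V\le \mathcal{T}_{\mathcal{P}}^\pi(W+c)$. Lemma~\ref{lem:translation} rewrites the right-hand side as $\mathcal{T}_{\mathcal{P}}^\pi W+\gamma c$. Symmetrically, the left inequality together with translation invariance (used with the constant $-c$) gives $\mathcal{T}_{\mathcal{P}}^\pi V\ge \mathcal{T}_{\mathcal{P}}^\pi W-\gamma c$.

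Combining the two bounds yields $|(\mathcal{T}_{\mathcal{P}}^\pi V)(s)-(\mathcal{T}_{\mathcal{P}}^\pi W)(s)|\le\gamma c$ for every $s$. Taking the supremum over $s$ then gives the claimed inequality.

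The only point needing care is well-definedness. We must check that $\mathcal{T}_{\mathcal{P}}^\pi$ maps bounded functions to bounded functions, so that both sides are finite and the lemmas apply. This holds because each inner infimum lies in $[\inf V,\sup V]$: every $p\in\mathcal{P}_\epsilon(s,a)$ is a probability distribution, and the set is nonempty since it contains $\hat p$. Rewards are bounded in $[0,1]$ by assumption.

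As a cross-check, there is a direct alternative that also shows the bound holds for each $(s,a)$ before averaging over $a$. For any $p$,
\[
\mathbb{E}_p V\le \mathbb{E}_p W+c,
\]
and taking the infimum over $p$ on both sides gives $\inf_p\mathbb{E}_pV\le\inf_p\mathbb{E}_pW+c$. Swapping the roles of $V$ and $W$ gives the reverse inequality. I expect no real obstacle here; the argument is routine given the two lemmas.
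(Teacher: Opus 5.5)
Your proof is correct, but your main route differs from the paper's. The paper argues directly. For each $(s,a)$ and each fixed $p\in\mathcal{P}_\epsilon(s,a)$ it notes $|\mathbb{E}_pV-\mathbb{E}_pW|\le\|V-W\|_\infty$ and passes to the infimum over $p$. It then multiplies by $\gamma$, averages over $a\sim\pi(\cdot|s)$, and takes the supremum over $s$. That is exactly your ``cross-check''. Your version of it (one-sided inequality, then swap $V$ and $W$) is the correct justification of the paper's terse claim that taking the infimum ``preserves'' an absolute-value inequality.

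Your primary argument instead treats the two preceding lemmas (monotonicity and translation invariance) as black boxes, i.e., Blackwell's sufficient conditions. What this buys is generality: it never touches the inner infimum, so it applies verbatim to any monotone operator with $\mathcal{T}(V+c)=\mathcal{T}V+\gamma c$. In particular it covers the optimality operator $\mathcal{T}_{\mathcal{P}}$ with $\sup_a$. The paper invokes this proposition for that operator in Proposition~\ref{prop:stationary-opt}, even though it is stated only for $\mathcal{T}_{\mathcal{P}}^\pi$. It also covers the entropic dual operators of Appendix~\ref{app:advnet-approx}.

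The direct route, in exchange, is self-contained. It gives the finer local fact that $V\mapsto\inf_{p\in\mathcal{P}_\epsilon(s,a)}\mathbb{E}_pV$ is $1$-Lipschitz in sup norm for every $(s,a)$ before any averaging. Your well-definedness check is a point the paper leaves implicit: nonemptiness via $\hat p\in\mathcal{P}_\epsilon(s,a)$ means the infimum lies in $[\inf V,\sup V]$.
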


\begin{proof}
Fix $s\in\mathcal{S}$. For any $a\in\mathcal{A}$ and any $p\in\mathcal{P}_\epsilon(s,a)$,
$|\mathbb{E}_{s'\sim p}[V(s')] - \mathbb{E}_{s'\sim p}[W(s')]| \le \|V-W\|_\infty$. Taking the infimum over $p$ preserves the inequality. Therefore, for each $s$, $|(\mathcal{T}_{\mathcal{P}}^\pi V)(s) - (\mathcal{T}_{\mathcal{P}}^\pi W)(s)| \le \gamma \|V-W\|_\infty$. Taking the supremum over $s$ yields the claim.
\end{proof}

\begin{corollary}[Existence and uniqueness of robust value]
\label{cor:fixed-point}
By Banach’s fixed-point theorem, $\mathcal{T}_{\mathcal{P}}^\pi$ admits a unique fixed point $V^\pi_{\mathcal{P}_\epsilon}$ in $\mathbb{R}^\mathcal{S}$. Moreover, value iteration $V_{k+1}=\mathcal{T}_{\mathcal{P}}^\pi V_k$ converges to $V^\pi_{\mathcal{P}_\epsilon}$ for any initialization $V_0$.
\end{corollary}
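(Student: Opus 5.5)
The statement is the corollary on existence and uniqueness of the robust value. The plan is to apply Banach's fixed-point theorem on the space $\mathcal{B}(\mathcal{S})$ of bounded real functions on $\mathcal{S}$ with the sup norm. When $\mathcal{S}$ is finite this space is just $\mathbb{R}^{\mathcal{S}}$. In either case it is a complete metric space: a Cauchy sequence in sup norm converges pointwise, and the limit is bounded with uniform convergence.

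\textbf{Step 1 (self-map).} First check that $\mathcal{T}_{\mathcal{P}}^\pi$ sends $\mathcal{B}(\mathcal{S})$ into itself. Since $r\in[0,1]$ and each inner infimum $\inf_{p\in\mathcal{P}_\epsilon(s,a)}\mathbb{E}_p[V]$ lies in $[-\|V\|_\infty,\|V\|_\infty]$ (the ball is nonempty because it contains $\hat p$), we get
\[
\|\mathcal{T}_{\mathcal{P}}^\pi V\|_\infty\le 1+\gamma\|V\|_\infty .
\]
In continuous spaces one also needs $s\mapsto(\mathcal{T}_{\mathcal{P}}^\pi V)(s)$ to be measurable. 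I would treat this as part of the standing assumptions (finite or well-behaved $\mathcal{S}$), in keeping with the paper's $\mathbb{R}^{|\mathcal{S}|}$ framing, and flag it rather than prove it.

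\textbf{Step 2 (contraction and Banach).} Proposition~\ref{prop:contraction} gives that $\mathcal{T}_{\mathcal{P}}^\pi$ is a $\gamma$-contraction with $\gamma<1$. Banach's theorem then yields a unique fixed point $V^\pi_{\mathcal{P}_\epsilon}$. For convergence, induction gives $\|V_k-V^\pi_{\mathcal{P}_\epsilon}\|_\infty\le\gamma^k\|V_0-V^\pi_{\mathcal{P}_\epsilon}\|_\infty\to0$ for any bounded $V_0$. As a byproduct we get the a priori bound $\|V_k-V^\pi_{\mathcal{P}_\epsilon}\|_\infty\le \tfrac{\gamma^k}{1-\gamma}\|V_1-V_0\|_\infty$.

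\textbf{Step 3 (identification).} The main subtlety is showing that the fixed point equals the robust value as defined earlier, $\inf_{p\in\mathcal{P}_\epsilon}V^\pi_p$. The corollary names the fixed point $V^\pi_{\mathcal{P}_\epsilon}$, so strictly it only asserts existence and uniqueness, but I would sketch the identification in two directions. Let $V^\ast$ denote the fixed point.
\begin{itemize}
\item \emph{$V^\ast\le V^\pi_p$ for every $p$.} For any $p\in\mathcal{P}_\epsilon$, the nominal operator $\mathcal{T}^\pi_p$ dominates $\mathcal{T}_{\mathcal{P}}^\pi$ pointwise. Using monotonicity (Lemma~\ref{lem:monotonicity}), iterate $\mathcal{T}^\pi_p$ from $V^\ast$: since $V^\ast=\mathcal{T}^\pi_{\mathcal{P}}V^\ast\le\mathcal{T}^\pi_p V^\ast$, the iterates increase and converge to $V^\pi_p$, so $V^\ast\le V^\pi_p$.
\item \emph{The infimum is attained.} Compactness and convexity of the KL ball, together with lower semicontinuity of KL, let us choose a minimizing kernel $p^{\mathrm{wc}}(\cdot|s,a)$ for $V^\ast$ at each $(s,a)$. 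By rectangularity these choices glue into a stationary kernel in $\mathcal{P}_\epsilon$. Then $V^\ast=\mathcal{T}^\pi_{p^{\mathrm{wc}}}V^\ast$, so $V^\ast=V^\pi_{p^{\mathrm{wc}}}$.
\end{itemize}
If attainment or measurable selection is delicate, I would instead use an $\varepsilon$-optimal selection and pass to the limit as $\varepsilon\to0$.
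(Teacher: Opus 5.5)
Your proposal is correct and follows the paper's route: the paper obtains the corollary directly by combining the $\gamma$-contraction of Proposition~\ref{prop:contraction} with Banach's fixed-point theorem. Your Steps 1--2 only spell out the self-map and completeness details, and your Step 3 goes beyond what the corollary claims, paralleling the paper's separate Proposition~\ref{prop:worst-case} but more carefully (you choose the minimizing kernel at the fixed point $V^\ast$ and add the monotonicity argument giving $V^\ast\le V^\pi_p$ for every $p\in\mathcal{P}_\epsilon$).
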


\paragraph{Discussion.}
The above properties mirror those of the nominal Bellman operator, with the only difference being the inner minimization over the uncertainty set $\mathcal{P}_\epsilon(s,a)$. These results ensure that standard dynamic programming arguments extend naturally to the robust setting, thereby justifying the use of value iteration and policy iteration under distributional ambiguity.


\subsection{Stationary Optimality and Worst-Case Kernels}

We now establish that in the robust MDP setting, it suffices to optimize over stationary deterministic policies. Moreover, for any fixed stationary policy, there exists a stationary worst-case kernel that attains the inner minimization in the robust Bellman operator.

\begin{proposition}[Existence of stationary robust optimal policies]
\label{prop:stationary-opt}
Consider a robust MDP with state space $\mathcal{S}$, action space $\mathcal{A}$, bounded reward function $r:\mathcal{S}\times\mathcal{A}\to[0,1]$, and discount factor $\gamma\in(0,1)$. Suppose that for each $(s,a)\in\mathcal{S}\times\mathcal{A}$ the uncertainty set $\mathcal{P}_\epsilon(s,a)$ is nonempty, compact, and convex, and that rectangularity holds across $(s,a)$. Then there exists a stationary deterministic policy $\pi^\star$ such that
\begin{equation}
V_{\mathcal{P}_\epsilon}^{\pi^\star}(s) = \sup_{\pi} V_{\mathcal{P}_\epsilon}^{\pi}(s), \quad \forall s\in\mathcal{S}.
\end{equation}
\end{proposition}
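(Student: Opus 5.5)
The plan is the classical Iyengar/Nilim--El Ghaoui route: build the robust optimality operator, take its fixed point, extract a greedy stationary deterministic policy, and then show that no history-dependent policy does better. Define
\[
(\mathcal{T}_{\mathcal{P}} V)(s)=\max_{a\in\mathcal{A}}\Big\{r(s,a)+\gamma\inf_{p\in\mathcal{P}_\epsilon(s,a)}\mathbb{E}_{s'\sim p}V(s')\Big\}.
\]
The argument of Lemma~\ref{lem:monotonicity}, Lemma~\ref{lem:translation} and Proposition~\ref{prop:contraction} carries over verbatim, since $\max_a$ is monotone and $1$-Lipschitz. Hence $\mathcal{T}_{\mathcal{P}}$ is a monotone $\gamma$-contraction in $\|\cdot\|_\infty$. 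By Banach's theorem, as in Corollary~\ref{cor:fixed-point}, it has a unique bounded fixed point $V^\star$ with range in $[0,1/(1-\gamma)]$.

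\textbf{Step 1: greedy policy and attained infimum.} I would first show that the maximum over $a$ is attained. For finite $\mathcal{A}$ this is immediate; otherwise I would assume compactness and upper semicontinuity. Let $\pi^\star(s)$ be a maximizer, which gives a stationary deterministic policy. By compactness of $\mathcal{P}_\epsilon(s,a)$ and linearity (hence continuity) of $p\mapsto\mathbb{E}_p V^\star$, the infimum is attained at some $p^\star(\cdot|s,a)$. By rectangularity these one-step choices glue into a stationary kernel $p^\star\in\mathcal{P}_\epsilon$. Then $V^\star=\mathcal{T}^{\pi^\star}_{\mathcal{P}}V^\star$, and uniqueness of the fixed point of $\mathcal{T}^{\pi^\star}_{\mathcal{P}}$ (Corollary~\ref{cor:fixed-point}) gives $V^{\pi^\star}_{\mathcal{P}_\epsilon}=V^\star$. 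The existence of the stationary worst-case kernel in the preliminaries is exactly this $p^\star$.

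\textbf{Step 2: upper bound for arbitrary $\pi$.} This is the main obstacle. For an arbitrary, possibly history-dependent $\pi$, I must show $V^\pi_{\mathcal{P}_\epsilon}(s)\le V^\star(s)$. Since rectangularity lets nature respond to $\pi$, I would construct a Markov adversary $p^\star$ that uses the kernel attaining $\inf_p\mathbb{E}_pV^\star$ at every $(s,a)$, independently of $\pi$. Under this adversary, for any history $h_t$ ending in $s_t$ and any action distribution $\pi(\cdot|h_t)$,
\[
\mathbb{E}_{a\sim\pi(\cdot|h_t)}\big[r(s_t,a)+\gamma\,\mathbb{E}_{p^\star(\cdot|s_t,a)}V^\star\big]\le V^\star(s_t).
\]
This holds because a mixture is bounded by the max, which is where linearity in the action distribution enters. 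Telescoping over $t$ gives
\[
\mathbb{E}\Big[\sum_{t<T}\gamma^t r_t\Big]\le V^\star(s_0)-\gamma^T\,\mathbb{E}\,V^\star(s_T).
\]
Since $V^\star\ge 0$ is bounded, letting $T\to\infty$ yields $V^\pi_{p^\star}(s)\le V^\star(s)$, and therefore $V^\pi_{\mathcal{P}_\epsilon}(s)\le V^\pi_{p^\star}(s)\le V^\star(s)$.

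Combining the two steps gives $\sup_\pi V^\pi_{\mathcal{P}_\epsilon}\le V^\star=V^{\pi^\star}_{\mathcal{P}_\epsilon}$, and the reverse inequality is trivial, which proves the claim. The care needed is mostly measure-theoretic in continuous spaces: a measurable selection of minimizers $p^\star(\cdot|s,a)$ and of greedy actions. I would state finiteness of $\mathcal{S},\mathcal{A}$, or appeal to a standard measurable selection theorem, to handle it.
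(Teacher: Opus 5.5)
Your proof is correct and follows the paper's skeleton: the robust optimality operator is shown to be a $\gamma$-contraction, its fixed point $V^\star$ is taken, the inner infimum is attained by compactness, and a greedy stationary deterministic $\pi^\star$ is extracted. The difference is that you actually close the argument. The paper stops after noting that the one-step objective is linear in the action distribution, which only shows that randomizing within a single Bellman step does not help. It then asserts that $\pi^\star$ ``achieves $V^\star$'' and is optimal, without checking $V^{\pi^\star}_{\mathcal{P}_\epsilon}=V^\star$ or comparing $V^\star$ against history-dependent policies. Your Step~1 supplies the first point via $V^\star=\mathcal{T}^{\pi^\star}_{\mathcal{P}}V^\star$ and uniqueness of that operator's fixed point. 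Your Step~2 supplies the second, which is the real content of $\sup_\pi V^\pi_{\mathcal{P}_\epsilon}=V^{\pi^\star}_{\mathcal{P}_\epsilon}$: you fix the Markov adversary $p^\star$ attaining $\inf_p\mathbb{E}_pV^\star$ and telescope $\mathbb{E}[r_t+\gamma V^\star(s_{t+1})\mid h_t]\le V^\star(s_t)$. Your caveat that the greedy $\arg\max$ needs finite $\mathcal{A}$ (or compactness plus upper semicontinuity) is also warranted. The statement does not assume it, and the paper's proof uses the $\arg\max$ without comment.
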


\begin{proof}
Define the robust optimality operator
\begin{equation}
(\mathcal{T}_{\mathcal{P}} V)(s) = \sup_{a\in\mathcal{A}} \Big\{ r(s,a) + \gamma \inf_{p\in \mathcal{P}_\epsilon(s,a)} \mathbb{E}_{s'\sim p}[V(s')] \Big\}.
\end{equation}
We first note that the operator $\mathcal{T}_{\mathcal{P}}$ is a $\gamma$-contraction on the space of bounded functions (Proposition~\ref{prop:contraction}), hence it admits a unique fixed point $V^\star$. Moreover, by Banach’s theorem, value iteration under $\mathcal{T}_{\mathcal{P}}$ converges to $V^\star$ from any initialization.

Next, fix a state $s\in\mathcal{S}$. For each action $a$, the inner minimization problem
\begin{equation}
\inf_{p\in \mathcal{P}_\epsilon(s,a)} \mathbb{E}_{s'\sim p}[V(s')]
\end{equation}
is a continuous convex optimization over the compact convex set $\mathcal{P}_\epsilon(s,a)$. By the Weierstrass theorem, the minimum is attained by some distribution $p^\star(\cdot|s,a)$. Thus, for each $a$, the inner infimum is well defined.

Now, consider the outer maximization over actions. The map
\begin{equation}
a \mapsto r(s,a) + \gamma \inf_{p\in \mathcal{P}_\epsilon(s,a)} \mathbb{E}_{s'\sim p}[V(s')]
\end{equation}
is linear in the choice of a distribution over $\mathcal{A}$. Hence, the supremum over randomized policies $\pi(\cdot|s)$ is attained at an extreme point of the simplex over $\mathcal{A}$, i.e., a deterministic action. Therefore, for each state $s$, the optimal choice is deterministic.

It follows that dynamic programming can be restricted to deterministic stationary policies. The greedy policy $\pi^\star$ defined by
\begin{equation}
\pi^\star(s) = \arg\max_{a\in\mathcal{A}} \Big\{ r(s,a) + \gamma \inf_{p\in \mathcal{P}_\epsilon(s,a)} \mathbb{E}_{s'\sim p}[V^\star(s')] \Big\}
\end{equation}
is stationary and deterministic, and it achieves $V^\star$. Thus $\pi^\star$ is a robust optimal policy.
\end{proof}

\begin{proposition}[Existence of stationary worst-case kernels]
\label{prop:worst-case}
Fix any stationary policy $\pi$. Under the same assumptions as Proposition~\ref{prop:stationary-opt}, there exists a stationary kernel $p^{\mathrm{wc}}_\pi \in \mathcal{P}_\epsilon$ such that
\begin{equation}
V_{\mathcal{P}_\epsilon}^\pi(s) = V_{p^{\mathrm{wc}}_\pi}^\pi(s), \quad \forall s\in\mathcal{S}.
\end{equation}
\end{proposition}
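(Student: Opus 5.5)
The plan is to build the worst-case kernel from the fixed point of the robust policy-evaluation operator and then show that the nominal value of $\pi$ under this kernel equals the robust value.

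First, let $V^\pi_{\mathcal{P}_\epsilon}$ be the unique fixed point of $\mathcal{T}^\pi_{\mathcal{P}}$ given by Corollary~\ref{cor:fixed-point}. For each $(s,a)$, the map $p\mapsto \mathbb{E}_{s'\sim p}[V^\pi_{\mathcal{P}_\epsilon}(s')]$ is linear and continuous. The set $\mathcal{P}_\epsilon(s,a)$ is nonempty and compact, which is assumed, as in Proposition~\ref{prop:stationary-opt}. By Weierstrass the infimum is therefore attained at some $p^{\mathrm{wc}}_\pi(\cdot\mid s,a)$. For the KL ball we could instead take the explicit minimizer: the exponential tilt $\hat p(\cdot\mid s,a)e^{-\eta^\star V}/Z$ from App.~\ref{app:kl-dual}, or $\hat p$ itself when the constraint is slack or $V$ is constant. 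This choice removes any need for a measurable-selection argument in the finite case. By $(s,a)$-rectangularity, these choices assemble into a single stationary kernel $p^{\mathrm{wc}}_\pi=\bigotimes_{s,a}p^{\mathrm{wc}}_\pi(\cdot\mid s,a)\in\mathcal{P}_\epsilon$.

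Next, I would show $V^\pi_{p^{\mathrm{wc}}_\pi}=V^\pi_{\mathcal{P}_\epsilon}$. By construction, $\mathcal{T}^\pi_{p^{\mathrm{wc}}_\pi}V^\pi_{\mathcal{P}_\epsilon}=\mathcal{T}^\pi_{\mathcal{P}}V^\pi_{\mathcal{P}_\epsilon}=V^\pi_{\mathcal{P}_\epsilon}$. Here $\mathcal{T}^\pi_{p}$ is the nominal Bellman operator under the fixed kernel $p$, which is a $\gamma$-contraction. Its unique fixed point is therefore $V^\pi_{p^{\mathrm{wc}}_\pi}$, and the two functions coincide.

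Finally, to link this with the definition $V^\pi_{\mathcal{P}_\epsilon}=\inf_{p\in\mathcal{P}_\epsilon}V^\pi_p$, I would also check that the fixed point is the infimum. For any $p\in\mathcal{P}_\epsilon$ we have $\mathcal{T}^\pi_{\mathcal{P}}V\le\mathcal{T}^\pi_p V$ pointwise. Iterating from a common start and using monotonicity (Lemma~\ref{lem:monotonicity}) gives the fixed point $\le V^\pi_p$. Taking the infimum over $p$, and noting that $p^{\mathrm{wc}}_\pi$ attains the bound, yields equality.

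The main obstacle is attainment and measurability when $\mathcal{S}$ is continuous. There, compactness of the KL ball holds in the weak topology only under extra conditions, and a measurable selection of minimizers is required. I would first handle this through the closed-form tilted minimizer, which is measurable in $(s,a)$ whenever $\hat p$ and $V$ are. Otherwise I would restrict to finite $\mathcal{S}$, where everything is elementary.
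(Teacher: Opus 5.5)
Your proof is correct and follows the paper's route: attain the inner infimum on each compact $\mathcal{P}_\epsilon(s,a)$, assemble the minimizers into a stationary kernel via rectangularity, and identify fixed points. It is actually tighter than the paper's write-up, which selects minimizers for an unspecified $V$ and then uses $\mathcal{T}^\pi_{\mathcal{P}}V=\mathcal{T}^\pi_{p^{\mathrm{wc}}_\pi}V$ as if it held for every $V$, whereas you select them for $V^\pi_{\mathcal{P}_\epsilon}$ specifically and then invoke uniqueness of the fixed point of $\mathcal{T}^\pi_{p^{\mathrm{wc}}_\pi}$.

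One small correction concerns your optional closed-form aside. Suppose $V$ is non-constant but $\hat p(\cdot\mid s,a)$ puts mass $q$ on the set $A$ where $V$ attains its minimum, and $\epsilon\ge\log(1/q)$ (which happens for finite $\mathcal{S}$ once $\epsilon$ is moderately large). Then no finite $\eta^\star$ makes the KL constraint tight, and the minimizer is $\hat p(\cdot\mid s,a)$ conditioned on $A$ (the $\eta\to\infty$ limit of the tilt), not $\hat p$ itself. Your main Weierstrass argument does not rely on this aside, so the proof stands.
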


\begin{proof}
Consider the robust evaluation operator
\begin{equation}
(\mathcal{T}_{\mathcal{P}}^\pi V)(s) = \mathbb{E}_{a\sim\pi(\cdot|s)} \Big[ r(s,a) + \gamma \inf_{p\in\mathcal{P}_\epsilon(s,a)} \mathbb{E}_{s'\sim p}[V(s')] \Big].
\end{equation}
For each $(s,a)$, the set $\mathcal{P}_\epsilon(s,a)$ is nonempty, compact, and convex. The function $p \mapsto \mathbb{E}_{s'\sim p}[V(s')]$ is continuous and linear in $p$. By the extreme value theorem, the infimum is attained by some $p^\star(\cdot|s,a)\in \mathcal{P}_\epsilon(s,a)$. Define a kernel $p^{\mathrm{wc}}_\pi$ by
\begin{equation}
p^{\mathrm{wc}}_\pi(\cdot|s,a) = p^\star(\cdot|s,a), \quad \forall (s,a).
\end{equation}
This kernel is stationary, since it depends only on the current state-action pair.

Now, evaluate the value function of $\pi$ under kernel $p^{\mathrm{wc}}_\pi$:
\begin{equation}
V_{p^{\mathrm{wc}}_\pi}^\pi(s) = \mathbb{E}_{a\sim\pi(\cdot|s)}\Big[r(s,a) + \gamma \mathbb{E}_{s'\sim p^{\mathrm{wc}}_\pi(\cdot|s,a)} V_{p^{\mathrm{wc}}_\pi}^\pi(s')\Big].
\end{equation}
By construction of $p^{\mathrm{wc}}_\pi$, the inner expectation coincides with the infimum defining $\mathcal{T}_{\mathcal{P}}^\pi V$. Hence,
\begin{equation}
(\mathcal{T}_{\mathcal{P}}^\pi V)(s) = \mathbb{E}_{a\sim\pi(\cdot|s)}\Big[r(s,a) + \gamma \mathbb{E}_{s'\sim p^{\mathrm{wc}}_\pi(\cdot|s,a)} V(s')\Big].
\end{equation}
Thus, the fixed point of $\mathcal{T}_{\mathcal{P}}^\pi$ coincides with the fixed point of the standard Bellman operator under $p^{\mathrm{wc}}_\pi$. Therefore,
\begin{equation}
V_{\mathcal{P}_\epsilon}^\pi(s) = V_{p^{\mathrm{wc}}_\pi}^\pi(s), \quad \forall s\in\mathcal{S}.
\end{equation}
\end{proof}

\paragraph{Discussion.}
These results confirm that the robust control problem admits a well-defined saddle-point structure: the outer maximization over policies is achieved by a stationary deterministic policy, while the inner minimization over models is achieved by a stationary worst-case kernel. This structure justifies the design of RAPO and similar algorithms as policy iteration schemes under rectangular KL-uncertainty.


\subsection{Robust Performance Difference Lemma}

We now present the analogue of the PDL in the robust MDP setting. The result shows that the difference in robust returns between two policies can be expressed in terms of robust advantages weighted by the robust occupancy measure of the new policy \cite{kakade2002approximately}.

\paragraph{Robust discounted occupancy measure.}
For a stationary policy $\pi$ and its corresponding stationary worst-case kernel $p^{\mathrm{wc}}_\pi$ (Proposition~\ref{prop:worst-case}), define the robust discounted state occupancy measure
\begin{equation}
d^{\pi}_{\mathcal{P}_\epsilon}(s) \;=\; (1-\gamma)\sum_{t=0}^\infty \gamma^t \, \mathbb{P}_{p^{\mathrm{wc}}_\pi,\pi}(s_t = s).
\end{equation}
This is a valid distribution over $\mathcal{S}$ and places higher weight on states visited earlier under $(\pi,p^{\mathrm{wc}}_\pi)$.

\paragraph{Robust advantage.}
Given $\pi$ and $p^{\mathrm{wc}}_\pi$, define the robust action-value and advantage functions by
\begin{equation}
Q_{\mathcal{P}_\epsilon}^\pi(s,a) := r(s,a) + \gamma \, \mathbb{E}_{s'\sim p^{\mathrm{wc}}_\pi(\cdot|s,a)}[V_{\mathcal{P}_\epsilon}^\pi(s')],
\end{equation}
and $A_{\mathcal{P}_\epsilon}^\pi(s,a) := Q_{\mathcal{P}_\epsilon}^\pi(s,a) - V_{\mathcal{P}_\epsilon}^\pi(s)$.

\begin{lemma}[Robust Performance Difference Lemma (RPDL)]
\label{prop:robust-pdl}
For any two stationary policies $\pi$ and $\pi'$, let $p^{\mathrm{wc}}_\pi$ and $p^{\mathrm{wc}}_{\pi'}$ denote their respective stationary worst-case kernels. Then
\begin{equation}
J(\pi',p^{\mathrm{wc}}_{\pi'}) - J(\pi,p^{\mathrm{wc}}_\pi)
= \frac{1}{1-\gamma}\,\mathbb{E}_{s\sim d^{\pi'}_{\mathcal{P}_\epsilon},\,a\sim \pi'(\cdot|s)}\big[ A_{\mathcal{P}_\epsilon}^\pi(s,a) \big].
\end{equation}
\end{lemma}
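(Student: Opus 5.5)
The approach follows the classical telescoping proof of the performance difference lemma of \cite{kakade2002approximately}. It is carried out on the trajectory law generated by $(\pi',p^{\mathrm{wc}}_{\pi'})$, which is the law defining $d^{\pi'}_{\mathcal P_\epsilon}$. Write $V:=V^\pi_{\mathcal P_\epsilon}=V^\pi_{p^{\mathrm{wc}}_\pi}$ (Proposition~\ref{prop:worst-case}). Assume the returns in the statement are taken from the common initial distribution $\mu_0$. Then $J(\pi,p^{\mathrm{wc}}_\pi)=\mathbb E_{s_0\sim\mu_0}[V(s_0)]$, and $J(\pi',p^{\mathrm{wc}}_{\pi'})=\mathbb E_{(\pi',p^{\mathrm{wc}}_{\pi'})}[\sum_t\gamma^t r(s_t,a_t)]$.

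\textbf{Step 1 (telescoping).} Since $r$ is bounded and $\gamma<1$, the series $\sum_{t\ge0}\gamma^t(\gamma V(s_{t+1})-V(s_t))$ converges absolutely to $-V(s_0)$. Adding it inside the expectation under $(\pi',p^{\mathrm{wc}}_{\pi'})$ gives
\[
J(\pi',p^{\mathrm{wc}}_{\pi'})-J(\pi,p^{\mathrm{wc}}_\pi)=\mathbb E_{(\pi',p^{\mathrm{wc}}_{\pi'})}\Big[\sum_{t\ge0}\gamma^t\big(r(s_t,a_t)+\gamma V(s_{t+1})-V(s_t)\big)\Big].
\]

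\textbf{Step 2 (conditioning and occupancy).} Conditioning on $(s_t,a_t)$ replaces $V(s_{t+1})$ by $\mathbb E_{s'\sim p^{\mathrm{wc}}_{\pi'}(\cdot|s_t,a_t)}V(s')$. Summing over $t$ with the weights $(1-\gamma)\gamma^t$ produces $d^{\pi'}_{\mathcal P_\epsilon}\otimes\pi'$. The right-hand side then becomes
\[
\tfrac{1}{1-\gamma}\,\mathbb E_{s\sim d^{\pi'}_{\mathcal P_\epsilon},\,a\sim\pi'}\big[r(s,a)+\gamma\,\mathbb E_{p^{\mathrm{wc}}_{\pi'}(\cdot|s,a)}V-V(s)\big].
\]

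\textbf{Step 3 (matching the robust advantage).} By definition, $A^\pi_{\mathcal P_\epsilon}(s,a)=r(s,a)+\gamma\,\mathbb E_{p^{\mathrm{wc}}_\pi(\cdot|s,a)}V-V(s)$. Step~2 therefore yields the claimed identity plus the remainder
\[
\tfrac{\gamma}{1-\gamma}\,\mathbb E_{d^{\pi'}_{\mathcal P_\epsilon}\otimes\pi'}\big[\mathbb E_{p^{\mathrm{wc}}_{\pi'}(\cdot|s,a)}V-\mathbb E_{p^{\mathrm{wc}}_\pi(\cdot|s,a)}V\big].
\]
The stated equality is equivalent to this remainder vanishing. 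By rectangularity, $p^{\mathrm{wc}}_\pi(\cdot|s,a)$ attains $\inf_{p\in\mathcal P_\epsilon(s,a)}\mathbb E_pV$ at every $(s,a)$, so each integrand is nonnegative. The task is thus to show that $p^{\mathrm{wc}}_{\pi'}(\cdot|s,a)$ is also a minimizer of $\mathbb E_p V^\pi$ over $\mathcal P_\epsilon(s,a)$, at least $d^{\pi'}_{\mathcal P_\epsilon}\otimes\pi'$-almost everywhere.

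\textbf{Main obstacle.} This last step is the only non-routine one, and it is where I am not confident the argument closes without more assumptions. My first attempt would use the closed-form KL minimizer from App.~\ref{app:kl-dual}, namely $p^{\mathrm{wc}}_\pi(\cdot|s,a)\propto\hat p(\cdot|s,a)\,e^{-\eta^\star V^\pi}$, and its analogue with $V^{\pi'}$. I would then try to show the two minimizers coincide, which holds whenever $V^\pi$ and $V^{\pi'}$ induce the same tilted law on the support of $\hat p(\cdot|s,a)$. If that is too strong in general, I would use the selection freedom in Proposition~\ref{prop:worst-case}, which only asserts existence: among minimizers, I would pick $p^{\mathrm{wc}}_{\pi'}$ to also minimize $\mathbb E_pV^\pi$ whenever such a common minimizer exists. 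Failing both, the identity should be stated under the additional hypothesis that the remainder vanishes, which I would make explicit. Without it, Steps~1--2 give only the inequality ``$\ge$''.
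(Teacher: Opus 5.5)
Your Steps 1--2 are the paper's argument. The paper telescopes $V^\pi_{\mathcal P_\epsilon}$ along trajectories of $(\pi',p^{\mathrm{wc}}_{\pi'})$ using $\delta^\pi(s,a,s')=r(s,a)+\gamma V^\pi_{\mathcal P_\epsilon}(s')-V^\pi_{\mathcal P_\epsilon}(s)$. It notes that $\mathbb E_{s'\sim p^{\mathrm{wc}}_\pi(\cdot|s,a)}[\delta^\pi]=A^\pi_{\mathcal P_\epsilon}(s,a)$, and then ``re-indexes'' to the claimed identity. But along those trajectories $s_{t+1}$ is drawn from $p^{\mathrm{wc}}_{\pi'}$, not $p^{\mathrm{wc}}_\pi$. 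The paper therefore silently drops exactly the remainder you isolate in Step~3.

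Your diagnosis is right, and the obstacle cannot be removed: the equality is false in general. What the argument does prove is the following inequality. It requires $p^{\mathrm{wc}}_\pi(\cdot|s,a)$ to be a minimizer at every $(s,a)$, as in the construction of Proposition~\ref{prop:worst-case}; the bare existence statement forces this only on the support of $\pi$.
\[
J(\pi',p^{\mathrm{wc}}_{\pi'})-J(\pi,p^{\mathrm{wc}}_\pi)\;\ge\;\frac{1}{1-\gamma}\,\mathbb E_{s\sim d^{\pi'}_{\mathcal P_\epsilon},\,a\sim\pi'(\cdot|s)}\big[A^\pi_{\mathcal P_\epsilon}(s,a)\big].
\]
Equality holds if the advantage is formed with $p^{\mathrm{wc}}_{\pi'}$ in place of $p^{\mathrm{wc}}_\pi$, which is your Step~2. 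The inequality is all that robust policy improvement needs.

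Neither of your first two repairs can work. The minimizer of $p\mapsto\mathbb E_pV^\pi$ over a KL ball is generically unique: it is the tilt $\hat p\,e^{-\eta^\star_\pi V^\pi}/Z$. So there is no selection freedom to exploit. This tilt coincides with the one for $V^{\pi'}$ only if $\eta^\star_\pi V^\pi-\eta^\star_{\pi'}V^{\pi'}$ is constant on the support of $\hat p(\cdot|s,a)$.

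Here is a concrete counterexample. Take $\mathcal S=\{A,B\}$, $\mathcal A=\{x,y\}$, and $\hat p(\cdot|s,a)$ uniform for all $(s,a)$. Set $r(A,x)=r(B,y)=1$, $r(A,y)=r(B,x)=0$, $\mu_0$ uniform, and $0<\epsilon<\log 2$. Define $q\in(\tfrac12,1)$ by $D_{\mathrm{KL}}\big((1-q,q)\,\|\,(\tfrac12,\tfrac12)\big)=\epsilon$. Let $\pi$ play $y$ with probability $\alpha\in(0,\tfrac12)$ and $x$ otherwise, and let $\pi'\equiv y$.

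Then $V^\pi(A)-V^\pi(B)=1-2\alpha>0$ and $V^{\pi'}(B)-V^{\pi'}(A)=1$. The worst-case kernels are unique: $p^{\mathrm{wc}}_\pi$ sends every $(s,a)$ to $B$ with probability $q$, and $p^{\mathrm{wc}}_{\pi'}$ sends it to $A$ with probability $q$. Since $\alpha>0$, $\pi$ has full support, so no off-support freedom exists either. A direct computation gives a left-hand side of $-\gamma\alpha(2q-1)/(1-\gamma)$ and a right-hand side of $-\gamma(1-\alpha)(2q-1)/(1-\gamma)$. Their difference, $\gamma(2q-1)(1-2\alpha)/(1-\gamma)>0$, is exactly your remainder.

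So your third option is the correct resolution: state the lemma as the inequality above, or add the vanishing-remainder hypothesis explicitly. Do not try to prove the equality as written.
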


\begin{proof}
We expand the robust return of $\pi'$:
\begin{equation}
J(\pi',p^{\mathrm{wc}}_{\pi'}) = \mathbb{E}_{s_0\sim \mu_0}[ V_{\mathcal{P}_\epsilon}^{\pi'}(s_0) ],
\end{equation}
where $\mu_0$ is the initial distribution. For each state $s$,
\begin{equation}
V_{\mathcal{P}_\epsilon}^\pi(s) = \mathbb{E}_{a\sim \pi(\cdot|s)}\Big[ r(s,a) + \gamma \, \mathbb{E}_{s'\sim p^{\mathrm{wc}}_\pi(\cdot|s,a)}V_{\mathcal{P}_\epsilon}^\pi(s') \Big].
\end{equation}
Define the robust temporal-difference error of $\pi$ evaluated under $\pi'$ and $p^{\mathrm{wc}}_{\pi'}$:
\begin{equation}
\delta^\pi(s,a,s') := r(s,a) + \gamma V_{\mathcal{P}_\epsilon}^\pi(s') - V_{\mathcal{P}_\epsilon}^\pi(s).
\end{equation}
By construction, $\mathbb{E}_{s'\sim p^{\mathrm{wc}}_\pi(\cdot|s,a)}[\delta^\pi(s,a,s')] = A_{\mathcal{P}_\epsilon}^\pi(s,a)$.

Now unroll the recursion for $V_{\mathcal{P}_\epsilon}^{\pi'}$ along a trajectory $(s_t,a_t)$ generated by $(\pi',p^{\mathrm{wc}}_{\pi'})$:
\begin{align}
V_{\mathcal{P}_\epsilon}^{\pi'}(s_0) - V_{\mathcal{P}_\epsilon}^\pi(s_0)
&= \mathbb{E}\Big[ \sum_{t=0}^\infty \gamma^t \big( r(s_t,a_t) + \gamma V_{\mathcal{P}_\epsilon}^\pi(s_{t+1}) - V_{\mathcal{P}_\epsilon}^\pi(s_t) \big) \,\Big|\, s_0 \Big] \\
&= \mathbb{E}\Big[ \sum_{t=0}^\infty \gamma^t \,\delta^\pi(s_t,a_t,s_{t+1}) \,\Big|\, s_0 \Big].
\end{align}
Taking expectations over $s_0\sim\mu_0$ and re-indexing with the definition of $d^{\pi'}_{\mathcal{P}_\epsilon}$, we obtain
\begin{equation}
J(\pi',p^{\mathrm{wc}}_{\pi'}) - J(\pi,p^{\mathrm{wc}}_\pi)
= \frac{1}{1-\gamma}\,\mathbb{E}_{s\sim d^{\pi'}_{\mathcal{P}_\epsilon},\,a\sim \pi'(\cdot|s)}[A_{\mathcal{P}_\epsilon}^\pi(s,a)].
\end{equation}
\end{proof}

\paragraph{Discussion.}
The lemma reveals a saddle-point structure: the robust performance improvement of $\pi'$ over $\pi$ depends only on the robust advantages of $\pi$, evaluated under the robust occupancy measure induced by $\pi'$. This extends the classical PDL to the distributionally robust setting, justifying robust policy improvement schemes where policy updates are guided by robust advantage estimates.


\subsection{Robust Value Drop Bound}

We now quantify how much the robust value under the KL-ball $\mathcal{P}_\epsilon$ can deviate from the nominal value under $\hat p$. The key tool is Pinsker’s inequality, which relates KL divergence to total variation distance.

\begin{lemma}[One-step deviation bound]
\label{lem:one-step}
Let $f:\mathcal{S}\to\mathbb{R}$ be bounded with oscillation $\mathrm{osc}(f):=\sup_{s}f(s)-\inf_{s}f(s)$. For any distributions $p,\hat p\in\Delta_\mathcal{S}$,
\begin{equation}
\big| \mathbb{E}_p[f] - \mathbb{E}_{\hat p}[f] \big| \;\le\; \mathrm{osc}(f)\,\|p-\hat p\|_{\mathrm{TV}},
\end{equation}
where $\|\cdot\|_{\mathrm{TV}}$ denotes total variation distance. Moreover, Pinsker’s inequality gives $\|p-\hat p\|_{\mathrm{TV}}\le \sqrt{\tfrac{1}{2}D_{\mathrm{KL}}(p\|\hat p)}$.
\end{lemma}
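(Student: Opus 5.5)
The claim has two parts, and I will prove them separately: first the oscillation–total-variation bound, then cite Pinsker. I fix the convention $\|p-\hat p\|_{\mathrm{TV}}=\sup_{A}|p(A)-\hat p(A)|=\tfrac12\sum_x|p(x)-\hat p(x)|$. This is the normalization under which Pinsker reads $\|p-\hat p\|_{\mathrm{TV}}\le\sqrt{\tfrac12 D_{\mathrm{KL}}(p\|\hat p)}$, so it is the one compatible with the statement. In the continuous case, sums are replaced by integrals against a common dominating measure such as $\nu=p+\hat p$, and nothing else changes.

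For the first inequality, write $\mathbb{E}_p[f]-\mathbb{E}_{\hat p}[f]=\sum_x (p(x)-\hat p(x)) f(x)$. Since $\sum_x(p(x)-\hat p(x))=0$, I can subtract any constant $c$ from $f$ without changing this quantity. I take $c=\tfrac12(\sup f+\inf f)$, so that $\|f-c\|_\infty=\tfrac12\mathrm{osc}(f)$. This gives
\[
\big|\mathbb{E}_p[f]-\mathbb{E}_{\hat p}[f]\big|\le \|f-c\|_\infty\sum_x|p(x)-\hat p(x)| = \tfrac12\mathrm{osc}(f)\cdot 2\|p-\hat p\|_{\mathrm{TV}},
\]
which is exactly the stated bound. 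An equivalent route splits the difference measure $p-\hat p$ into its positive and negative parts, each of mass $\|p-\hat p\|_{\mathrm{TV}}$. The positive part integrates $f$ to at most $\sup f$ times its mass, and the negative part to at least $\inf f$ times its mass.

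For the second claim I will simply invoke Pinsker's inequality as a standard result; it is stated as a known fact rather than something to be proved here. If a self-contained argument were wanted, I would proceed in two steps. First, the data-processing inequality reduces the problem to the two-point partition $\{A,A^c\}$ with $A=\{p>\hat p\}$. Second, on that partition I would verify the binary inequality $a\log\tfrac ab+(1-a)\log\tfrac{1-a}{1-b}\ge 2(a-b)^2$. I would do this by fixing $a$, viewing the left side minus the right side as a function of $b$, noting it vanishes at $b=a$, and checking that its derivative in $b$ has the correct sign on either side of $a$.

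The only real obstacle is keeping the total-variation normalization consistent. With the $\ell_1$ convention, the first bound would gain a factor $\tfrac12$ and Pinsker would gain a factor $2$. I will therefore state the convention once at the start and use it throughout; everything else is routine.
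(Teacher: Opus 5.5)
Your proposal is correct and matches the paper's proof: the paper also subtracts the midpoint $\bar f=(\sup f+\inf f)/2$, uses $\sum_s(p(s)-\hat p(s))=0$ and $|f-\bar f|\le\tfrac12\mathrm{osc}(f)$ to obtain $\tfrac12\mathrm{osc}(f)\sum_s|p(s)-\hat p(s)|=\mathrm{osc}(f)\|p-\hat p\|_{\mathrm{TV}}$, and simply cites Pinsker for the second claim. Fixing the half-$\ell_1$ convention for TV explicitly, which the paper leaves implicit, is a useful clarification, and your optional two-point sketch of Pinsker (data processing to $\{A,A^c\}$, then the sign of the derivative in $b$) is sound.
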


\begin{proof}
For any bounded $f$, write $\bar f = (\sup f + \inf f)/2$ and note that $|f(s)-\bar f|\le \tfrac{1}{2}\mathrm{osc}(f)$ for all $s$. Then
\begin{align}
\big|\mathbb{E}_p[f]-\mathbb{E}_{\hat p}[f]\big|
&= \Big|\sum_s (p(s)-\hat p(s))(f(s)-\bar f)\Big| \\
&\le \tfrac{1}{2}\mathrm{osc}(f)\sum_s |p(s)-\hat p(s)| \\
&= \mathrm{osc}(f)\,\|p-\hat p\|_{\mathrm{TV}}.
\end{align}
This proves the first inequality. The second inequality is the classical Pinsker bound: $\|p-\hat p\|_{\mathrm{TV}} \le \sqrt{\tfrac{1}{2}D_{\mathrm{KL}}(p\|\hat p)}$.
\end{proof}

\begin{proposition}[Robust value drop bound]
\label{prop:value-drop}
For any stationary policy $\pi$ and any state $s$,
\begin{equation}
0 \;\le\; V^\pi_{\hat p}(s) - V^\pi_{\mathcal{P}_\epsilon}(s) \;\le\; \frac{\gamma}{1-\gamma}\,\mathrm{osc}(V^\pi)\,\sqrt{2\epsilon}.
\end{equation}
In particular, since $r\in[0,1]$ implies $\mathrm{osc}(V^\pi)\le \tfrac{1}{1-\gamma}$, we obtain the uniform bound
\begin{equation}
V^\pi_{\hat p}(s) - V^\pi_{\mathcal{P}_\epsilon}(s) \;\le\; \frac{\gamma}{(1-\gamma)^2}\,\sqrt{2\epsilon}.
\end{equation}
\end{proposition}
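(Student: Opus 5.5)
The plan is to compare the two fixed points directly through their Bellman equations and run a one-step error recursion. Set $\Delta(s):=V^\pi_{\hat p}(s)-V^\pi_{\mathcal{P}_\epsilon}(s)$.

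\textbf{Lower bound.} Since $\hat p(\cdot|s,a)\in\mathcal{P}_\epsilon(s,a)$ (KL zero), the inner infimum in $\mathcal{T}^\pi_{\mathcal{P}_\epsilon}$ is at most the nominal expectation. Hence $\mathcal{T}^\pi_{\mathcal{P}_\epsilon}V\le\mathcal{T}^\pi_{\hat p}V$ pointwise for every bounded $V$. Iterating from any $V_0$, and using monotonicity (Lemma~\ref{lem:monotonicity}) together with the contraction property (Proposition~\ref{prop:contraction}, Corollary~\ref{cor:fixed-point}), both iterations converge and the inequality is preserved in the limit. This gives $V^\pi_{\mathcal{P}_\epsilon}\le V^\pi_{\hat p}$, i.e., $\Delta\ge 0$.

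\textbf{Upper bound.} Subtract the two Bellman equations at state $s$:
\[
\Delta(s)=\gamma\,\mathbb{E}_{a\sim\pi}\Big[\mathbb{E}_{\hat p}V^\pi_{\hat p}-\inf_{p\in\mathcal{P}_\epsilon(s,a)}\mathbb{E}_pV^\pi_{\mathcal{P}_\epsilon}\Big].
\]
Take $p^{\mathrm{wc}}_\pi(\cdot|s,a)$ attaining the infimum, which exists by Proposition~\ref{prop:worst-case}. Then add and subtract $\mathbb{E}_{\hat p}V^\pi_{\mathcal{P}_\epsilon}$ to write the bracket as
\[
\mathbb{E}_{\hat p}[\Delta]+\big(\mathbb{E}_{\hat p}V^\pi_{\mathcal{P}_\epsilon}-\mathbb{E}_{p^{\mathrm{wc}}}V^\pi_{\mathcal{P}_\epsilon}\big).
\]
Bound the first term by $\|\Delta\|_\infty$. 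For the second term, apply Lemma~\ref{lem:one-step} with $f=V^\pi_{\mathcal{P}_\epsilon}$ and Pinsker, using $D_{\mathrm{KL}}(p^{\mathrm{wc}}\|\hat p)\le\epsilon$. This yields the bound $\mathrm{osc}(V^\pi)\sqrt{\epsilon/2}$, and $\sqrt{\epsilon/2}\le\sqrt{2\epsilon}$ gives the stated constant with slack. Therefore
\[
\|\Delta\|_\infty\le\gamma\|\Delta\|_\infty+\gamma\,\mathrm{osc}(V^\pi)\sqrt{2\epsilon}.
\]
Rearranging gives $\|\Delta\|_\infty\le\frac{\gamma}{1-\gamma}\mathrm{osc}(V^\pi)\sqrt{2\epsilon}$. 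A cleaner alternative is to unroll the recursion along the nominal chain, which produces the geometric sum $\sum_t\gamma^{t+1}$ directly.

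\textbf{Main obstacle.} The delicate step is deciding which $V^\pi$ the oscillation refers to. The argument naturally produces $\mathrm{osc}(V^\pi_{\mathcal{P}_\epsilon})$. If the statement intends the nominal value instead, I would split the other way, adding and subtracting $\mathbb{E}_{p^{\mathrm{wc}}}V^\pi_{\hat p}$. The residual term then becomes $\mathbb{E}_{p^{\mathrm{wc}}}[\Delta]\le\|\Delta\|_\infty$, and the deviation term involves $\mathrm{osc}(V^\pi_{\hat p})$. Either way the proposition holds with $V^\pi$ read as the corresponding value.

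\textbf{Uniform bound.} Since $r\in[0,1]$, both values lie in $[0,\frac{1}{1-\gamma}]$, so $\mathrm{osc}(V^\pi)\le\frac{1}{1-\gamma}$, which gives $\frac{\gamma}{(1-\gamma)^2}\sqrt{2\epsilon}$.

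In continuous state spaces the sum in Lemma~\ref{lem:one-step} is replaced by an integral against $|p-\hat p|$, and nothing else changes.
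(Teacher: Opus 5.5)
Your proposal is correct and follows the same route as the paper: both use the one-step oscillation/Pinsker bound (Lemma~\ref{lem:one-step}) to compare $\mathcal{T}^\pi_{\mathcal{P}}$ with the nominal operator, then subtract the two fixed-point equations and absorb the $\gamma\|\Delta\|_\infty$ term by contraction, with the same $\sqrt{\epsilon/2}\le\sqrt{2\epsilon}$ slack. Your explicit monotonicity argument for $\Delta\ge 0$ and your point that the oscillation attaches to $V^\pi_{\mathcal{P}_\epsilon}$ (or to $V^\pi_{\hat p}$ under the other split) fill in details the paper leaves implicit; attainment of the infimum is not actually needed, since the paper bounds $\mathbb{E}_p[V]$ uniformly over every $p$ in the ball.
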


\begin{proof}
Fix $\pi$ and $(s,a)$. For any $p\in\mathcal{P}_\epsilon(s,a)$, Lemma~\ref{lem:one-step} gives
\begin{equation}
\Big|\mathbb{E}_p[V^\pi] - \mathbb{E}_{\hat p}[V^\pi]\Big|
\;\le\;\mathrm{osc}(V^\pi)\,\|p-\hat p\|_{\mathrm{TV}}
\;\le\;\mathrm{osc}(V^\pi)\sqrt{2\epsilon},
\end{equation}
since $D_{\mathrm{KL}}(p\|\hat p)\le \epsilon$ implies $\|p-\hat p\|_{\mathrm{TV}}\le\sqrt{\tfrac{1}{2}\epsilon}\le\sqrt{2\epsilon}$. Therefore,
\begin{equation}
\inf_{p\in\mathcal{P}_\epsilon(s,a)}\mathbb{E}_p[V^\pi]\;\ge\;\mathbb{E}_{\hat p}[V^\pi] - \mathrm{osc}(V^\pi)\sqrt{2\epsilon}.
\end{equation}

Now apply this bound to the robust Bellman operator:
\begin{align}
(\mathcal{T}_{\mathcal{P}}^\pi V^\pi)(s)
&= \mathbb{E}_{a\sim\pi(\cdot|s)}\Big[r(s,a)+\gamma\inf_{p\in\mathcal{P}_\epsilon(s,a)}\mathbb{E}_p[V^\pi]\Big] \\
&\ge \mathbb{E}_{a\sim\pi(\cdot|s)}\Big[r(s,a)+\gamma\mathbb{E}_{\hat p}[V^\pi]-\gamma\,\mathrm{osc}(V^\pi)\sqrt{2\epsilon}\Big] \\
&= (\mathcal{T}_{\hat p}^\pi V^\pi)(s) - \gamma\,\mathrm{osc}(V^\pi)\sqrt{2\epsilon}.
\end{align}

Subtracting the fixed-point equations $V^\pi_{\mathcal{P}_\epsilon}=\mathcal{T}_{\mathcal{P}}^\pi V^\pi_{\mathcal{P}_\epsilon}$ and $V^\pi_{\hat p}=\mathcal{T}_{\hat p}^\pi V^\pi_{\hat p}$, and applying the contraction property (Proposition~\ref{prop:contraction}), yields
\begin{equation}
0 \;\le\; V^\pi_{\hat p}(s) - V^\pi_{\mathcal{P}_\epsilon}(s)\;\le\;\frac{\gamma}{1-\gamma}\,\mathrm{osc}(V^\pi)\sqrt{2\epsilon}.
\end{equation}
Finally, if $r\in[0,1]$, then $V^\pi\in[0,\tfrac{1}{1-\gamma}]$ so $\mathrm{osc}(V^\pi)\le \tfrac{1}{1-\gamma}$. Substituting gives the uniform bound
\begin{equation}
V^\pi_{\hat p}(s) - V^\pi_{\mathcal{P}_\epsilon}(s)\;\le\;\frac{\gamma}{(1-\gamma)^2}\sqrt{2\epsilon}.
\end{equation}
\end{proof}

\paragraph{Discussion.}
The bound shows that the gap between the nominal value and the robust value scales as $O(\sqrt{\epsilon})$. This justifies interpreting $\epsilon$ as a robustness–performance trade-off knob: larger $\epsilon$ yields stronger robustness at the expense of a potentially larger drop in nominal performance. The explicit scaling with $\gamma$ also highlights the compounding effect of distributional shift over long horizons.


\section{Algorithmic Details}
\label{app:algo-app}

\begin{algorithm}[t]
\caption{\textbf{RAPO} (main training loop)}
\label{alg:rapo-full}
\textbf{Input:} horizon $U$, rollout length $T$, ensemble size $K$, batch size $B$, samples $m$, discount $\gamma$, KL budgets $(\epsilon,\kappa)$.\\
\textbf{Output:} trained actor $\theta^\star$, critic $\phi^\star$, AdvNet $\psi^\star$.
\begin{algorithmic}[1]
\STATE Initialize actor–critic $(\theta^0,\phi^0)$, AdvNet $\psi^0$, prior $w_0\in\Delta_K$, weights $w\leftarrow w_0$.
\STATE Build $K$ ensemble models by scaling dynamics; obtain vectorized step function $\mathrm{step\_allK}$.
\FOR{update $u=0,\dots,U-1$}
  \STATE \textit{// Rollout under nominal environment}
  \FOR{$t=0,\dots,T-1$}
    \STATE Sample action $a_t\sim\pi_\theta(\cdot|s_t)$; step env to get $(s_{t+1},r_t,d_t)$.
    \STATE Store transition $(s_t,a_t,r_t,d_t,\log\pi_\theta(a_t|s_t),V_\phi(s_t))$.
  \ENDFOR
  \STATE \textit{// Robust targets from ensemble and dual expectation}
  \STATE Flatten rollout to $(s,a,r,d)$; sample $m$ next-states with mixture $w$ using $\mathrm{step\_allK}$.
  \STATE Evaluate critic: $V_n \in \mathbb{R}^{B\times m}$; predict temperatures $\eta_t\gets g_\psi(s,a\ \mathrm{or}\ 0)$.
  \STATE Project to KL budget: $\eta^\star\gets\mathrm{project\_eta\_to\_delta}(V_n,\eta_t,\epsilon)$; apply straight-through $\tilde\eta$.
  \STATE Robust dual expectation: $v_{\mathrm{rob}}\gets -\tfrac{1}{\tilde\eta}\log\frac{1}{m}\sum_j e^{-\tilde\eta V_n^{(j)}}$.
  \STATE Targets: $y\gets r+\gamma(1-d)v_{\mathrm{rob}}$; compute robust GAE $(\mathrm{Adv},R)$.
  \STATE \textit{// Update AdvNet (Alg.~\ref{alg:advnet-full})}
  \STATE $\psi\gets\mathrm{AdvNetUpdate}(V_n,s,a,r,(1-d),\psi)$.
  \STATE \textit{// PPO update of actor–critic}
  \FOR{$e=1,\dots,\mathrm{UPDATE\_EPOCHS}$}
    \STATE Shuffle into $\mathrm{NUM\_MINIBATCHES}$; update $(\theta,\phi)$ by clipped PPO loss with $(R,\mathrm{Adv})$.
  \ENDFOR
  \STATE \textit{// Mixture reweighting (Alg.~\ref{alg:boltzmann-full})}
  \STATE $w\gets\mathrm{BoltzmannReweight}(w_0,V_\phi,\mathrm{step\_allK},s,a,\kappa)$.
  \STATE Log diagnostics ($w$ entropy/KL, $\eta$ stats, PPO losses).
\ENDFOR
\STATE \textbf{return} $(\theta,\phi,\psi)$
\end{algorithmic}
\end{algorithm}


\begin{algorithm}[t]
\caption{\textbf{AdvNet training} (amortized projection to KL budget)}
\label{alg:advnet-full}
\textbf{Input:} critic $\phi$; AdvNet $\psi$; batch $(s,a,r,d)$; ensemble values $V_n \in \mathbb{R}^{B\times m}$; KL budget $\epsilon$; penalty $\lambda_{\mathrm{KL}}$.\\
\textbf{Output:} updated AdvNet parameters $\psi$.
\begin{algorithmic}[1]
\STATE Predict dual temperatures: $\eta_t \gets g_\psi(s,a\ \mathrm{or}\ 0)$.
\STATE Project to feasible budget: $\eta^\star \gets \mathrm{project\_eta\_to\_delta}(V_n,\eta_t,\epsilon)$ (bisection).
\STATE Straight-through estimator: $\tilde\eta \gets \mathrm{stopgrad}(\eta^\star)+\eta_t-\mathrm{stopgrad}(\eta_t)$.
\STATE Compute robust dual expectation: 
  \[
  v_{\mathrm{rob}} \gets -\tfrac{1}{\tilde\eta}\log\frac{1}{m}\sum_{j=1}^m \exp(-\tilde\eta V_n^{(j)}).
  \]
\STATE Targets: $y \gets r + \gamma(1-d)v_{\mathrm{rob}}$.
\STATE Empirical KL: with weights $w_j \propto \exp(-\tilde\eta V_n^{(j)})$, compute $\widehat{D}_{\mathrm{KL}}(\tilde\eta)=\sum_j w_j(\log w_j+\log m)$.
\STATE Loss: 
  \[
  \mathcal{L}_{\mathrm{AdvNet}}(\psi) \gets \mathbb{E}[y] + \lambda_{\mathrm{KL}}\,\mathbb{E}\!\left[(\widehat{D}_{\mathrm{KL}}(\tilde\eta)-\epsilon)_+^2\right].
  \]
\STATE Update $\psi$ by a few optimizer steps (e.g.\ Adam) on $\nabla_\psi \mathcal{L}_{\mathrm{AdvNet}}$; record $\eta$ mean, KL mean, $v_{\mathrm{rob}}$ mean.
\STATE \textbf{return} $\psi$
\end{algorithmic}
\end{algorithm}


\begin{algorithm}[t]
\caption{\textbf{Boltzmann reweighting} (mixture update under KL radius $\kappa$)}
\label{alg:boltzmann-full}
\textbf{Input:} prior weights $w_0\in\Delta_K$; critic $\phi$; ensemble step function $\mathrm{step\_allK}$; batch $(s,a)$; KL radius $\kappa$; subsample size $S$.\\
\textbf{Output:} updated mixture weights $w\in\Delta_K$.
\begin{algorithmic}[1]
\STATE Subsample $S$ pairs $(s,a)$ from rollout.
\STATE Roll out all $K$ ensemble models: $S' \gets \mathrm{step\_allK}(s_{1:S},a_{1:S}) \in \mathbb{R}^{S\times K\times d}$.
\STATE Score each model: $H_k \gets \tfrac{1}{S}\sum_{i=1}^S V_\phi(S'_{i,k})$, for $k=1,\dots,K$.
\STATE Define tilted weights: $w(\beta) \propto w_0 \odot \exp(-\beta H)$; normalize to $\Delta_K$.
\STATE Perform bisection search on $\beta \ge 0$: adjust $\beta$ until $D_{\mathrm{KL}}(w(\beta)\|w_0)\approx\kappa$ (or largest $\beta$ with $D_{\mathrm{KL}}\le\kappa$).
\STATE Repair numerics: $w\gets\mathrm{proj\_simplex}(w(\beta^\star))$.
\STATE \textbf{return} $w$
\end{algorithmic}
\end{algorithm}


\end{document}